\def\figref#1{figure~\ref{#1}}
\def\Figref#1{Figure~\ref{#1}}
\def\secref#1{section~\ref{#1}}
\def\eqref#1{equation~\ref{#1}}
\def\Eqref#1{Equation~\ref{#1}}
\def\1{\bm{1}}
\def\rx{{\textnormal{x}}}
\def\ry{{\textnormal{y}}}
\def\rvx{{\mathbf{x}}}
\def\rvz{{\mathbf{z}}}
\def\vtheta{{\bm{\theta}}}
\def\ve{{\bm{e}}}
\def\vh{{\bm{h}}}
\def\vs{{\bm{s}}}
\def\vv{{\bm{v}}}
\def\vw{{\bm{w}}}
\def\vx{{\bm{x}}}
\def\vy{{\bm{y}}}
\def\vz{{\bm{z}}}
\def\evh{{h}}
\def\evy{{y}}
\def\mH{{\bm{H}}}
\def\mK{{\bm{K}}}
\def\mL{{\bm{L}}}
\def\mS{{\bm{S}}}
\def\mW{{\bm{W}}}
\def\mX{{\bm{X}}}
\def\mY{{\bm{Y}}}
\def\mZ{{\bm{Z}}}
\DeclareMathAlphabet{\mathsfit}{\encodingdefault}{\sfdefault}{m}{sl}
\SetMathAlphabet{\mathsfit}{bold}{\encodingdefault}{\sfdefault}{bx}{n}
\def\gC{{\mathcal{C}}}
\def\gH{{\mathcal{H}}}
\def\gX{{\mathcal{X}}}
\def\gY{{\mathcal{Y}}}
\def\emH{{H}}
\def\emK{{K}}
\def\emW{{W}}
\newcommand{\E}{\mathbb{E}}
\newcommand{\Ls}{\mathcal{L}}
\newcommand{\R}{\mathbb{R}}
\DeclareMathOperator{\Tr}{Tr}
\newtheorem{theorem}{Theorem}
\newtheorem{lemma}{Lemma}
\newtheorem{dfn}{Definition}
\newtheorem{prop}{Proposition}
\title{End-to-End Training Induces Information Bottleneck through Layer-Role Differentiation: A Comparative Analysis with Layer-wise Training}
\author{\name Keitaro Sakamoto \email sakakei-1999@g.ecc.u-tokyo.ac.jp \\
    \addr The University of Tokyo
    \AND
    \name Issei Sato \email sato@g.ecc.u-tokyo.ac.jp \\
    \addr The University of Tokyo \\
}
\begin{document}

\maketitle

\begin{abstract}
End-to-end (E2E) training, optimizing the entire model through error backpropagation, fundamentally supports the advancements of deep learning.  
Despite its high performance, E2E training faces the problems of memory consumption, parallel computing, and discrepancy with the functionalities of the actual brain.
Various alternative methods have been proposed to overcome these difficulties; however, no one can yet match the performance of E2E training, thereby falling short in practicality.
Furthermore, there is no deep understanding regarding differences in the trained model properties beyond the performance gap.

In this paper, we reconsider why E2E training demonstrates a superior performance through a comparison with layer-wise training, which shares fundamental learning principles and architectures with E2E training, with the granularity of loss evaluation being the only difference. 
On the basis of the observation that E2E training has an advantage in propagating input information, we analyze the information plane dynamics of intermediate representations based on the Hilbert-Schmidt independence criterion (HSIC).
The results of our normalized HSIC value analysis reveal the E2E training ability to exhibit different information dynamics across layers, in addition to efficient information propagation.
Furthermore, we show that this layer-role differentiation leads to the final representation following the information bottleneck principle.
Our work not only provides the advantages of E2E training in terms of information propagation and the information bottleneck but also suggests the need to consider the cooperative interactions between layers, not just the final layer when analyzing the information bottleneck of deep learning.


\end{abstract}

\section{Introduction}
Backpropagation, which has been fundamentally supporting the recent advancements in machine learning, faces challenges in computational problems related to memory usage and parallel computing. 
Moreover, from a biological plausibility perspective, discrepancies with actual brain functions have been pointed out \citep{crick1989recent, baldi2017learning}.
To mitigate these concerns, various learning algorithms and architectures have been proposed; however, 
there is still a performance gap between these alternatives and E2E training, which prevents practical use.
A deeper understanding of the relationships between the E2E training method and non-E2E methods is significant, as it helps us set a direction for future research on learning methods without backpropagation.
This will also lead to a reevaluation of the advantages of E2E training. 
Although the recent success of large language models (LLMs) such as GPT-4 is remarkable, the human brain has connections on the order of $100$ trillion, whereas GPT-4 has roughly $1$ trillion parameters \citep{stiles2010basics, openai2023gpt4}.
This parameter efficiency prompts us to reconsider the biological plausibility in machine learning technologies, turning attention to the efficiency of current E2E training via backpropagation \citep{hinton2023youtube}.

To explore the properties of E2E training, we adopt layer-wise training for comparison, which shares the fundamental learning principles and architecture but differs in setting errors locally.
As a first step, we confirm the performance gap between E2E and various layer-wise training methods and demonstrate that the performance of layer-wise training saturates with increasing depth, as illustrated in \figref{fig:linear_probing}.
This correlates with the finding of \citet{wang2020revisiting}, who showed that the greedy optimization in layer-wise training leads to the loss of input information required for classification at the early layers.
This observation, coupled with the fact that E2E training can cooperatively learn each layer through backpropagation, motivates the information-theoretic analysis of the trained model.
Specifically, we analyze the mutual dependence among the model input, class labels, and intermediate representations, following the information bottleneck work \citep{tishby2000information, tishby2015deep}.
We investigate with normalized HSIC \citep{fukumizu2007kernel}, which is used to analyze the neural network in several existing studies, to circumvent the difficulty in estimating mutual information \citep{ma2020hsic, wang2020understanding, wang2023dualhsic}.
Our analysis centered around HSIC dynamics reveals the two following findings:
\vspace{-1mm}
\begin{enumerate}
\item
The E2E-trained model obtains the middle representations with a higher correlation with labels, despite having lower class separability. 
We present a toy example, independently of the HSIC analysis, where layer-wise training causes the degeneration in mutual information. 
These analyses imply that E2E training is advantageous in information propagation.

\item
E2E training compresses middle representations maintaining a high HSIC value in the final layer for relatively deep models.
In contrast, layer-wise training shows uniform compression in every layer or uniform increase with the method to keep input information as in \citet{wang2020revisiting}.
The layer-wise training shows the same HSIC behavior at each layer, whereas E2E training can assign different roles of information compression among layers. 
We refer to this property as \textit{layer-role differentiation}.
The term ``differentiation'' refers to the process through which different layers develop distinct roles or functions and is different from the mathematical operation of differentiation in calculus.
\end{enumerate}
\vspace{-1mm}

\begin{figure}[t]
    \centering
    \includegraphics[width=0.83\linewidth]{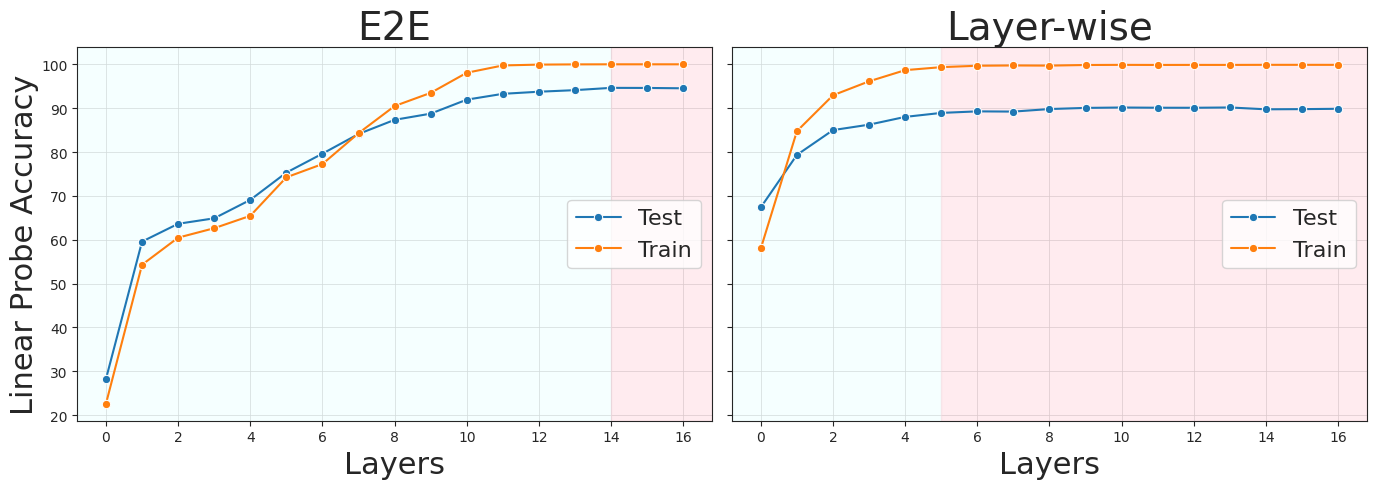}
    \caption{
    Linear probing accuracies of ResNet50 trained on CIFAR10 dataset.
    Linear separability for test data and training data are presented.
    \textbf{Left:} E2E training with cross-entropy loss. 
    \textbf{Right:} Layer-wise training with cross-entropy loss. The model was trained with an auxiliary linear classifier for each block.
    }
    \label{fig:linear_probing}
 \end{figure}

Additionally, our HSIC analysis shows the information bottleneck behavior in the final layer only for E2E training.
It shows the benefit of E2E training in terms of information bottleneck and suggests a connection between the middle layer compression in E2E training brought by layer-role differentiation and the acquisition of information bottleneck representation at the final layer.
Finally, as another advantage of intermediate layer compression, we derive the connection with \citet{frosst2019analyzing}, claiming that the class entanglement in the middle layers leads to a better representation in the final layer. 
Since both E2E and layer-wise training shares the same backbone architecture, note that our analysis is not about the benefits of deepening the layers but is about the advantages of training in the E2E manner.
Our code is available on GitHub \footnote{\url{https://github.com/keitaroskmt/E2E-info}}.

\section{Difference between E2E and Layer-wise Training} \label{sec:difference_e2e_lw}
We first provide a clear definition and problem setting for E2E training and layer-wise training.
Subsequently, layer-wise training methods, which have some variations depending on the type of local loss and architectures, are organized and compared.
We show that while there are differences in performance among these methods, the persisting performance gap remains compared to E2E training. 

\subsection{Problem Setting}
First, let us clarify the definitions of End-to-end (E2E) training and layer-wise training.
E2E training refers to a learning method where the objective function for the entire model is set, and the weights gradient updates are propagated throughout the entire model using backpropagation.
In contrast, layer-wise training is a learning approach to set an objective function for each layer. 
The influence of the local objective function is confined to the specific layer alone, and gradient information does not propagate to the preceding layers.
In layer-wise training, each layer is trained simultaneously, and the model's prediction is made by using only the last output.
We consider the local module size to be one layer; however, in cases where straightforward layer-wise modularization is difficult, such as a skip-connection in the ResNet blocks, we handle the natural minimum module that includes one skip-connection (see \secref{appendix:detail_training} for training details).

Next, we organize the problem settings and define the notations in the paper.
We consider a multi-class classification problem with a dataset $\{x_i, y_i\}_{i=1}^m$ consisting of inputs $x_i \in \gX$ and the class label $y_i \in \gY$.
Specifically, we consider the domain $\gX = \R^{d}, \gY = \{1, \ldots, c\}$, where $c$ is the number of classes, and use $\vx$ to express the vector input explicitly.
The dataset is sampled from the underlying distribution $p_{XY}$, and we denote the input random variables and label random variables as $X$ and $Y$.

We have $L$ layers $f_1, \ldots, f_L$ and stack them to form a single neural network $f$.
The dimensions of the $l$-th layer's representation are denoted as $d_l$, i.e., $l$-th layer is $f_l: \mathbb{R}^{d_{l-1}} \rightarrow \mathbb{R}^{d_{l}}$.
Note that the intermediate representations are allowed to be more than one-dimensional, but they are vectorized on this notation.
The $l$-th layer's representation for input $\vx$ is denoted as $\vz_l$, meaning $\vz_l = f_l \circ \cdots \circ f_1(\vx)$.
In cases where the input is indexed, the $l$-th representation for $\vx_i$ is written as $\vz_{l, i}$.
We simply use $\vz$ to denote the middle representation without specifying the layer explicitly.
To clear the optimized weights, let us denote the weights of each layer $f_l$ by $\vtheta_l$, which are denoted as $\vtheta$ collectively. 
The objective loss for E2E learning is
\begin{align}
\Ls_{E2E}(\vtheta) = \frac{1}{m} \sum_{i=1}^m \ell (f(\vx_i), y_i),
\end{align}
where $\ell$ is the loss function that takes the model outputs and the true label.
The cross-entropy loss is widely used for multi-class classification.
In contrast, the loss to optimize for layer-wise training is
\begin{align}
\Ls_{LW} (\vtheta) = \sum_{l=1}^L \Ls_{l} (\vtheta_l) 
= \sum_{l=1}^L \left( \frac{1}{m} \sum_{i=1}^m \ell_l \left(f_l(\operatorname{StopGrad}(\vz_{l-1, i})), y_i \right) \right),
\end{align}
where $\operatorname{StopGrad}$ is the operator to stop gradient propagation, such as ``Tensor.detach()'' in the PyTorch implementation, which enables layer-wise training.
Here, the loss of the $l$-th layer is denoted as $\ell_l$; however, note that we use the same loss $\ell$ among the layers in this study. 
When calculating this local loss, various layer-wise training methods adopt auxiliary networks for each layer.
While the algorithm that directly optimizes the representation space does not need these networks, in the case of classification loss such as cross-entropy, the loss is evaluated after projecting onto $c$ dimensional space.
We denote the auxiliary network used in the $l$-th layer by $g_l: \mathbb{R}^{d_l} \rightarrow \mathbb{R}^{d_l^\prime}$, which can be a linear layer, multi-layer perceptron (MLP), or identity mapping in the case of no auxiliary networks.
Additionally, the output of the auxiliary networks is denoted as $\vh_1 \cdots, \vh_L$.
The problem settings of E2E training and layer-wise training are summarized in \figref{fig:layer-wise}.

\begin{figure}[t]
    \centering
    \includegraphics[width=0.9\linewidth]{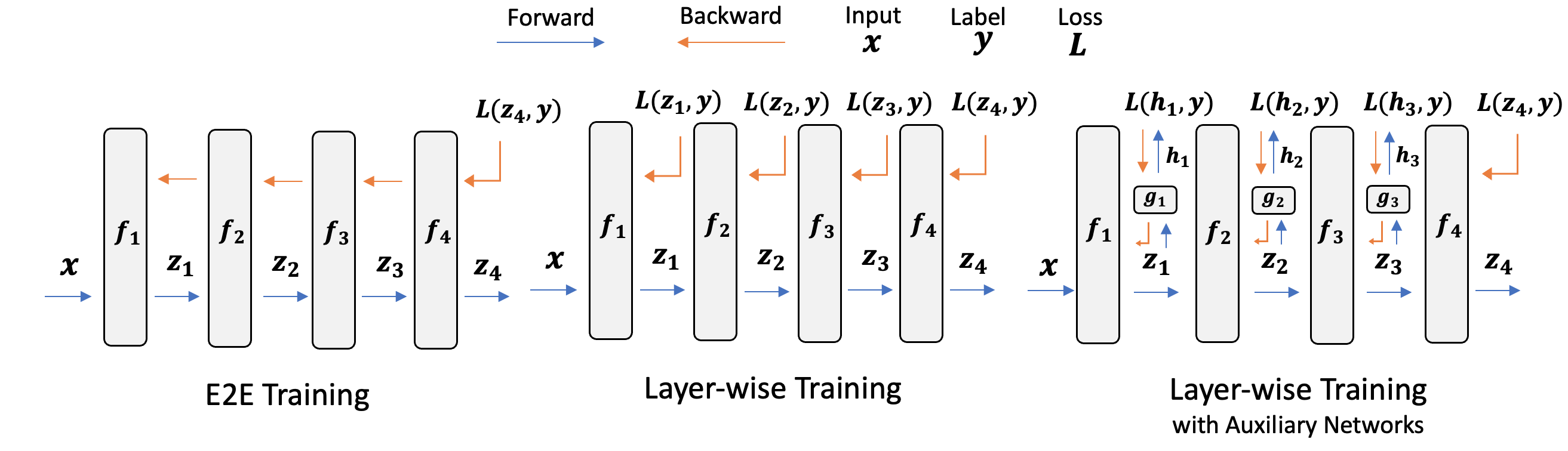}
    \caption{
    Comparison between E2E training and layer-wise training.
    The models have four layers, and each layer is denoted as $f_1, f_2, f_3, f_4$.
    \textbf{Left:} E2E training evaluates the loss only after the last representation $\vz_4$. The gradient updates are back-propagated to the preceding layers.
    \textbf{Middle:} Layer-wise training calculates the loss for each layer's outputs and updates weights.
    \textbf{Right:} Layer-wise training with the auxiliary networks.
    }
    \label{fig:layer-wise}
 \end{figure}

\subsection{Performance Gap} \label{sec:performance_gap}
In this section, we empirically explore the performance gap between E2E training and several methods of layer-wise training.
Layer-wise training has several variations based on the type of locally defined loss function and the auxiliary networks' types and sizes.
We begin by organizing these methods and elucidating their performance differences compared to E2E learning methods.

In the layer-wise training methods, locally used loss falls into two primary types based on the method of providing the label information $y$.
The first optimizes the classification loss at each layer by forwarding input $\rvx$ only.
The second one forwards both the input $\rvx$ and its corresponding input, called the prototype, representing the class label information of $\rvx$, and optimizes losses based on spatial relationships in each layer's representation.
For the former case, we use cross-entropy loss (CE), while for the latter, supervised contrastive loss (SupCon) \citep{khosla2020supervised} and similarity matching loss (Sim) \citep{nokland2019training} are adopted for the case where random images are used as prototypes, and signal propagation (SP) \citep{kohan2023signal} is used for the case where the class itself is projected onto the input space and used as a prototype. 
Please refer to \secref{appendix:layer-wise_methods} for the details of these methods.

The comparison of these methods' accuracies for CIFAR10 is presented in Table \ref{table:layer-wise_cifar10}, and for the CIFAR100 dataset, they are detailed in Table \ref{table:layer-wise_cifar100} in the appendix. 
We used three networks with different sizes: VGG11, ResNet18, and ResNet50. 
Here, please note that the layer-wise training without auxiliary networks corresponds to the row of the $0$-layer.
The experimental details are also deferred to \secref{appendix:experiment_setup}.
These results indicate two significant observations.
First, the performance of layer-wise training was worse when directly optimizing middle layers without auxiliary networks, and was greatly enhanced with the existence or the size of them.
However, there remains an accuracy gap compared to E2E training across all loss functions.
When the network had already achieved a relatively high accuracy with an auxiliary network with one layer, adding a second layer did not yield significant improvements, and the model was not competitive with the one trained in the E2E way.
Second, in E2E training, there was a substantial improvement in accuracy by increasing the depth of networks in the order of VGG11, ResNet18, and ResNet50.
Conversely, the amount of accuracy improvement was limited in the case of layer-wise training.
As in the size of auxiliary networks, increasing the number of layers in the backbone model did not lead to a significant accuracy improvement.

\begin{table*}[t]
\centering
\caption{
Test accuracies of E2E training and layer-wise training methods trained on CIFAR10 dataset. 
The ``Auxiliary'' column shows the size of auxiliary networks $g$ for layer-wise training.
For clarity, the highest accuracy among the different sizes of auxiliary networks is indicated in bold. 
For CE loss, the middle representations are converted to space of class numbers' dimensions; therefore, it does not have $0$-layer case.}
\label{table:layer-wise_cifar10}
\begin{tabular}{ccccccccc}
\toprule
\multicolumn{1}{c}{}                          &                            & \multicolumn{2}{c}{E2E} & \multicolumn{5}{c}{Layer-wise}              \\
\cmidrule(rl){3-4} \cmidrule(rl){5-9} 
\multicolumn{1}{c}{Model}                          &                   Auxiliary         & CE & SupCon & CE & SupCon                 & Sim &  SP (Hard) & SP (Soft)       \\ \midrule
\multicolumn{1}{c}{\multirow{3}{*}{VGG11}}    & \multicolumn{1}{c}{0-layer} &  \multirow{3}{*}{$91.5$}&  \multirow{3}{*}{$90.4$}&  {---}   &  $60.6$&  $72.4$& $82.4$ &  $81.5$  \\
\multicolumn{1}{c}{}                          & \multicolumn{1}{c}{1-layer} &  &  & $\mathbf{89.3}$&  $83.7$& $\mathbf{89.2}$ & {---} & {---}   \\
\multicolumn{1}{c}{}                          & \multicolumn{1}{c}{2-layer} &  &  & $88.4$             & $\mathbf{89.1}$& $89.0$ & {---} & {---}   \\ \midrule
\multicolumn{1}{c}{\multirow{3}{*}{ResNet18}} & \multicolumn{1}{c}{0-layer} &  \multirow{3}{*}{$93.9$}   &\multirow{3}{*}{$95.7$}&   {---}                        &  $74.0$& $87.8$ & $58.0$ & $45.9$ \\
\multicolumn{1}{c}{}                          & \multicolumn{1}{c}{1-layer} &  &  & $89.3$            &  $88.1$& $89.4$ & {---}  & {---} \\
\multicolumn{1}{c}{}                          & \multicolumn{1}{c}{2-layer} &  &  & $\mathbf{89.9}$ & $\mathbf{92.5}$& $\mathbf{91.6}$ & {---} & {---} \\ \midrule
\multicolumn{1}{c}{\multirow{3}{*}{ResNet50}} & \multicolumn{1}{c}{0-layer} & \multirow{3}{*}{$94.7$}     &  \multirow{3}{*}{$96.3$}&   {---}                     & $73.2$& $84.7$ & $51.7$ & $19.5$ \\ 
\multicolumn{1}{c}{}                          & \multicolumn{1}{c}{1-layer} &  &  & $89.9$            & $88.7$& $91.2$ & {---} & {---} \\ 
\multicolumn{1}{c}{}                          & \multicolumn{1}{c}{2-layer} &  &  & $\mathbf{90.4}$           & $\mathbf{92.0}$& $\mathbf{91.3}$ & {---} & {---} \\ \bottomrule 
\end{tabular}
\end{table*}

\subsection{Linear Separability} \label{sec:linear_separability}
Then, what properties of the trained models contribute to such performance differences?
A key distinction between E2E and layer-wise training lies in how the behavior of the intermediate layers is specified.
In E2E training, this behavior is indirectly specified, whereas, in layer-wise training, it is directly controlled through the optimization of local losses.
For instance, when optimizing classification errors at each layer using linear auxiliary classifiers, the training aims to enhance the linear separability of intermediate representations.
Contrasting this with the E2E training provides insight into the property differences.
\Figref{fig:linear_probing} compares the linear separability measured through linear probing \citep{alain2016understanding} at each layer.
In the E2E case, the linear separability gradually improves, whereas, in layer-wise training, it abruptly improves in the early layers and then saturates at the same level as depicted in the red region. 
Both approaches ultimately achieve perfect separation on the training data, but E2E demonstrates better linear separability on the test data.
Interestingly, in layer-wise training, stacking layers does not improve the separability of the final layer.
This is consistent with previous experiments that show that the increase in the model size did not lead to as much accuracy improvement as observed in the E2E training.
This holds true for similarity-based local loss functions, and the results for the SupCon case are presented in \secref{appendix:linear_probing_supcon}.

\begin{wraptable}{r}{0.33\textwidth}
\centering
\caption{Retrain results of ResNet50, CIFAR10. The same trained models as \figref{fig:linear_probing} are used.}
\begin{tabular}{cccc}
\toprule
\multicolumn{1}{l}{}           & \multicolumn{1}{c}{E2E} & \multicolumn{1}{c}{LW} & \multicolumn{1}{c}{Retrain} \\
\midrule
\multicolumn{1}{c}{Test acc}        & \multicolumn{1}{c}{$94.8$}  & \multicolumn{1}{c}{$89.9$} & \multicolumn{1}{c}{$90.8$}    \\
\bottomrule
\end{tabular}
\label{table:retrain_cifar10}
\end{wraptable}

We further investigated why the layer-wise training does not benefit from stacking layers.
The layer-wise trained model was retrained in the E2E manner after the layer where the linear separability starts to saturate, i.e., the border between blue and red regions in \figref{fig:linear_probing}.
Our aim here was to observe test accuracy improvement, which would potentially indicate whether valuable input information remains in these intermediate layers that layer-wise training does not fully utilize.
As indicated in Table \ref{table:retrain_cifar10}, the outcomes of this experiment under the same setting as \figref{fig:linear_probing} demonstrated only marginal accuracy improvement through E2E retraining.
This suggests that useful input information for classifying is lost even in the early layers due to localized optimization, reconfirming the information collapse hypothesis suggested by \citep{wang2020revisiting}.  
In the following sections, we explore the information propagation within the trained model, elucidating the role of E2E training.

\section{Information Bottleneck for Analyzing Deep Neural Network}
The comparison between E2E and layer-wise training in the previous section motivates us to evaluate the information flow between layers.
We focus on the concept of \textit{information bottleneck} to deal with the change of mutual information within the trained model.
In this section, let us introduce the related concepts and normalized HSIC as an alternative to mutual information.

\subsection{Information Bottleneck}
Information Bottleneck (IB) \citep{tishby2000information} refers to the compression of the input $X$ to obtain the representation $Z$ with the help of the random variable $Y$ behind $X$, in this case, the label information.
It uses mutual information among these random variables to compress $X$ while trying to retrain as much information about $Y$ contained in $X$ as possible during the compression process.
In recent years, IB has been used to analyze the dynamics of deep learning \citep{tishby2015deep, michael2018on}.
IB aims to extract the task-relevant information from $X$ as much as possible, by minimizing 
\begin{align}
I(X; Z) - \beta I(Y; Z),
\end{align}
where $\beta > 0$ is the trade-off parameter and $I$ denotes the mutual information (see \secref{appendix:mi_hsic} for the definition).
The task-irrelevant information in $X$ is squeezed by minimizing this Lagrangian objective, and the random variable $Z$ can be regarded as a compression of $X$ to solve the task.
In this framework, the dynamics of the final representation in deep learning are explained by learning the compressed representation $Z$ that minimizes the IB objective.
From the IB perspective, multilayered neural networks can be viewed as a sequence of data processing steps. 
The input $X$ is generated from the label $Y$ behind $X$ and transformed into the representations $Z_1, Z_2, \ldots$ as it passes through each layer.
This process can be represented by the Markov chain $Y \rightarrow X \rightarrow Z_1 \rightarrow \cdots \rightarrow Z_L$.
From the data processing inequality, 
\begin{align} \label{eq:dpi}
I(Y; X) \geq I(Y; Z_1) \geq \cdots \geq I(Y; Z_L).
\end{align}
Intuitively, the compressed representation $Z$ cannot have more information about the label $Y$ than the original input $X$.
The network aims to transform $X$ without losing much information from the given upper bound $I(Y; X)$, extracting useful features from $X$ and enhancing the class separability in the final layer. 
Moreover, considering the reverse data processing inequality, it holds that 
\begin{align} \label{eq:reverse_dpi}
I(Y; Z) \leq I(X; Z)
\end{align}
for any middle representations $Z$. 
The IB hypothesis states that in the final layer, there is an initial phase where $I(Y; Z)$ increases along with $I(X; Z)$, followed by a phase that reduces $I(X; Z)$ to compress task-irrelevant information.
This implies that while increasing the lower bound in \eqref{eq:reverse_dpi} raises $I(X; Z)$, the network training subsequently reduces $I(X; Z)$ to minimize the gap with the lower bound.

Analyzing mutual information within the deep neural network is an intriguing research area.
However, there are two major challenges: 1) the potential for $I(X; Z)$ to become infinite, and 2) the difficulty of estimation due to the high dimensionality.
For more details, please refer to \secref{appendix:difficulty_mi} in the appendix.

\subsection{HSIC as an alternative to Mutual Information}
Considering the difficulty in estimating mutual information, we use the Hilbert-Schmidt independence criterion (HSIC) \citep{gretton2005measuring} because it also allows the capture of non-linear relationships of random variables in a non-parametric way.
The use of HSIC is supported by the fact that it has a relationship with a variant of mutual information and the previous studies (see related work, \secref{sec:related_work}).

The HSIC is a statistical measure of the dependence between two random variables.
Suppose we have two random variables $X, Y$ on probability spaces $(\gX, P_{X}), (\gY, P_{Y})$, and their corresponding reproducing kernel Hilbert spaces (RKHSs) are $\gH_{\kappa_1}, \gH_{\kappa_2}$ with measurable positive definite kernels $\kappa_1: \gX \times \gX \rightarrow \R, \kappa_2: \gY \times \gY \rightarrow \R$.
The mean $\mu_{X}$ is an element of $\gH_{\kappa_1}$ such that $\langle \mu_{X}, f \rangle = \E_{X}[f(X)]$ for all $f \in \gH_{\kappa_1}$.
Let $\mu_{Y}, \mu_{X Y}$ denote the mean element on $\gH_{\kappa_2}, \gH_{\kappa_1} \otimes \gH_{\kappa_2}$ respectively.  
The cross-covariance operator $\gC_{X Y}: \gH_{\kappa_2} \rightarrow \gH_{\kappa_1}$ is defined by $\gC_{\gX \gY} \coloneqq \mu_{X Y} - \mu_{X} \mu_{Y}$.
Note that $\mu_{X Y} - \mu_{X} \mu_{Y}$ is an element of $\gH_{\kappa_1} \otimes \gH_{\kappa_2}$, and it can be regarded as a linear operator from $\gH_{\kappa_2}$ to $\gH_{\kappa_1}$, defined as $\langle f, \gC_{X Y} \: g \rangle_{\gH_{\kappa_1}} = \langle \mu_{X Y} - \mu_{X} \mu_{Y}, fg \rangle_{\gH_{\kappa_1} \otimes \gH_{\kappa_2}}$, for any $f \in \gH_{\kappa_1}, g \in \gH_{\kappa_2}$.

The Hilbert Schmidt norm of the operator $\gC: \gH_{\kappa_2} \rightarrow \gH_{\kappa_1}$ is defined as $\|\gC\|_{HS} = \sqrt{\sum_{i, j} \langle \phi_i, \gC \psi_j \rangle_{\gH_{\kappa_1}}}$, where $\{\phi_i\}$ and $\{\psi_i\}$ are the orthogonal basis of $\gH_{\kappa_1}$ and $\gH_{\kappa_2}$, respectively.
HSIC is defined by the Hilbert Schmidt norm of the cross-covariance operator,
\begin{align}
\operatorname{HSIC}(X, Y) \coloneqq \|\gC_{X Y}\|_{HS}^2.
\end{align}
Since this equals to $\|\mu_{X Y} - \mu_{X}\mu_{Y}\|_{\gH_{\kappa_1} \otimes \gH_{\kappa_2}}^2$, the value of the HSIC can be computed specifically as follows: 
\begin{align}
\label{eq:hsic}
\operatorname{HSIC}(X, Y) &= \E_{XYX'Y'}[\kappa_1(X, X')\kappa_2(Y,Y')] - 2\E_{XY}[\E_{X'}[\kappa_1(X, X') | X] \E_{Y'}[\kappa_2(Y, Y') | Y]] \nonumber \\
&\qquad + \E_{XX'}[\kappa_1(X, X')]\E_{YY'}[\kappa_2(Y, Y')],
\end{align}
where $(X',Y')$ is independent from $(X, Y)$ and follows the identical distribution.

Given finite samples $\{(x_1, y_1), \ldots, (x_m, y_m)\}$ following the joint distribution $P_{X Y}$, one can give an empirical estimator of HSIC as $\Tr (\mK_1 \mH \mK_2 \mH) / (m-1)^2$, where $\mK_1, \mK_2, \mH \in \R^{m\times m}, \emK_{1ij} = \kappa_1(x_i, x_j), \emK_{2ij} = \kappa_2(y_i, y_j), \emH_{ij} = \delta_{ij} - 1/m$.
In this work, we use the normalized HSIC (nHSIC), which is a normalized version of the HSIC.
It equals the chi-squared divergence between $P_{X Y}$ and $P_{X} P_{Y}$, which is called the chi-squared mutual information (see \secref{appendix:hsic_chi2}).
The nHSIC is defined by the Hilbert Schmidt norm of the normalized cross-covariance operator, $\operatorname{nHSIC}(X, Y) \coloneqq \|\gC_{X X}^{-1/2} \gC_{X Y} \gC_{Y Y}^{-1/2}\|_{HS}^2$.
The empirical estimator for nHSIC has consistency \citep{fukumizu2007kernel}; however, the unbiased estimate is not calculated analytically.
Regarding the choice of kernels and the estimate for nHSIC, we followed existing information bottleneck analyses that use nHSIC \citep{ma2020hsic, wang2021revisiting, wang2023dualhsic}. 
RBF kernels were used for the input variables $X$ and the intermediate variables $Z$ to capture non-linear relationships. 
As for the class categorical variable $Y$, a linear kernel was applied because it is a discrete random variable (see \secref{appendix:experiment_setup} for details).

\section{HSIC Dynamics of E2E and Layer-wise Training} \label{sec:hsic_dynamics}

As observed in \secref{sec:difference_e2e_lw}, the input information required to solve the task might be lost in the early layers in the layer-wise training case.
If $I(X; Z)$ is not significantly increased in the initial layers, then according to \eqref{eq:reverse_dpi}, $I(Y; Z)$ does not sufficiently improve either.
Consequently, 
$I(Y; Z)$ presumably fails to become large in the subsequent layers and eventually in the last layer.
This is not desirable behavior in terms of the information flow within the whole model.
Additionally, it is uncertain whether an independent training of each layer induces a compression phase that reduces $I(X; Z)$ while preserving $I(Y; Z)$ in the final layer, which is suggested by the IB hypothesis.

Under these motivations, we aim to analyze the difference in the dynamics of $I(X; Z)$ and $I(Y; Z)$, i.e., the information plane, between E2E training and layer-wise training.
We first experiment on the smaller settings where the difference starts to appear and explain the mechanism behind the observation.
We then move to the same settings as \secref{sec:difference_e2e_lw} in the subsequent sections.
As mentioned in the previous section, while our discussion is centered around the mutual information, we experiment with $\operatorname{nHSIC}(X, Z)$ and $\operatorname{nHSIC}(Y, Z)$ to show the information plane.

\subsection{HSIC Plane Dynamics for LeNet5 Setting}
\label{sec:hsic_lenet5}
\begin{figure}[t]
    \begin{subfigure}[b]{\textwidth}
    \centering
    \includegraphics[width=\textwidth]{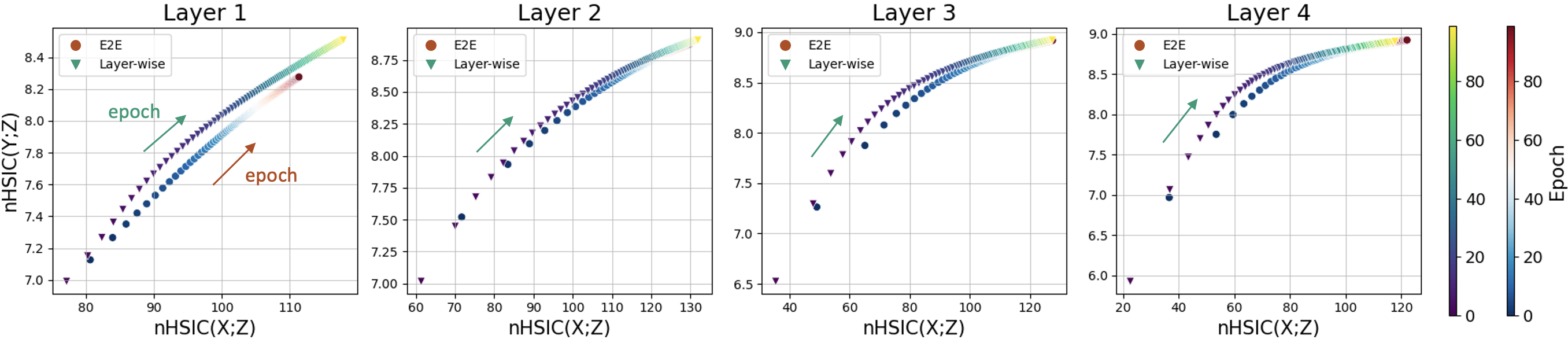}
    \caption{
        Lenet5 models trained on MNIST dataset. Test accuracy at $100$ epoch: $98.9$\% (E2E), $98.9$\% (Layer-wise).
    }
    \label{fig:hsic_lenet_mnist}
    \end{subfigure}
    \begin{subfigure}[b]{\textwidth}
    \centering
    \includegraphics[width=0.995\textwidth]{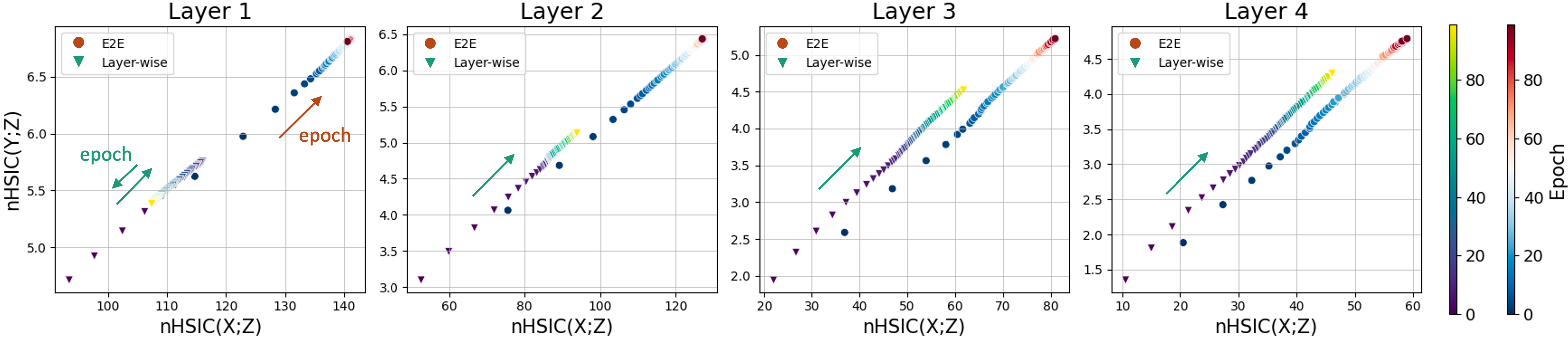}
    \caption{
        Lenet5 models trained on CIFAR10 dataset. Test accuracy at $100$ epoch: $62.5$\% (E2E), $59.2$\% (Layer-wise).
    }
    \label{fig:hsic_lenet_cifar10}
    \end{subfigure}
    \caption{
    HSIC plane dynamics of LeNet5 model.
    The color gradation shows the progress of training, i.e., the number of epochs.
    Inverted triangles with a blue-yellow-based colormap denote layer-wise training, whereas circles with a red-based colormap show E2E training.
    }
    \label{fig:hsic_lenet}
 \end{figure}

Firstly, we compared the dynamics of $\operatorname{nHSIC}(X, Z)$ and $\operatorname{nHSIC}(Y, Z)$ between E2E training and layer-wise training using a simple model.
\Figref{fig:hsic_lenet} illustrates the results for the LeNet5 model \citep{lecun1998gradient} for $100$ epochs, and the training progression is depicted using a color bar and arrows.
The LeNet5 is composed of five layers, with the initial two convolutional layers and the latter three linear layers, and for the layer-wise training, linear auxiliary networks are adopted for each layer.

\Figref{fig:hsic_lenet_mnist} shows the results for the MNIST dataset, where it is clear that there is not so much difference in the HSIC plane dynamics between E2E training and layer-wise training.
Furthermore, the actual test accuracy of the trained models showed little difference as in the caption of \figref{fig:hsic_lenet_mnist}. 
This outcome suggests that for the MNIST problem setting, where the local optimization already works reasonably well, even the greedy optimization in each layer could learn representation $Z$ that has a high correlation with $X$ and $Y$.
Next, \Figref{fig:hsic_lenet_cifar10} shows the results for the CIFAR10 dataset, which is the larger setting.
In this setup, there is a noticeable gap in the test accuracy between E2E and layer-wise training.
Moreover, an intriguing difference was observed in the HSIC dynamics.
In E2E training, both $\operatorname{nHSIC}(X, Z)$ and $\operatorname{nHSIC}(Y, Z)$ show a consistent increase in all layers as seen in the MNIST setting.
However, in the first layer of layer-wise training, the HSIC values increase at first and subsequently turn to decrease.
While this tendency was not observed in the later layers, the layer-wise training did not obtain representations with HSIC values as high as those in E2E learning. 

\subsection{Example of Greedy Optimization Failure}
\label{sec:example_greedy}
If the HSIC value achieved in the final layer accounts for the performance discrepancy between E2E and layer-wise training, understanding how this difference arises could shed light on the shortcomings of layer-wise training and further highlight the strength of E2E training.
We observed that the HSIC value for layer-wise training is lower even in the initial layer and turns into an intriguing decrease.
Considering that each layer is provided the class information explicitly in the layer-wise training, such an HSIC behavior is not necessarily intuitive.
In this section, we demonstrate how such a phenomenon might arise using a toy example.
When we focus on the initial representation $Z_1$, there is no difference in the incoming inputs between E2E and layer-wise training.
The distinction lies in the distance to the loss evaluation: in layer-wise training, it occurs after one linear layer, while in E2E training, it happens after the subsequent backbone model.

We first see how the optimization of cross-entropy loss is related to mutual information with label $Y$.
From the non-negativity of KL divergence and the Markov chain $Y \rightarrow X \rightarrow Z$, we obtain the lower bound of the mutual information as follows (see \secref{appendix:variational_bound} for detail):
\begin{align}
\label{eq:variational_bound}
I(Y; Z) 
\geq 
\E_{p(y, \vx)} \left[ \E_{p(\vz | \vx)} \left[ \log q(y | \vz) \right] \right] + H(Y),
\end{align}
where $q(\ry|\rvz)$ is an arbitrary variational distribution and $H(Y)$ is the entropy of the label distribution. 
The bound is tight when $q(\ry|\rvz) = p(\ry|\rvz)$ holds.
The $p(\vz | \vx)$ is determined by the backbone model up to the layer, and in the layer-wise training, the variational distribution $q(y | \vz)$ is modeled by the auxiliary classifier.
Optimizing the cross-entropy loss calculated from its output is equivalent to maximizing the lower bound of \eqref{eq:variational_bound}.
The layer-wise training, which can only train the final layer of $p(\vz | \vx)$ and a linear classifier $q$, could not significantly maximize the lower bound due to the insufficient capacity of the model to represent the relationship between $X$ and $Y$.
Therefore, learning representations with a high $I(Y; Z)$ value becomes challenging.
Furthermore, as will be discussed next, the local optimization in this scenario could lead to inappropriate learning of the backbone model for the later layers.

\begin{figure}[t]
    \centering
    \begin{minipage}[t]{0.45\textwidth}
    \centering
    \includegraphics[width=0.98\textwidth]{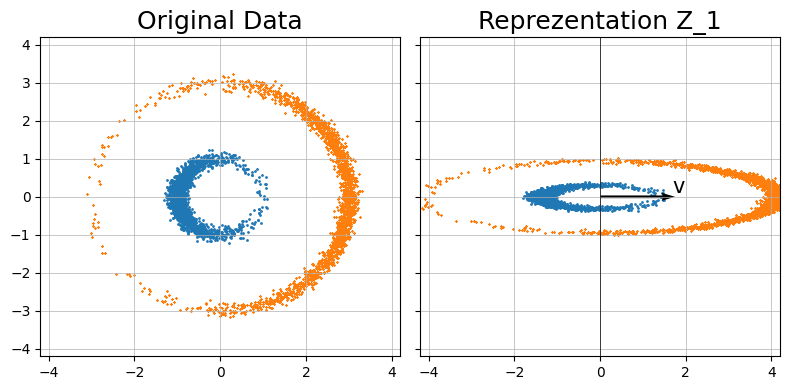}
    \small{
    \caption{Toy data model. 
            \textbf{Left:} Original data. 
            Class $1$: $r=3.0$ and $\theta \sim \text{von-Mises}(0, 2.0)$.
            Class $-1$: $r=1.0$ and $\theta \sim \text{von-Mises}(\pi, 2.0)$.
            A small noise is added to the data.
            \textbf{Right:} Representation $Z_1$ when $\mW = \bigl(\begin{smallmatrix} \sqrt{1.9} & 0 \\ 0 & \sqrt{0.1} \end{smallmatrix} \bigr)$.
            The x-axis is set to the direction of auxiliary classifier $\vv$.
    }
    \label{fig:toy-data}
    }
    \end{minipage}
    \hfill
    \centering
    \begin{minipage}[t]{0.51\textwidth}
    \centering
    \includegraphics[width=\textwidth]{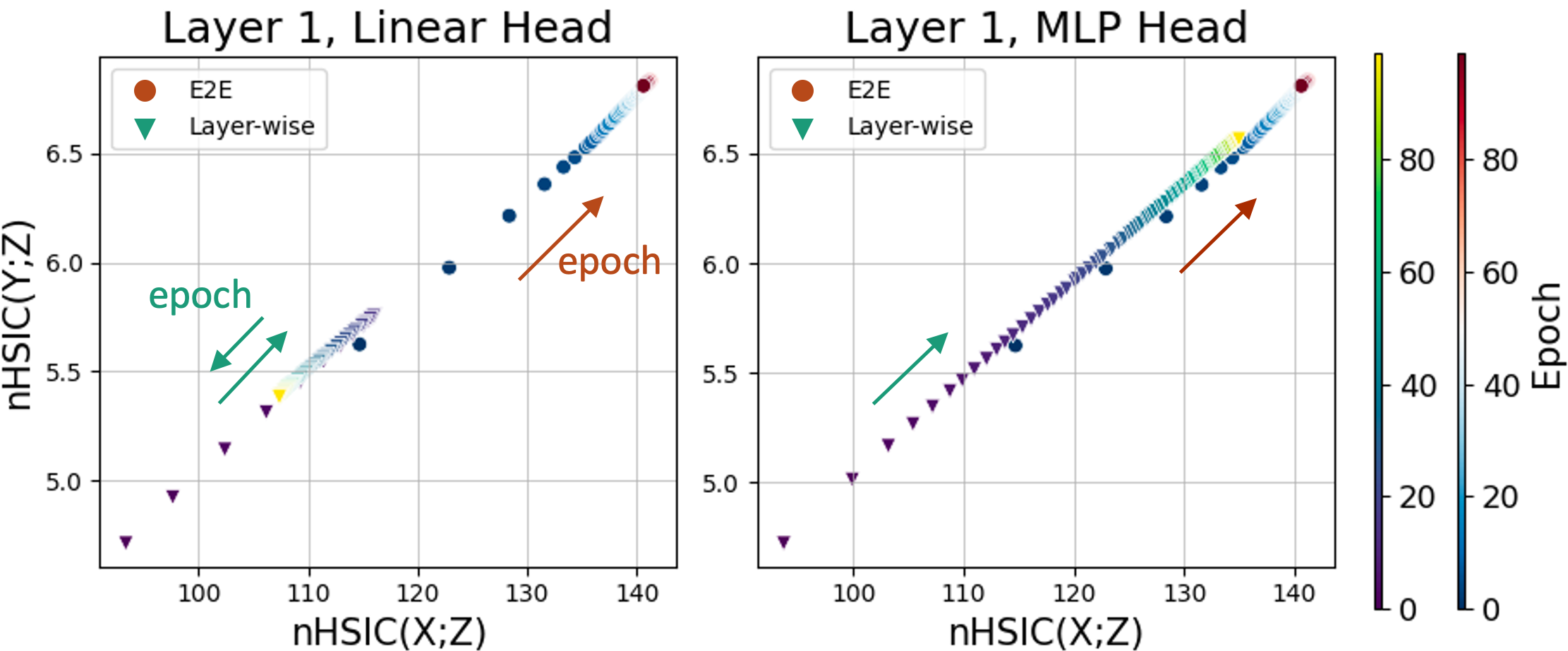}
    \caption{
        HSIC plane dynamics of the initial layer of Lenet5 models trained on CIFAR10 dataset.
        The size of auxiliary networks is different; the left shows the results for the linear head (same as \figref{fig:hsic_lenet_cifar10}), and the right shows the results for the two-layer MLP head.
    }
    \label{fig:hsic_lenet_cifar10_mlp}
    \end{minipage}
 \end{figure}

Let us consider the toy data model in \figref{fig:toy-data} as an example of $p(\ry|\rvx)$ having non-linear relationships. 
Note that in the E2E training setting, which allows non-linear transforms, perfect class separation is possible based on the length of the radius $r$.
We consider a binary classification setup where $\vx \in \R^2$ follows a distribution illustrated in \figref{fig:toy-data}.
The representation of the first layer $Z_1$ is determined as: $\vz_1 = f_1(\vx) = \mW \vx$, where $\mW \in \R^{2\times 2}$ is constrained by $\|\mW\|_F \leq c$; $c$ is a constant.
Additionally, the auxiliary classifier $g$ is linear as $g_1(\vz_1) = \vv^\top \vz_1$, where $\| \vv \|_2 = 1$, and the class is predicted based on the output sign.
In the layer-wise training, the binary cross-entropy is optimized based on the output of $g_1$.
Adding the constraint term on $\mW$, the loss function for the first layer is as follows:
\begin{align}
\Ls_{LW}(\mW, \vv) = \frac{1}{m} \sum_{i=1}^m \log \left( 1 + \exp \left(-2y_i \vv^\top \mW \vx_i \right) \right) + \lambda \|\mW\|_F^2,
\end{align}
where $\lambda > 0$ is the hyperparameter.
The gradients for $\vv$ and $\mW$ are
\begin{align}
\label{eq:grad_update}
\frac{\partial \Ls_{LW}}{\partial \vv} \propto \sum_{i=1}^m -y_i \mW \vx_i, 
\quad \frac{\partial \Ls_{LW}}{\partial \mW} \propto \sum_{i=1}^m -y_i \vv \vx_i^\top + 2\lambda \mW.
\end{align}
The data symmetry implies that $\sum_{i: y_i = 1} \vx_i$ points in the positive direction of the x-axis, while $\sum_{i: y_i = -1} \vx_i$ points in the negative direction of the x-axis. 
Consequently, $\sum_{i} y_i \vx_i$ aligns with the positive direction of the x-axis. 
As a result, based on the gradient update \eqref{eq:grad_update}, the second column of $\mW$, $\emW_{1, 2}$ and $\emW_{2, 2}$, converges to zero due to norm regularization.
\Figref{fig:toy-data} illustrates the process of rank collapse during training.
In this figure, the basis is changed so that the auxiliary classifier $\vv$ aligns with the positive direction of the x-axis. 
The first column of $\mW$ points in the same direction as $\vv$ in this figure, which follows from the gradient update \eqref{eq:grad_update} for small $\lambda$ intuitively, but the alignment is not necessarily required for this discussion.
The important point here is the rank collapse of $\mW$, leading to representations that more closely adhere to the x-axis. 
After the collapse, when observing points near the origin on the x-axis, we cannot accurately distinguish their corresponding labels or original data points.
This situation results in a reduction in $I(Y; Z)$ and $I(X; Z)$ because the uncertainty in $Y$ and $X$ arises after observing the realization of $Z$.

On the basis of this analysis, we hypothesize the following two steps could occur in the increasing and decreasing phases of HSIC in \figref{fig:hsic_lenet_cifar10};
First, in the early phase of training, the representation is trained to be correlated with the label.
Next, as training progresses, a decrease in mutual information similar to that in the toy data example occurs.
In actual computing, a decline happens when representation becomes indistinguishable in terms of the computational precision of the floating point even before a complete collapse of representation. 
Although this analysis provides insights only into a linear auxiliary head case, there exists a performance gap between E2E and layer-wise training even with two-layer MLP as seen in \secref{sec:difference_e2e_lw}.
\Figref{fig:hsic_lenet_cifar10_mlp} illustrates the HSIC dynamics of the first layer when using a two-layer MLP instead of a linear auxiliary classifier, under the same LeNet5 and CIFAR10 setup as in the previous section.
This change effectively resolves the decrease in the HSIC, leading to a similar increasing trend as in E2E training.
However, the fact that E2E training achieves better test accuracy in the setting of table \ref{table:layer-wise_cifar10} suggests that the two-layer MLP might be insufficient as a model for $q(y|\vz)$, potentially due to its small size or not being a convolutional network, leading to such a collapse of input information with greedy optimization.

\subsection{HSIC Plane Dynamics for ResNet Setting}
\begin{figure}[t]
    \includegraphics[width=\linewidth]{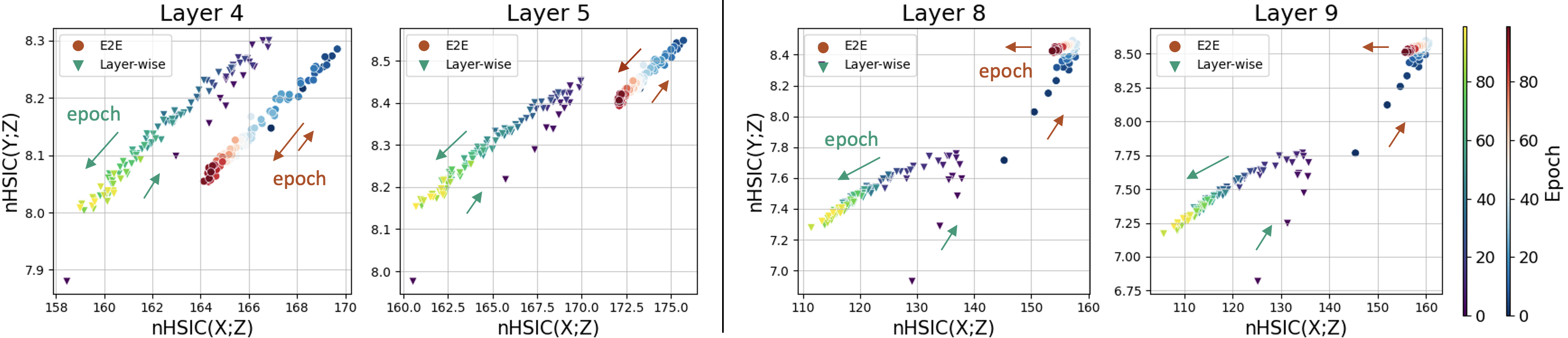}
    \caption{
        HSIC plane dynamics of ResNet18 model trained on CIFAR10.
        \textbf{Left:} Middle layers. \textbf{Right:} Output layers.
        The color gradation shows the progress of training.
        Inverted triangles with a blue-yellow-based colormap denote layer-wise training, whereas circles with a red-based colormap show E2E training.
    }
    \label{fig:hsic_resnet18_cifar10}
\end{figure}
Thus far, we have investigated relatively small models. In this section, we present the HSIC dynamics of ResNet.
\Figref{fig:hsic_resnet18_cifar10} shows the results for the CIFAR10 dataset.
While ResNet18 consists of nine middle representations from the first to the penultimate layer, the middle layers and output layers are shown in the figure.
Please refer to \figref{fig:hsic_resnet18_cifar10_full} in the appendix for all layers' results.
In the case of LeNet5 settings, both E2E and the layer-wise training generally showed an increase in HSIC values during training, while in the ResNet setting, both approaches demonstrated a decrease of $\operatorname{nHSIC}(X, Z)$ and $\operatorname{nHSIC}(Y, Z)$ in the middle layers of the network.
However, they do not have similar HSIC plane dynamics for every layer.
In the output-side layers, E2E training showed a slight compression phase of $\operatorname{nHSIC}(X, Z)$ while maintaining a high $\operatorname{nHSIC}(Y, Z)$, aligning with the compression phase suggested by the IB theory.
In contrast, in the layer-wise approach, both $\operatorname{nHSIC}(X, Z)$ and $\operatorname{nHSIC}(Y, Z)$ decreased as in the intermediate layers.
Note that the value of $\operatorname{nHSIC}(Y, Z)$ itself does not necessarily imply linear separability; therefore, the actual test accuracy did not deteriorate during this HSIC decreasing phase.
On the basis of the findings, the advantages of E2E learning over layer-wise training can be attributed to the following two main factors.
The results of other models are presented in \secref{appendix:additional_hsic_dynamics}, and similar behavior was observed.
In particular, the IB behavior in the final layer was more distinct in ResNet50 and VGG11 with batch normalization.

\paragraph{Efficient learning of $Z$ with high HSIC.}
E2E training can acquire the representation with high HSIC at the very early phase of training.
This suggests that learning with back-propagation is more efficient from the perspective of HSIC than directly providing class information to each layer, as in the layer-wise training.

\paragraph{Layer-role differentiation.}
While both E2E and layer-wise approaches exhibit the HSIC decrease phase in the middle layers, there is a difference in the mechanism behind this phenomenon.
In the layer-wise training, due to repeating similar optimization at each layer, even the output layers showed the same HSIC behaviors.
However, in the E2E training, the reduction in HSIC occurs in the middle layers while maintaining high $\operatorname{nHSIC}(Y, Z)$ in the output layers.
E2E can compress the middle layers while guaranteeing a certain level of $I(Y; Z_L)$ from \eqref{eq:dpi}.
In contrast, layer-wise training lacks knowledge of the later layers during optimization, and each layer's optimization limits the upper bound of $I(Y; Z_L)$.
These observations imply that the advantages of being E2E, i.e., enabling interaction among layers, is having different information-theoretical roles among layers and acquiring information bottleneck representation at the last layer.

\subsection{Controlling HSIC in Layer-wise Training} \label{sec:control_hsic}
The observations in the preceding sections have indicated that the layer-wise training can cause a reduction phase in HSIC through greedy optimization at each layer, potentially hindering the information propagation to subsequent layers.
In this section, we explore the possibility that enhancing $\operatorname{nHSIC}(X, Z)$ during local optimization enables layer-wise training to achieve a model performance comparable to E2E training.
\citet{wang2020revisiting} incorporated the reconstruction error, which appears in the variational lower bound of $I(X; Z)$, into the local error to mitigate the reduction of $I(X; Z)$ due to local optimization.
Although we differ in the training of increasing $\operatorname{nHSIC}(X, Z)$, our motivations align with it.
This study investigates whether such operations fundamentally address the issues in layer-wise training.

We explore the model performance trained with the local loss with an HSIC augmenting term.
Let $\mX \in \R^{m \times d}$ be $\left(\vx_1, \ldots, \vx_m \right)^\top$, and $\mZ_l \in \R^{m \times d_l}$ be the $l$-th layer's output. We adopted the following loss in each layer: 
\begin{align}
\Ls_{\operatorname{HSIC}, l}(\vtheta_l) = \Ls_l(\vtheta_l) - \lambda \cdot \operatorname{nHSIC}(\mX, f_l(\operatorname{StopGrad}(\mZ_{l-1}))),
\end{align}
where $\lambda > 0$ is a positive hyperparameter to define the regularization strength.
In practice, computing the empirical nHSIC for the entire training data requires $\mathcal{O}(m^2)$ memory consumption, so the estimated values within the mini-batch are used for training.
\begin{figure}[t]
    \includegraphics[width=\linewidth]{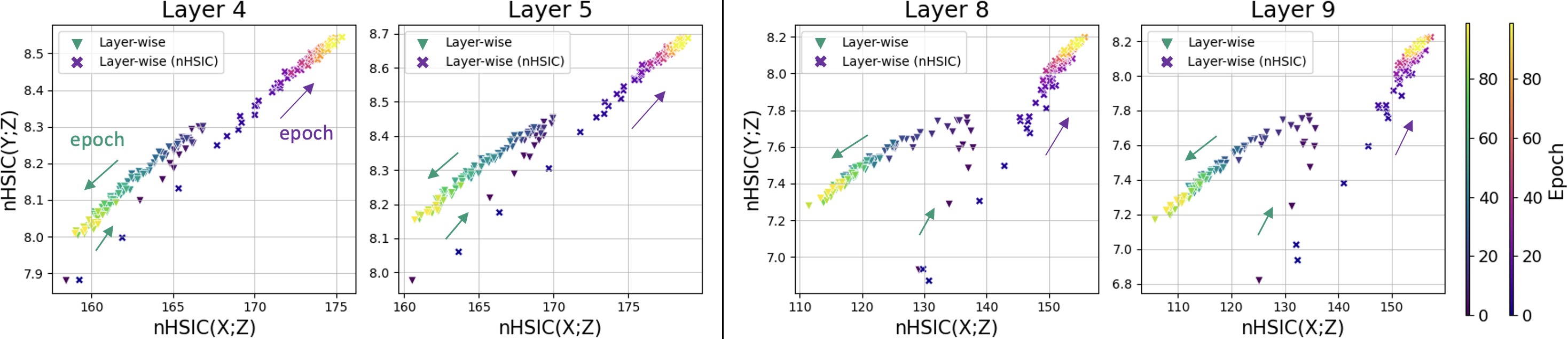}
    \caption{
    HSIC plane dynamics for layer-wise training with HSIC augmenting term; $\lambda = 0.05$. ResNet18 is trained on CIFAR10, which is the same setting as \figref{fig:hsic_resnet18_cifar10}.
    \textbf{Left:} Middle layers. \textbf{Right:} Output layers.
    The color gradation shows the progress of training.
    Inverted triangles with a blue-yellow-based colormap denote layer-wise training in the original setting as a baseline, whereas cross marks with a red-yellow-based colormap show training with HSIC augmenting term.
    }
    \label{fig:hsic_resnet18_cifar10_005_annotated}
\end{figure}

\Figref{fig:hsic_resnet18_cifar10_005_annotated} shows the results for layer-wise training with the HSIC augmenting term when $\lambda$ is set to $0.05$.
While the original layer-wise training exhibited an HSIC decreasing phase, both $\operatorname{nHSIC}(X, Z)$ and $\operatorname{nHSIC}(Y, Z)$ increased throughout the training without decreasing.
Although these HSIC plane dynamics were expected, the actual test accuracy turned out to be worse. 
Since identity mapping, for example, does not learn useful representations but preserves information, prioritizing information preservation might have hindered the learning of a useful representation here.
The results with smaller $\lambda$ are provided in \secref{appendix:additional_hsic_dynamics}, but we did not observe any performance improvement from the original setting.
Our findings show that 1) the initial HSIC values remained unchanged from the original setting, failing to achieve values as high as those with E2E training, and 
2) nHSIC values were uniformly improved in every layer without the middle layers' compression and the information bottleneck behavior in the final layer.
The learning dynamics are essentially unchanged from the original layer-wise setting and fail to achieve a cooperative interaction among the layers.

\section{Compression in Middle Layers from Geometric Perspective}
\label{sec:compression_middle}
E2E training has an information compression in the middle layers, exhibiting behavior distinct from that of the layers on the output side.
However, the benefits of this intermediate compression in E2E training remain a subject of discussion.
One reason would be that, based on the previous observations, the compression in the middle layers leads to the information bottleneck behavior in the final layer: a better representation from an information-theoretic perspective.
In this section, we discuss the advantages of middle compression from a geometric standpoint.
\citet{frosst2019analyzing} stated that entangling representations of different classes in the hidden layers contribute to the final classification performance.
Although mutual information does not specify the geometric arrangement of the representation space \citep{Tschannen2020On}, we have the following result for the value of $\operatorname{HSIC}(Z, Y)$ and \textit{soft nearest neighbor loss} in  \citet{frosst2019analyzing}.
\begin{theorem}[Informal version of Theorem \ref{thm:sn_hsic_formal}]
\label{thm:sn_hsic}
Suppose the representation $\vz$ is bounded as $\|\vz\|_2 \leq M$ for some constant $M > 0$. 
If the RBF kernel $k(\vv, \vw) = \exp \left( - \|\vv - \vw\|_2^2 / (2 \sigma^2) \right)$ and the linear kernel $l(\vv, \vw) = \vv^\top \vw$ are used for $Z$ and $Y$ respectively, then we have 
\begin{align}
\text{soft nearest neighbor loss} \geq \log c - c \exp \left( \frac{2M^2}{\sigma^2} \right) \operatorname{HSIC}(Y, Z).
\end{align}
\end{theorem}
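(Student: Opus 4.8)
The plan is to work with the (population form of the) soft nearest neighbor loss of \citet{frosst2019analyzing}, which at temperature $T$ assigns to each point $(\vz, y)$ the quantity $-\log\big(N(\vz,y)/D(\vz)\big)$, where $N(\vz,y) = \E_{Z'Y'}[k(\vz, Z')\,\mathbf{1}\{y = Y'\}]$ is the soft mass of same-class neighbours and $D(\vz) = \E_{Z'}[k(\vz, Z')]$ is the total neighbour mass. Choosing $T = 2\sigma^2$ makes the exponential weights coincide exactly with the RBF kernel $k$, and the linear kernel on one-hot labels gives $l(y, y') = \mathbf{1}\{y = y'\}$, which is the encoding I would use to line the loss up with $\operatorname{HSIC}$. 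First I would split off the class-prior baseline: assuming balanced classes, $N/D = 1/c + (N - \tfrac1c D)/D$, so after taking expectations the loss separates into a constant part equal to $\log c$ and a residual governed by $R := \E_{ZY}[(N - \tfrac1c D)/D]$.

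The key identity I would establish is that the \emph{unnormalised} residual is exactly HSIC, i.e. $\E_{ZY}[N - \tfrac1c D] = \operatorname{HSIC}(Y, Z)$. This follows by substituting $k$ and $l$ into the three-term expansion in equation~\ref{eq:hsic}: for balanced classes $\E_{Y'}[\mathbf{1}\{Y = Y'\} \mid Y] = 1/c$ and $\E_{YY'}[l] = 1/c$, so the cross term reduces to $2\bar D/c$ and the product term to $\bar D/c$, where $\bar D := \E_{ZZ'}[k(Z, Z')]$. The expansion therefore gives $A - 2\bar D/c + \bar D/c = A - \bar D/c$ with $A := \E_{ZY}[N]$, which is precisely $\E_{ZY}[N - \tfrac1c D]$. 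This is the step that ties the geometric neighbour statistic to the dependence measure $\operatorname{HSIC}(Y, Z)$.

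Next I would control the normalisation using the boundedness hypothesis. Since $\|\vz\|_2 \le M$, we have $\|\vz - \vz'\|_2^2 \le 4M^2$ and hence $k(\vz, \vz') \ge \exp(-2M^2/\sigma^2)$; averaging gives $D(\vz) \ge \exp(-2M^2/\sigma^2)$, so $1/D(\vz) \le \exp(2M^2/\sigma^2)$, which is the origin of the stated factor. Applying the elementary inequality $\log(1+x) \le x$ pointwise to $\log\big(1 + c(N - \tfrac1c D)/D\big)$ turns each $-\log(N/D)$ into $\log c - c(N - \tfrac1c D)/D$; taking expectations and bounding the residual by $R \le \exp(2M^2/\sigma^2)\,\E_{ZY}[N - \tfrac1c D] = \exp(2M^2/\sigma^2)\operatorname{HSIC}(Y, Z)$ would yield the claimed lower bound $\log c - c\exp(2M^2/\sigma^2)\operatorname{HSIC}(Y, Z)$.

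The hard part is that last bounding step, namely passing from the normalised ratio inside the loss to the unnormalised $\operatorname{HSIC}$. Pulling $1/D \le \exp(2M^2/\sigma^2)$ out of $R$ is only valid where $N - \tfrac1c D \ge 0$; this quantity can be negative at individual points, and only its expectation (the HSIC) is guaranteed nonnegative, so on the negative region the inequality reverses. I would address this by exploiting that $1/D \in [1, \exp(2M^2/\sigma^2)]$, hence $\exp(2M^2/\sigma^2) - 1/D \ge 0$, and by arguing that the negative contributions only push the loss further in the desired direction, together with the balanced-class assumption that pins the baseline at exactly $\log c$. Making this sign bookkeeping rigorous (or, failing that, replacing $\operatorname{HSIC}$ by the corresponding positive-part surrogate) is the main technical obstacle, whereas the HSIC identity and the $\exp(2M^2/\sigma^2)$ denominator bound are routine.
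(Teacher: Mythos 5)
Your skeleton is sound up to the last step, and the two pillars you identify are exactly the ones the paper relies on: your identity $\E_{ZY}[N-\tfrac{1}{c}D]=\operatorname{HSIC}(Y,Z)$ is Lemma~\ref{lem:hsic_yz} in disguise (the paper states it as $\operatorname{HSIC}(Y,Z)=\tfrac{1}{c}\left(\E_{p_{pos}}[k]-\E_{p(\vz)p(\vz')}[k]\right)$), and the kernel lower bound $k\ge\exp(-2M^2/\sigma^2)$ coming from $\|\vz-\vz'\|_2^2\le 4M^2$ is the source of the constant in both arguments. Your intermediate machinery is genuinely different, though: you work directly with the population ratio $N/D$, peel off the class prior, and apply $\log(1+x)\le x$, whereas the paper defines the loss with $n$ i.i.d.\ positives and $cn$ negatives, passes to the limit via the strong law and dominated convergence, applies Jensen's inequality to collapse the positive term $\E_{p(y,\vz)}\log\E_{p(\vz'|y)}[k]$ to the constant $\log\E_{p_{pos}}[k]$, and then invokes a mean-value-theorem estimate ($\log t-\log s\le(t-s)/r$ for $r\le s\le t\le 1$, Lemma~\ref{lem:log}).

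However, the step you flag as the obstacle is a genuine gap, and the patch you sketch does not close it. Write $W=N-\tfrac{1}{c}D$ and $C=\exp(2M^2/\sigma^2)$, so $1\le 1/D\le C$. On $\{W<0\}$ one has $CW\le W/D\le W$: the pointwise inequality $W/D\le CW$ holds only where $W\ge 0$ and \emph{reverses} where $W<0$, so the negative region works against the bound $\E[W/D]\le C\,\E[W]$ rather than ``pushing the loss further in the desired direction.'' What comes for free is only the weaker surrogate $\E[W/D]\le C\,\E[W^+]$, and $\E[W^+]\ge\operatorname{HSIC}(Y,Z)$ points the wrong way. It is worth noting that the paper's own proof is exposed at exactly the same spot: Lemma~\ref{lem:log} is applied with $t=\E_{p_{pos}}[k(\vz,\vz')]$ and $s=\E_{p(\vz')}[k(\vz,\vz')]$ without verifying the hypothesis $s\le t$ pointwise in $\vz$, and the lemma's conclusion fails when $t<s$ (the mean value theorem then gives $\log t-\log s=(t-s)/\xi$ with $\xi\ge r$ and $t-s<0$, hence $\log t-\log s\ge(t-s)/r$). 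So you have correctly located the crux; a complete argument needs either a verified sign condition (e.g.\ that the within-class kernel mean dominates the marginal kernel mean at almost every point) or a restatement of the bound with the positive-part surrogate.
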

This theorem implies that reducing $\operatorname{HSIC}(Y, Z)$ leads to an increase in the lower bound of soft nearest neighbor loss, leading to a higher performance of the final layer according to \citet{frosst2019analyzing}.
Please refer to \secref{appendix:snfl_hsic} in the appendix for proofs and more detailed interpretation.

\section{Related Work} \label{sec:related_work}

\paragraph{Information and HSIC Bottleneck}
Information bottleneck \citep{tishby2000information, tishby2015deep, michael2018on} is one of the concepts for dissecting deep learning.
It has been studied in the context of the analysis of the learning dynamics through the information plane \citep{goldfeld2019estimating, geiger2021information, lorenzen2022information, adilova2023information},
the relationship between generalization of neural network \citep{achille2018emergence, chelombiev2018adaptive, kawaguchi2023does}, and the objective of self-supervised representation learning \citep{sridharan2008information, Federici2020Learning}.
The HSIC bottleneck \citep{ma2020hsic, pogodin2020kernelized, wang2021revisiting, jian2022pruning, guo2023automatic, wang2023dualhsic} is inspired by the information bottleneck but uses HSIC instead of mutual information because of the simplicity of estimation.
In our research, we focus on information plane analysis rather than designing loss functions based on HSIC bottleneck; in this sense, we are influenced by the plots presented in \citet{wang2021revisiting}.

\paragraph{Layer-wise Analysis}
While there are prior studies on the differing roles of the layers and compression of representation \citep{darlow2020information, peer2022improving, zhang2022all, chen2023which, masarczyk2024tunnel}, to the best of our knowledge, there has not been any research leveraging the characteristic of layer-wise training, which allows specifying roles for each layer, to reconsider the advantages of E2E training. 
One study conducted in parallel with ours \citep{wang2023understanding} shares a perspective in analyzing representation learning layer by layer.
However, our work differs in that we put emphasis on the information-theoretic interactions between layers for E2E training, while they analyze the geometric representations of classes using deep linear networks.

\section{Discussion}
\subsection{Forward-Forward Algorithm}
One of the merits of layer-wise training is the possibility of adopting a divide-and-conquer approach, which enables us to specify the role of each layer specifically, leading to more interpretable models and parallel efficient training.
However, as of this moment, layer-wise training imposes the same objective function on layers, making the role of each layer uniform and negating the importance of stacking layers.
We focus on the forward-forward algorithm \citep{hinton2022forward} as a potential solution to this challenge and investigate its possibility in \secref{appendix:forward-forward}.
Unlike the divide-and-conquer approach, it trains layers in a sequential way and forces each layer to learn different features from preceding layers while adopting the same objective function among the layers.
This process encourages distinct roles for each layer.
While there is a problem in terms of diminishing class information as the layers deepen (see \secref{appendix:forward-forward}), it presents an intriguing approach.

\subsection{Limitation and Future Direction}
In this study, we observed the information compression in the middle layers in E2E training and discussed its benefits from the information bottleneck perspective and the geometric perspective.
However, it does not explain why this compression occurs.
Furthermore, we do not investigate the merits of backpropagation.
The layer-role differentiation observed in E2E training could potentially be obtained by solving the credit assignment problem for each neuron via backpropagation.
However, even with biologically plausible methods, if they can address the credit assignment problem appropriately and achieve interaction between layers, there is a potential to achieve the advantages in E2E training observed in our work.
This paper provides criteria that backpropagation alternatives should meet, and exploring this potential is left for future work.

\section{Conclusion}
This study explored the advantages of E2E training through the comparison with layer-wise training.
Our information-theoretic analysis using HSIC shows that E2E training propagates efficiently input information and assigns diverse roles to the layers, leading to the information bottleneck representation at the final layer.
The information compression of intermediate layers observed in this study could offer an intriguing perspective for understanding neural network behavior in the context of the information bottleneck principle.
We leave the investigation to future work of the information compression mechanisms of the middle layers and the influence of layer interaction on the last representation dynamics.


\bibliography{main}

\begin{thebibliography}{97}
\providecommand{\natexlab}[1]{#1}
\providecommand{\url}[1]{\texttt{#1}}
\expandafter\ifx\csname urlstyle\endcsname\relax
  \providecommand{\doi}[1]{doi: #1}\else
  \providecommand{\doi}{doi: \begingroup \urlstyle{rm}\Url}\fi

\bibitem[Achille \& Soatto(2018)Achille and Soatto]{achille2018emergence}
Alessandro Achille and Stefano Soatto.
\newblock Emergence of invariance and disentanglement in deep representations.
\newblock \emph{The Journal of Machine Learning Research}, 19\penalty0 (1):\penalty0 1947--1980, 2018.

\bibitem[Adilova et~al.(2023)Adilova, Geiger, and Fischer]{adilova2023information}
Linara Adilova, Bernhard~C Geiger, and Asja Fischer.
\newblock Information plane analysis for dropout neural networks.
\newblock In \emph{The Eleventh International Conference on Learning Representations}, 2023.
\newblock URL \url{https://openreview.net/forum?id=bQB6qozaBw}.

\bibitem[Alain \& Bengio(2016)Alain and Bengio]{alain2016understanding}
Guillaume Alain and Yoshua Bengio.
\newblock Understanding intermediate layers using linear classifier probes.
\newblock \emph{arXiv preprint arXiv:1610.01644}, 2016.

\bibitem[Amjad \& Geiger(2019)Amjad and Geiger]{amjad2019learning}
Rana~Ali Amjad and Bernhard~C Geiger.
\newblock Learning representations for neural network-based classification using the information bottleneck principle.
\newblock \emph{IEEE transactions on pattern analysis and machine intelligence}, 42\penalty0 (9):\penalty0 2225--2239, 2019.

\bibitem[Baldi et~al.(2017)Baldi, Sadowski, and Lu]{baldi2017learning}
Pierre Baldi, Peter Sadowski, and Zhiqin Lu.
\newblock Learning in the machine: the symmetries of the deep learning channel.
\newblock \emph{Neural Networks}, 95, 09 2017.
\newblock \doi{10.1016/j.neunet.2017.08.008}.

\bibitem[Barber \& Agakov(2004)Barber and Agakov]{barber2004algorithm}
David Barber and Felix Agakov.
\newblock The im algorithm: a variational approach to information maximization.
\newblock \emph{Advances in neural information processing systems}, 16\penalty0 (320):\penalty0 201, 2004.

\bibitem[Bartunov et~al.(2018)Bartunov, Santoro, Richards, Marris, Hinton, and Lillicrap]{bartunov2018assessing}
Sergey Bartunov, Adam Santoro, Blake Richards, Luke Marris, Geoffrey~E Hinton, and Timothy Lillicrap.
\newblock Assessing the scalability of biologically-motivated deep learning algorithms and architectures.
\newblock \emph{Advances in neural information processing systems}, 31, 2018.

\bibitem[Baydin et~al.(2022)Baydin, Pearlmutter, Syme, Wood, and Torr]{baydin2022gradients}
At{\i}l{\i}m~G{\"u}ne{\c{s}} Baydin, Barak~A Pearlmutter, Don Syme, Frank Wood, and Philip Torr.
\newblock Gradients without backpropagation.
\newblock \emph{arXiv preprint arXiv:2202.08587}, 2022.

\bibitem[Belilovsky et~al.(2019)Belilovsky, Eickenberg, and Oyallon]{belilovsky2019greedy}
Eugene Belilovsky, Michael Eickenberg, and Edouard Oyallon.
\newblock Greedy layerwise learning can scale to imagenet.
\newblock In \emph{International conference on machine learning}, pp.\  583--593. PMLR, 2019.

\bibitem[Belilovsky et~al.(2020)Belilovsky, Eickenberg, and Oyallon]{belilovsky2020decoupled}
Eugene Belilovsky, Michael Eickenberg, and Edouard Oyallon.
\newblock Decoupled greedy learning of cnns.
\newblock In \emph{International Conference on Machine Learning}, pp.\  736--745. PMLR, 2020.

\bibitem[Bengio(2014)]{bengio2014auto}
Yoshua Bengio.
\newblock How auto-encoders could provide credit assignment in deep networks via target propagation.
\newblock \emph{arXiv preprint arXiv:1407.7906}, 2014.

\bibitem[Bengio et~al.(2006)Bengio, Lamblin, Popovici, and Larochelle]{bengio2006greedy}
Yoshua Bengio, Pascal Lamblin, Dan Popovici, and Hugo Larochelle.
\newblock Greedy layer-wise training of deep networks.
\newblock \emph{Advances in neural information processing systems}, 19, 2006.

\bibitem[Beraha et~al.(2019)Beraha, Metelli, Papini, Tirinzoni, and Restelli]{beraha2019feature}
Mario Beraha, Alberto~Maria Metelli, Matteo Papini, Andrea Tirinzoni, and Marcello Restelli.
\newblock Feature selection via mutual information: New theoretical insights.
\newblock In \emph{2019 international joint conference on neural networks (IJCNN)}, pp.\  1--9. IEEE, 2019.

\bibitem[Blalock et~al.(2020)Blalock, Gonzalez~Ortiz, Frankle, and Guttag]{blalock2020state}
Davis Blalock, Jose~Javier Gonzalez~Ortiz, Jonathan Frankle, and John Guttag.
\newblock What is the state of neural network pruning?
\newblock \emph{Proceedings of machine learning and systems}, 2:\penalty0 129--146, 2020.

\bibitem[Chelombiev et~al.(2019)Chelombiev, Houghton, and O'Donnell]{chelombiev2018adaptive}
Ivan Chelombiev, Conor Houghton, and Cian O'Donnell.
\newblock Adaptive estimators show information compression in deep neural networks.
\newblock In \emph{International Conference on Learning Representations}, 2019.
\newblock URL \url{https://openreview.net/forum?id=SkeZisA5t7}.

\bibitem[Chen et~al.(2016)Chen, Xu, Zhang, and Guestrin]{chen2016training}
Tianqi Chen, Bing Xu, Chiyuan Zhang, and Carlos Guestrin.
\newblock Training deep nets with sublinear memory cost.
\newblock \emph{arXiv preprint arXiv:1604.06174}, 2016.

\bibitem[Chen et~al.(2020)Chen, Kornblith, Norouzi, and Hinton]{chen2020simple}
Ting Chen, Simon Kornblith, Mohammad Norouzi, and Geoffrey Hinton.
\newblock A simple framework for contrastive learning of visual representations.
\newblock In \emph{International conference on machine learning}, pp.\  1597--1607. PMLR, 2020.

\bibitem[Chen et~al.(2023)Chen, Yuille, and Zhou]{chen2023which}
Yixiong Chen, Alan Yuille, and Zongwei Zhou.
\newblock Which layer is learning faster? a systematic exploration of layer-wise convergence rate for deep neural networks.
\newblock In \emph{The Eleventh International Conference on Learning Representations}, 2023.
\newblock URL \url{https://openreview.net/forum?id=wlMDF1jQF86}.

\bibitem[Crick(1989)]{crick1989recent}
Francis Crick.
\newblock The recent excitement about neural networks.
\newblock \emph{Nature}, 337\penalty0 (6203):\penalty0 129--132, 1989.

\bibitem[Darlow \& Storkey(2020)Darlow and Storkey]{darlow2020information}
Luke~Nicholas Darlow and Amos Storkey.
\newblock What information does a resnet compress?
\newblock \emph{arXiv preprint arXiv:2003.06254}, 2020.

\bibitem[Dosovitskiy et~al.(2020)Dosovitskiy, Beyer, Kolesnikov, Weissenborn, Zhai, Unterthiner, Dehghani, Minderer, Heigold, Gelly, et~al.]{dosovitskiy2020image}
Alexey Dosovitskiy, Lucas Beyer, Alexander Kolesnikov, Dirk Weissenborn, Xiaohua Zhai, Thomas Unterthiner, Mostafa Dehghani, Matthias Minderer, Georg Heigold, Sylvain Gelly, et~al.
\newblock An image is worth 16x16 words: Transformers for image recognition at scale.
\newblock \emph{arXiv preprint arXiv:2010.11929}, 2020.

\bibitem[Duan \& Principe(2022)Duan and Principe]{duan2022training}
Shiyu Duan and Jose~C Principe.
\newblock Training deep architectures without end-to-end backpropagation: A survey on the provably optimal methods.
\newblock \emph{IEEE Computational Intelligence Magazine}, 17\penalty0 (4):\penalty0 39--51, 2022.

\bibitem[Est{\'e}vez et~al.(2009)Est{\'e}vez, Tesmer, Perez, and Zurada]{estevez2009normalized}
Pablo~A Est{\'e}vez, Michel Tesmer, Claudio~A Perez, and Jacek~M Zurada.
\newblock Normalized mutual information feature selection.
\newblock \emph{IEEE Transactions on neural networks}, 20\penalty0 (2):\penalty0 189--201, 2009.

\bibitem[Federici et~al.(2020)Federici, Dutta, Forré, Kushman, and Akata]{Federici2020Learning}
Marco Federici, Anjan Dutta, Patrick Forré, Nate Kushman, and Zeynep Akata.
\newblock Learning robust representations via multi-view information bottleneck.
\newblock In \emph{International Conference on Learning Representations}, 2020.
\newblock URL \url{https://openreview.net/forum?id=B1xwcyHFDr}.

\bibitem[Frosst et~al.(2019)Frosst, Papernot, and Hinton]{frosst2019analyzing}
Nicholas Frosst, Nicolas Papernot, and Geoffrey Hinton.
\newblock Analyzing and improving representations with the soft nearest neighbor loss.
\newblock In \emph{International conference on machine learning}, pp.\  2012--2020. PMLR, 2019.

\bibitem[Fukumizu et~al.(2007)Fukumizu, Gretton, Sun, and Sch{\"o}lkopf]{fukumizu2007kernel}
Kenji Fukumizu, Arthur Gretton, Xiaohai Sun, and Bernhard Sch{\"o}lkopf.
\newblock Kernel measures of conditional dependence.
\newblock \emph{Advances in neural information processing systems}, 20, 2007.

\bibitem[Fukumizu et~al.(2009)Fukumizu, Bach, and Jordan]{fukumizu2009kernel}
Kenji Fukumizu, Francis~R Bach, and Michael~I Jordan.
\newblock Kernel dimension reduction in regression.
\newblock 2009.

\bibitem[Geiger(2021)]{geiger2021information}
Bernhard~C Geiger.
\newblock On information plane analyses of neural network classifiers--a review.
\newblock \emph{IEEE Transactions on Neural Networks and Learning Systems}, 2021.

\bibitem[Gibbs \& Su(2002)Gibbs and Su]{gibbs2002choosing}
Alison~L Gibbs and Francis~Edward Su.
\newblock On choosing and bounding probability metrics.
\newblock \emph{International statistical review}, 70\penalty0 (3):\penalty0 419--435, 2002.

\bibitem[Goldfeld(2019)]{goldfeld2019estimating}
Ziv Goldfeld.
\newblock Estimating information flow in deep neural networks.
\newblock In \emph{International Conference on Machine Learning}, 2019.

\bibitem[Gomez et~al.(2022)Gomez, Key, Perlin, Gou, Frosst, Dean, and Gal]{gomez2022interlocking}
Aidan~N Gomez, Oscar Key, Kuba Perlin, Stephen Gou, Nick Frosst, Jeff Dean, and Yarin Gal.
\newblock Interlocking backpropagation: Improving depthwise model-parallelism.
\newblock \emph{The Journal of Machine Learning Research}, 23\penalty0 (1):\penalty0 7714--7741, 2022.

\bibitem[Goodfellow et~al.(2014)Goodfellow, Shlens, and Szegedy]{goodfellow2014explaining}
Ian~J Goodfellow, Jonathon Shlens, and Christian Szegedy.
\newblock Explaining and harnessing adversarial examples.
\newblock \emph{arXiv preprint arXiv:1412.6572}, 2014.

\bibitem[Gretton et~al.(2005)Gretton, Bousquet, Smola, and Sch{\"o}lkopf]{gretton2005measuring}
Arthur Gretton, Olivier Bousquet, Alex Smola, and Bernhard Sch{\"o}lkopf.
\newblock Measuring statistical dependence with hilbert-schmidt norms.
\newblock In \emph{International conference on algorithmic learning theory}, pp.\  63--77. Springer, 2005.

\bibitem[Gruslys et~al.(2016)Gruslys, Munos, Danihelka, Lanctot, and Graves]{gruslys2016memory}
Audrunas Gruslys, R{\'e}mi Munos, Ivo Danihelka, Marc Lanctot, and Alex Graves.
\newblock Memory-efficient backpropagation through time.
\newblock \emph{Advances in neural information processing systems}, 29, 2016.

\bibitem[Guo et~al.(2023)Guo, Zhang, Zheng, Wang, Li, Chao, Wu, Zhang, and Ji]{guo2023automatic}
Song Guo, Lei Zhang, Xiawu Zheng, Yan Wang, Yuchao Li, Fei Chao, Chenglin Wu, Shengchuan Zhang, and Rongrong Ji.
\newblock Automatic network pruning via hilbert-schmidt independence criterion lasso under information bottleneck principle.
\newblock In \emph{Proceedings of the IEEE/CVF international conference on computer vision}, pp.\  17458--17469, 2023.

\bibitem[He et~al.(2016)He, Zhang, Ren, and Sun]{he2016deep}
Kaiming He, Xiangyu Zhang, Shaoqing Ren, and Jian Sun.
\newblock Deep residual learning for image recognition.
\newblock In \emph{Proceedings of the IEEE conference on computer vision and pattern recognition}, pp.\  770--778, 2016.

\bibitem[Hinton(2022)]{hinton2022forward}
Geoffrey Hinton.
\newblock The forward-forward algorithm: Some preliminary investigations.
\newblock \emph{arXiv preprint arXiv:2212.13345}, 2022.

\bibitem[Hinton(2023)]{hinton2023youtube}
Geoffrey Hinton.
\newblock {S3 E9 Geoff Hinton, the ``Godfather of AI'', quits Google to warn of AI risks (Host: Pieter Abbeel)}, 2023.
\newblock URL \url{https://www.youtube.com/watch?v=rLG68k2blOc}.

\bibitem[Hinton et~al.(2006)Hinton, Osindero, and Teh]{hinton2006fast}
Geoffrey~E Hinton, Simon Osindero, and Yee-Whye Teh.
\newblock A fast learning algorithm for deep belief nets.
\newblock \emph{Neural computation}, 18\penalty0 (7):\penalty0 1527--1554, 2006.

\bibitem[Hjelm et~al.(2018)Hjelm, Fedorov, Lavoie-Marchildon, Grewal, Bachman, Trischler, and Bengio]{hjelm2018learning}
R~Devon Hjelm, Alex Fedorov, Samuel Lavoie-Marchildon, Karan Grewal, Phil Bachman, Adam Trischler, and Yoshua Bengio.
\newblock Learning deep representations by mutual information estimation and maximization.
\newblock \emph{arXiv preprint arXiv:1808.06670}, 2018.

\bibitem[Huang et~al.(2018)Huang, Ash, Langford, and Schapire]{huang2018learning}
Furong Huang, Jordan Ash, John Langford, and Robert Schapire.
\newblock Learning deep resnet blocks sequentially using boosting theory.
\newblock In \emph{International Conference on Machine Learning}, pp.\  2058--2067. PMLR, 2018.

\bibitem[Huang et~al.(2019)Huang, Cheng, Bapna, Firat, Chen, Chen, Lee, Ngiam, Le, Wu, et~al.]{huang2019gpipe}
Yanping Huang, Youlong Cheng, Ankur Bapna, Orhan Firat, Dehao Chen, Mia Chen, HyoukJoong Lee, Jiquan Ngiam, Quoc~V Le, Yonghui Wu, et~al.
\newblock Gpipe: Efficient training of giant neural networks using pipeline parallelism.
\newblock \emph{Advances in neural information processing systems}, 32, 2019.

\bibitem[Jian et~al.(2022)Jian, Wang, Wang, Dy, and Ioannidis]{jian2022pruning}
Tong Jian, Zifeng Wang, Yanzhi Wang, Jennifer Dy, and Stratis Ioannidis.
\newblock Pruning adversarially robust neural networks without adversarial examples.
\newblock In \emph{2022 IEEE International Conference on Data Mining (ICDM)}, pp.\  993--998. IEEE, 2022.

\bibitem[Journ{\'e} et~al.(2023)Journ{\'e}, Rodriguez, Guo, and Moraitis]{journe2023hebbian}
Adrien Journ{\'e}, Hector~Garcia Rodriguez, Qinghai Guo, and Timoleon Moraitis.
\newblock Hebbian deep learning without feedback.
\newblock In \emph{The Eleventh International Conference on Learning Representations}, 2023.
\newblock URL \url{https://openreview.net/forum?id=8gd4M-_Rj1}.

\bibitem[Kao \& Chen(2021)Kao and Chen]{kao2021associated}
Yu-Wei Kao and Hung-Hsuan Chen.
\newblock Associated learning: Decomposing end-to-end backpropagation based on autoencoders and target propagation.
\newblock \emph{Neural Computation}, 33\penalty0 (1):\penalty0 174--193, 2021.

\bibitem[Kawaguchi et~al.(2023)Kawaguchi, Deng, Ji, and Huang]{kawaguchi2023does}
Kenji Kawaguchi, Zhun Deng, Xu~Ji, and Jiaoyang Huang.
\newblock How does information bottleneck help deep learning?
\newblock \emph{arXiv preprint arXiv:2305.18887}, 2023.

\bibitem[Khosla et~al.(2020)Khosla, Teterwak, Wang, Sarna, Tian, Isola, Maschinot, Liu, and Krishnan]{khosla2020supervised}
Prannay Khosla, Piotr Teterwak, Chen Wang, Aaron Sarna, Yonglong Tian, Phillip Isola, Aaron Maschinot, Ce~Liu, and Dilip Krishnan.
\newblock Supervised contrastive learning.
\newblock \emph{Advances in neural information processing systems}, 33:\penalty0 18661--18673, 2020.

\bibitem[Kingma \& Ba(2014)Kingma and Ba]{kingma2014adam}
Diederik~P Kingma and Jimmy Ba.
\newblock Adam: A method for stochastic optimization.
\newblock \emph{arXiv preprint arXiv:1412.6980}, 2014.

\bibitem[Kohan et~al.(2023)Kohan, Rietman, and Siegelmann]{kohan2023signal}
Adam Kohan, Edward~A Rietman, and Hava~T Siegelmann.
\newblock Signal propagation: The framework for learning and inference in a forward pass.
\newblock \emph{IEEE Transactions on Neural Networks and Learning Systems}, 2023.

\bibitem[Krishnamoorthi(2018)]{krishnamoorthi2018quantizing}
Raghuraman Krishnamoorthi.
\newblock Quantizing deep convolutional networks for efficient inference: A whitepaper.
\newblock \emph{arXiv preprint arXiv:1806.08342}, 2018.

\bibitem[Krizhevsky(2014)]{krizhevsky2014one}
Alex Krizhevsky.
\newblock One weird trick for parallelizing convolutional neural networks.
\newblock \emph{arXiv preprint arXiv:1404.5997}, 2014.

\bibitem[Krizhevsky et~al.(2009)Krizhevsky, Hinton, et~al.]{krizhevsky2009learning}
Alex Krizhevsky, Geoffrey Hinton, et~al.
\newblock Learning multiple layers of features from tiny images.
\newblock 2009.

\bibitem[Kulkarni \& Karande(2017)Kulkarni and Karande]{kulkarni2017layer}
Mandar Kulkarni and Shirish Karande.
\newblock Layer-wise training of deep networks using kernel similarity.
\newblock \emph{arXiv preprint arXiv:1703.07115}, 2017.

\bibitem[Kusumoto et~al.(2019)Kusumoto, Inoue, Watanabe, Akiba, and Koyama]{kusumoto2019graph}
Mitsuru Kusumoto, Takuya Inoue, Gentaro Watanabe, Takuya Akiba, and Masanori Koyama.
\newblock A graph theoretic framework of recomputation algorithms for memory-efficient backpropagation.
\newblock \emph{Advances in Neural Information Processing Systems}, 32, 2019.

\bibitem[LeCun et~al.(1989)LeCun, Boser, Denker, Henderson, Howard, Hubbard, and Jackel]{lecun1989backpropagation}
Yann LeCun, Bernhard Boser, John~S Denker, Donnie Henderson, Richard~E Howard, Wayne Hubbard, and Lawrence~D Jackel.
\newblock Backpropagation applied to handwritten zip code recognition.
\newblock \emph{Neural computation}, 1\penalty0 (4):\penalty0 541--551, 1989.

\bibitem[LeCun et~al.(1998)LeCun, Bottou, Bengio, and Haffner]{lecun1998gradient}
Yann LeCun, L{\'e}on Bottou, Yoshua Bengio, and Patrick Haffner.
\newblock Gradient-based learning applied to document recognition.
\newblock \emph{Proceedings of the IEEE}, 86\penalty0 (11):\penalty0 2278--2324, 1998.

\bibitem[Lee et~al.(2015)Lee, Zhang, Fischer, and Bengio]{lee2015difference}
Dong-Hyun Lee, Saizheng Zhang, Asja Fischer, and Yoshua Bengio.
\newblock Difference target propagation.
\newblock In \emph{Machine Learning and Knowledge Discovery in Databases: European Conference, ECML PKDD 2015, Porto, Portugal, September 7-11, 2015, Proceedings, Part I 15}, pp.\  498--515. Springer, 2015.

\bibitem[Li et~al.(2021)Li, Pogodin, Sutherland, and Gretton]{li2021self}
Yazhe Li, Roman Pogodin, Danica~J Sutherland, and Arthur Gretton.
\newblock Self-supervised learning with kernel dependence maximization.
\newblock \emph{Advances in Neural Information Processing Systems}, 34:\penalty0 15543--15556, 2021.

\bibitem[Liang et~al.(2021)Liang, Glossner, Wang, Shi, and Zhang]{liang2021pruning}
Tailin Liang, John Glossner, Lei Wang, Shaobo Shi, and Xiaotong Zhang.
\newblock Pruning and quantization for deep neural network acceleration: A survey.
\newblock \emph{Neurocomputing}, 461:\penalty0 370--403, 2021.

\bibitem[Lillicrap et~al.(2014)Lillicrap, Cownden, Tweed, and Akerman]{lillicrap2014random}
Timothy~P Lillicrap, Daniel Cownden, Douglas~B Tweed, and Colin~J Akerman.
\newblock Random feedback weights support learning in deep neural networks.
\newblock \emph{arXiv preprint arXiv:1411.0247}, 2014.

\bibitem[Lorenzen et~al.(2022)Lorenzen, Igel, and Nielsen]{lorenzen2022information}
Stephan~Sloth Lorenzen, Christian Igel, and Mads Nielsen.
\newblock Information bottleneck: Exact analysis of (quantized) neural networks.
\newblock In \emph{International Conference on Learning Representations}, 2022.
\newblock URL \url{https://openreview.net/forum?id=kF9DZQQrU0w}.

\bibitem[L{\"o}we et~al.(2019)L{\"o}we, O'Connor, and Veeling]{lowe2019putting}
Sindy L{\"o}we, Peter O'Connor, and Bastiaan Veeling.
\newblock Putting an end to end-to-end: Gradient-isolated learning of representations.
\newblock \emph{Advances in neural information processing systems}, 32, 2019.

\bibitem[Ma et~al.(2020)Ma, Lewis, and Kleijn]{ma2020hsic}
Wan-Duo~Kurt Ma, JP~Lewis, and W~Bastiaan Kleijn.
\newblock The hsic bottleneck: Deep learning without back-propagation.
\newblock In \emph{Proceedings of the AAAI conference on artificial intelligence}, volume~34, pp.\  5085--5092, 2020.

\bibitem[Marquez et~al.(2018)Marquez, Hare, and Niranjan]{marquez2018deep}
Enrique~S. Marquez, Jonathon~S. Hare, and Mahesan Niranjan.
\newblock Deep cascade learning.
\newblock \emph{IEEE Transactions on Neural Networks and Learning Systems}, 29\penalty0 (11):\penalty0 5475--5485, 2018.
\newblock \doi{10.1109/TNNLS.2018.2805098}.

\bibitem[Masarczyk et~al.(2024)Masarczyk, Ostaszewski, Imani, Pascanu, Mi{\l}o{\'s}, and Trzcinski]{masarczyk2024tunnel}
Wojciech Masarczyk, Mateusz Ostaszewski, Ehsan Imani, Razvan Pascanu, Piotr Mi{\l}o{\'s}, and Tomasz Trzcinski.
\newblock The tunnel effect: Building data representations in deep neural networks.
\newblock \emph{Advances in Neural Information Processing Systems}, 36, 2024.

\bibitem[Moosavi-Dezfooli et~al.(2017)Moosavi-Dezfooli, Fawzi, Fawzi, and Frossard]{moosavi2017universal}
Seyed-Mohsen Moosavi-Dezfooli, Alhussein Fawzi, Omar Fawzi, and Pascal Frossard.
\newblock Universal adversarial perturbations.
\newblock In \emph{Proceedings of the IEEE conference on computer vision and pattern recognition}, pp.\  1765--1773, 2017.

\bibitem[Moraitis et~al.(2022)Moraitis, Toichkin, Journ{\'e}, Chua, and Guo]{moraitis2022softhebb}
Timoleon Moraitis, Dmitry Toichkin, Adrien Journ{\'e}, Yansong Chua, and Qinghai Guo.
\newblock Softhebb: Bayesian inference in unsupervised hebbian soft winner-take-all networks.
\newblock \emph{Neuromorphic Computing and Engineering}, 2\penalty0 (4):\penalty0 044017, 2022.

\bibitem[Mosca \& Magoulas(2017)Mosca and Magoulas]{mosca2017deep}
Alan Mosca and George~D Magoulas.
\newblock Deep incremental boosting.
\newblock \emph{arXiv preprint arXiv:1708.03704}, 2017.

\bibitem[Mostafa et~al.(2018)Mostafa, Ramesh, and Cauwenberghs]{mostafa2018deep}
Hesham Mostafa, Vishwajith Ramesh, and Gert Cauwenberghs.
\newblock Deep supervised learning using local errors.
\newblock \emph{Frontiers in neuroscience}, 12:\penalty0 608, 2018.

\bibitem[N{\o}kland(2016)]{nokland2016direct}
Arild N{\o}kland.
\newblock Direct feedback alignment provides learning in deep neural networks.
\newblock \emph{Advances in neural information processing systems}, 29, 2016.

\bibitem[N{\o}kland \& Eidnes(2019)N{\o}kland and Eidnes]{nokland2019training}
Arild N{\o}kland and Lars~Hiller Eidnes.
\newblock Training neural networks with local error signals.
\newblock In \emph{International conference on machine learning}, pp.\  4839--4850. PMLR, 2019.

\bibitem[OpenAI(2023)]{openai2023gpt4}
OpenAI.
\newblock Gpt-4 technical report, 2023.

\bibitem[Peer et~al.(2022)Peer, Keulen, Stabinger, Piater, and Rodriguez-sanchez]{peer2022improving}
David Peer, Bart Keulen, Sebastian Stabinger, Justus Piater, and Antonio Rodriguez-sanchez.
\newblock Improving the trainability of deep neural networks through layerwise batch-entropy regularization.
\newblock \emph{Transactions on Machine Learning Research}, 2022.
\newblock ISSN 2835-8856.
\newblock URL \url{https://openreview.net/forum?id=LJohl5DnZf}.

\bibitem[Pogodin \& Latham(2020)Pogodin and Latham]{pogodin2020kernelized}
Roman Pogodin and Peter Latham.
\newblock Kernelized information bottleneck leads to biologically plausible 3-factor hebbian learning in deep networks.
\newblock \emph{Advances in Neural Information Processing Systems}, 33:\penalty0 7296--7307, 2020.

\bibitem[Rumelhart et~al.(1986)Rumelhart, Hinton, and Williams]{rumelhart1986learning}
David~E Rumelhart, Geoffrey~E Hinton, and Ronald~J Williams.
\newblock Learning representations by back-propagating errors.
\newblock \emph{nature}, 323\penalty0 (6088):\penalty0 533--536, 1986.

\bibitem[Salakhutdinov \& Hinton(2007)Salakhutdinov and Hinton]{salakhutdinov2007learning}
Ruslan Salakhutdinov and Geoff Hinton.
\newblock Learning a nonlinear embedding by preserving class neighbourhood structure.
\newblock In \emph{Artificial intelligence and statistics}, pp.\  412--419. PMLR, 2007.

\bibitem[Saxe et~al.(2018)Saxe, Bansal, Dapello, Advani, Kolchinsky, Tracey, and Cox]{michael2018on}
Andrew~Michael Saxe, Yamini Bansal, Joel Dapello, Madhu Advani, Artemy Kolchinsky, Brendan~Daniel Tracey, and David~Daniel Cox.
\newblock On the information bottleneck theory of deep learning.
\newblock In \emph{International Conference on Learning Representations}, 2018.
\newblock URL \url{https://openreview.net/forum?id=ry_WPG-A-}.

\bibitem[Siddiqui et~al.(2023)Siddiqui, Krueger, LeCun, and Deny]{siddiqui2023blockwise}
Shoaib~Ahmed Siddiqui, David Krueger, Yann LeCun, and St{\'e}phane Deny.
\newblock Blockwise self-supervised learning at scale.
\newblock \emph{arXiv preprint arXiv:2302.01647}, 2023.

\bibitem[Simonyan \& Zisserman(2014)Simonyan and Zisserman]{simonyan2014very}
Karen Simonyan and Andrew Zisserman.
\newblock Very deep convolutional networks for large-scale image recognition.
\newblock \emph{arXiv preprint arXiv:1409.1556}, 2014.

\bibitem[Sridharan \& Kakade(2008)Sridharan and Kakade]{sridharan2008information}
Karthik Sridharan and Sham~M Kakade.
\newblock An information theoretic framework for multi-view learning.
\newblock 2008.

\bibitem[Sriperumbudur et~al.(2011)Sriperumbudur, Fukumizu, and Lanckriet]{JMLR:v12:sriperumbudur11a}
Bharath~K. Sriperumbudur, Kenji Fukumizu, and Gert~R.G. Lanckriet.
\newblock Universality, characteristic kernels and rkhs embedding of measures.
\newblock \emph{Journal of Machine Learning Research}, 12\penalty0 (70):\penalty0 2389--2410, 2011.
\newblock URL \url{http://jmlr.org/papers/v12/sriperumbudur11a.html}.

\bibitem[Stiles \& Jernigan(2010)Stiles and Jernigan]{stiles2010basics}
Joan Stiles and Terry~L Jernigan.
\newblock The basics of brain development.
\newblock \emph{Neuropsychology review}, 20\penalty0 (4):\penalty0 327--348, 2010.

\bibitem[Tian et~al.(2020)Tian, Krishnan, and Isola]{tian2020contrastive}
Yonglong Tian, Dilip Krishnan, and Phillip Isola.
\newblock Contrastive multiview coding.
\newblock In \emph{Computer Vision--ECCV 2020: 16th European Conference, Glasgow, UK, August 23--28, 2020, Proceedings, Part XI 16}, pp.\  776--794. Springer, 2020.

\bibitem[Tishby \& Zaslavsky(2015)Tishby and Zaslavsky]{tishby2015deep}
Naftali Tishby and Noga Zaslavsky.
\newblock Deep learning and the information bottleneck principle.
\newblock In \emph{2015 ieee information theory workshop (itw)}, pp.\  1--5. IEEE, 2015.

\bibitem[Tishby et~al.(2000)Tishby, Pereira, and Bialek]{tishby2000information}
Naftali Tishby, Fernando~C Pereira, and William Bialek.
\newblock The information bottleneck method.
\newblock \emph{arXiv preprint physics/0004057}, 2000.

\bibitem[Tschannen et~al.(2020)Tschannen, Djolonga, Rubenstein, Gelly, and Lucic]{Tschannen2020On}
Michael Tschannen, Josip Djolonga, Paul~K. Rubenstein, Sylvain Gelly, and Mario Lucic.
\newblock On mutual information maximization for representation learning.
\newblock In \emph{International Conference on Learning Representations}, 2020.
\newblock URL \url{https://openreview.net/forum?id=rkxoh24FPH}.

\bibitem[Vaswani et~al.(2017)Vaswani, Shazeer, Parmar, Uszkoreit, Jones, Gomez, Kaiser, and Polosukhin]{vaswani2017attention}
Ashish Vaswani, Noam Shazeer, Niki Parmar, Jakob Uszkoreit, Llion Jones, Aidan~N Gomez, {\L}ukasz Kaiser, and Illia Polosukhin.
\newblock Attention is all you need.
\newblock \emph{Advances in neural information processing systems}, 30, 2017.

\bibitem[Veit et~al.(2016)Veit, Wilber, and Belongie]{veit2016residual}
Andreas Veit, Michael~J Wilber, and Serge Belongie.
\newblock Residual networks behave like ensembles of relatively shallow networks.
\newblock \emph{Advances in neural information processing systems}, 29, 2016.

\bibitem[Wang et~al.(2022)Wang, Tian, Xiong, Wu, Fu, Cao, Kang, and Wang]{wang2022evolving}
Fan Wang, Hao Tian, Haoyi Xiong, Hua Wu, Jie Fu, Yang Cao, Yu~Kang, and Haifeng Wang.
\newblock Evolving decomposed plasticity rules for information-bottlenecked meta-learning.
\newblock \emph{Transactions on Machine Learning Research}, 2022.
\newblock ISSN 2835-8856.
\newblock URL \url{https://openreview.net/forum?id=6qMKztPn0n}.

\bibitem[Wang et~al.(2023{\natexlab{a}})Wang, Li, Yaras, Zhu, Balzano, Hu, and Qu]{wang2023understanding}
Peng Wang, Xiao Li, Can Yaras, Zhihui Zhu, Laura Balzano, Wei Hu, and Qing Qu.
\newblock Understanding deep representation learning via layerwise feature compression and discrimination.
\newblock \emph{arXiv preprint arXiv:2311.02960}, 2023{\natexlab{a}}.

\bibitem[Wang \& Isola(2020)Wang and Isola]{wang2020understanding}
Tongzhou Wang and Phillip Isola.
\newblock Understanding contrastive representation learning through alignment and uniformity on the hypersphere.
\newblock In \emph{International Conference on Machine Learning}, pp.\  9929--9939. PMLR, 2020.

\bibitem[Wang et~al.(2020)Wang, Ni, Song, Yang, and Huang]{wang2020revisiting}
Yulin Wang, Zanlin Ni, Shiji Song, Le~Yang, and Gao Huang.
\newblock Revisiting locally supervised learning: an alternative to end-to-end training.
\newblock In \emph{International Conference on Learning Representations}, 2020.

\bibitem[Wang et~al.(2021)Wang, Jian, Masoomi, Ioannidis, and Dy]{wang2021revisiting}
Zifeng Wang, Tong Jian, Aria Masoomi, Stratis Ioannidis, and Jennifer Dy.
\newblock Revisiting hilbert-schmidt information bottleneck for adversarial robustness.
\newblock \emph{Advances in Neural Information Processing Systems}, 34:\penalty0 586--597, 2021.

\bibitem[Wang et~al.(2023{\natexlab{b}})Wang, Zhan, Gong, Shao, Ioannidis, Wang, and Dy]{wang2023dualhsic}
Zifeng Wang, Zheng Zhan, Yifan Gong, Yucai Shao, Stratis Ioannidis, Yanzhi Wang, and Jennifer Dy.
\newblock Dualhsic: Hsic-bottleneck and alignment for continual learning.
\newblock \emph{arXiv preprint arXiv:2305.00380}, 2023{\natexlab{b}}.

\bibitem[Wu et~al.(2021)Wu, Lin, Chen, and Chen]{wu2021associated}
Dennis~YH Wu, Dinan Lin, Vincent Chen, and Hung-Hsuan Chen.
\newblock Associated learning: an alternative to end-to-end backpropagation that works on cnn, rnn, and transformer.
\newblock In \emph{International Conference on Learning Representations}, 2021.

\bibitem[Xiong et~al.(2020)Xiong, Ren, and Urtasun]{xiong2020loco}
Yuwen Xiong, Mengye Ren, and Raquel Urtasun.
\newblock Loco: Local contrastive representation learning.
\newblock \emph{Advances in neural information processing systems}, 33:\penalty0 11142--11153, 2020.

\bibitem[Zhang et~al.(2022)Zhang, Bengio, and Singer]{zhang2022all}
Chiyuan Zhang, Samy Bengio, and Yoram Singer.
\newblock Are all layers created equal?
\newblock \emph{The Journal of Machine Learning Research}, 23\penalty0 (1):\penalty0 2930--2957, 2022.

\end{thebibliography}
\bibliographystyle{tmlr}

\newpage
\appendix
\section{Related Work}
In the main text, related work for information bottleneck and HSIC-based analysis is presented due to space constraints.
Here, we introduce related studies addressing the drawbacks of E2E training and investigating layer-wise training.

\paragraph{Problems in backpropagation.}
Backpropagation technique \citep{rumelhart1986learning, lecun1989backpropagation} is a foundation of the training methods of current machine learning.
While this mechanism has underpinned recent successes in machine learning, there are still challenges in the following three perspectives: computational problem, biological problem, and interpretability problem \citep{duan2022training}. 

First, it has computational problems with memory usage and the difficulty of parallel computing.
The outputs of intermediate layers are necessary for computing the derivative of the earlier layers; therefore, they should be put on the memory besides the parameter information of the neural network.
This can oppress memory usage limitations and lead to the need to compromise the network size and the quality of training data.
To alleviate this problem, a check-pointing technique that recomputes the update from checkpoints in the backward step \citep{chen2016training, gruslys2016memory, kusumoto2019graph} and network compression techniques, such as pruning and quantization, are proposed \citep{krishnamoorthi2018quantizing, blalock2020state, liang2021pruning}; however, they are still the trade-off between memory usage and training time or test accuracy.
In addition, there is an update locking problem, which prevents updating the weights until the forward and backward path of the downstream modules is completely completed, preventing the model parallelism \citep{krizhevsky2014one, huang2019gpipe, gomez2022interlocking}.
 
Second, training with backpropagation is questioned from a biologically plausible perspective.
Biological learning rules are considered to be local in terms of both space and time, and there is a gap from the backpropagation because it satisfies neither of them \citep{crick1989recent, baldi2017learning}.
Specifically, if the actual brain adopts the backpropagation-like learning style, we observe various gaps, for example, forward and backward propagation must alternate, the error must be propagated throughout the entire model, the update path must be symmetric because the same weights are used in backward propagation as in forward pass, and continuous values must be represented at the binary value synapses. 
A Hebbian neural network is one of the techniques to handle the problem of global learning rules \citep{moraitis2022softhebb, wang2022evolving, journe2023hebbian}.
Hebbian-like learning methods enable local learning by increasing the weight between presynaptic neurons and postsynaptic neurons when both are activated.
Target propagation \citep{bengio2014auto, lee2015difference, bartunov2018assessing} trains each block to regress the target assigned for each block. 
The key idea of target propagation is backpropagating the target with the approximated inverse of the block instead of backpropagating error with the chain rule of the gradient. 
This inverted target is generated with the auxiliary networks; therefore, target propagation can alleviate the symmetric weight problem of backpropagation and the problem of gradient instability. 
A different approach that solves the problem of weight symmetry is feedback alignment (FA) \citep{lillicrap2014random}, which uses the fixed random weight matrix to transport the updated information to the earlier layers.
FA alleviated the difficulty of weight symmetry but still has the backward locking problem that the parameter weights cannot be updated until the backward pass has been completed.
Direct feedback alignment (DFA) \citep{nokland2016direct} passes the gradient information directly to each layer from the output layer so that the weight update can be completed simultaneously. 
 
Finally, training models with backpropagation have a problem of interpretability.  
The trained network with E2E learning is a black box and difficult to understand its behavior, for example, unexpected performance deterioration is observed in certain situations and the trained model is difficult to be in under control \citep{goodfellow2014explaining, moosavi2017universal}. 
In particular, the intermediate layer's behavior is controlled only through the outputs of downstream layers, and these behaviors are difficult to understand and specify.
This fact is one of the reasons that makes understanding the benefits of backpropagation difficult.

\paragraph{Layer-wise training.}
Layer-wise training was originally studied in the context of getting good initialization to train multi-layer neural networks successfully \citep{hinton2006fast, bengio2006greedy}.
With advancements in the model architecture and the normalization techniques, layer-wise training has gained attention as one of the training methods to address the aforementioned problems with backpropagation.
The classification-based loss is adopted in each layer \citep{mostafa2018deep, belilovsky2019greedy, belilovsky2020decoupled}, in contrast, the similarity-based loss is used \citep{kulkarni2017layer, nokland2019training, siddiqui2023blockwise}.
The essential problem of layer-wise training is that the previous layer cannot use the information of the layer behind it at all. 
Under the setting that backpropagation within a local block is allowed, \citet{xiong2020loco} has tackled this problem so that the information of the subsequent layer can gradually flow to the previous layer by training it by shifting it one layer at a time, using two pairs of layers as one block.
\citet{gomez2022interlocking} extends this idea of ``overlapping local updates'' in a general way, proposing a learning method that balances the high parallelism of layer-wise training with the high prediction accuracy of E2E learning.

Signal propagation \citep{kohan2023signal} has a shared characteristic with similarity-base layer-wise training in that it projects the input and class information into the same representation space and compares them in this space.
Associated learning (AL) \citep{kao2021associated, wu2021associated} is similar to signal propagation in that it forwards inputs and class labels in the same direction and compares them.
The difference with signal propagation is that AL does not share the parameters of the networks used for forwarding inputs and class labels.
Recently, the Forward-Forward algorithm has been proposed by Hinton and attracted attention \citep{hinton2022forward}. 
This learning method shares ideas with signal propagation and AL, but it can enforce the subsequent layers to learn the different features from the previous one.
We investigated this possibility in \secref{appendix:forward-forward} in the appendix.

Another direction of this research focuses on the similarity between layer-wise training and ensemble learning \citep{mosca2017deep, huang2018learning}. 
\citet{huang2018learning} proposed the incremental learning methods of ResNet \citep{he2016deep} based on research that ResNet can be interpreted as an ensemble of many shallow networks of different lengths \citep{veit2016residual}. 
They also gave a theoretical analysis of this method based on generalization bound for boosting algorithms.

\citet{wang2020revisiting} adds reconstruction error term to the local objective function from the information-theoretic perspective. 
They hypothesized that layer-wise training aggressively discards the input-related information, which leads to the performance drop from the E2E learning. 
Our study has been inspired by their work, yet differs in that we employ the information-theoretic tools for analyzing middle layers from the perspective of E2E training, rather than proposing a layer-wise method.
Additionally, while their work asserted the effect of increasing $I(X; Z)$ in the layer-wise training, we argue that there still exists a performance gap compared with E2E training in terms of the diversity of roles across layers (see \secref{sec:control_hsic}).


\section{Further Descriptions for Layer-wise Training}

\subsection{Details of Training}
\label{appendix:detail_training}

\paragraph{Size of local module.}
The distinction between the E2E training and layer-wise training allows the continuous middle choice by changing the range granularity of gradient propagation. 
In various recent studies on local loss training, the entire model is divided into multiple blocks and performs backpropagation within those blocks, instead of strictly decomposed up to the layer level.
This approach, which could be termed ``block-wise training'', can balance the trade-offs between the high accuracy of E2E training and the computational efficiency of layer-wise training, such as memory consumption and parallel computational efficiency.
This paper, however, does not aim to propose a practical training method; instead, it primarily focuses on analyzing the reason why E2E training achieves higher accuracy compared to layer-wise training, without considering block-wise training. 

\paragraph{Sequential vs simultaneous training.} \label{sec:sequential_simultaneous}
In the layer-wise training, there are two major training methods:
one sequentially trains layers, adding a new layer on top of fully trained previous modules \citep{belilovsky2019greedy, huang2018learning, marquez2018deep}; and the other method, for the given model architecture, simultaneously trains each layer based on the local loss of each module in a single forward pass.
The sequential training can dynamically decide the model depth depending on the current performance of the validation set; however, it takes more time and achieves lower performance than when each layer is trained simultaneously \citep{lowe2019putting, wang2020revisiting, siddiqui2023blockwise}.
They show that the later modules have fallen into overfitting, and this can be attributed to the regularizing effect from the noisy input from not fully trained earlier modules in the simultaneous training. 
Additionally, the main focus of this study lies in its comparison with E2E learning; therefore, we primarily consider the setting of simultaneous training, which forward inputs throughout the entire model as E2E training does.
As for the sequential layer-wise training, refer to \secref{sec:sequential_experiments} in the appendix for experimental results.

\subsection{Layer-wise Training Methods}
\label{appendix:layer-wise_methods}

We conducted experiments in \secref{sec:performance_gap} to confirm the performance gap between E2E training and the various layer-wise training methods.
In this section, we introduce the loss functions and methods used in the experiments.
These methods correspond to the names listed in the table \ref{table:layer-wise_cifar10} and table \ref{table:layer-wise_cifar100}.

\paragraph{Cross-Entropy Loss (CE).}
The most naive approach is to apply the same cross-entropy loss (CE) as those used in E2E learning. 
In this case, the class probabilities are calculated using a linear or multi-layer projection head $g_l$ from the intermediate representations, so the local loss is calculated as
\begin{align}
\label{eq:ce}
\Ls_l (\vtheta_l) = \frac{1}{m}\sum_{i=1}^{m} \left( - \sum_{k=1}^c \1\left(y_i = k\right) \log \frac{\exp \evh_{l, i, k}}{\sum_{j=1}^c \exp \evh_{l, i, j}} \right),
\end{align}
where $\1$ is an indicator function.
Let us remind that the $\vh_{l, i} \in \R^{c}$ is the output of $l$-th layer's auxiliary network for input $\vx_i$, and it is calculated as $\vh_{l, i} = (g_l \circ f_l) \left( \operatorname{StopGrad}(\vz_{l-1, i}) \right)$.

\paragraph{Supervised Contrastive Loss (SupCon).}
As a case of optimizing the similarity-base loss at each layer, supervised contrastive (SupCon) loss \citep{khosla2020supervised} is used in our experiment.
The intermediate representations of an input pair are brought closer together if they belong to the same class and moved apart if they come from different classes.
Before training, the data augmentation is performed to assert there is at least one sample of the same class in the batch.
Let $N$ be the size of the mini-batch; the new mini-batch of size $2N$ is created with augmented samples $\vx_{2i-1}, \vx_{2i}$ from the original $\vx_i$. 
In addition, let $P(i) \subset [2N]$ be the set of indices of samples that belong to the same class as $\vx_i$, except for $i$ itself. 
The loss is as follows:
\begin{align} \label{eq:sup_con}
\Ls_{l, \text{mini}} (\vtheta_l) = \frac{1}{2N} \sum_{i = 1}^{2N} - \frac{1}{|P(i)|} \sum_{p \in P(i)} \log \frac{\exp \left( \vh_{l, i}^\top \vh_{l, p} / \tau \right)}{\sum_{a \in [2N] \setminus \{i\}} \exp \left( \vh_{l, i}^\top \vh_{l, a} / \tau \right)},
\end{align}
where $\tau$ is a hyperparameter for softmax temperature, and $\Ls_{l, \text{mini}}$ denotes the loss for the mini-batch.
In this case, the label information $y$ is not explicitly forwarded, but the class information is given through the other training input belonging to the same class.

\paragraph{Similarity Matching Loss (Sim).}
We use the similarity matching loss from \citet{nokland2019training} as a similarity-based local loss.
While they showed that combining the similarity-based loss and classification-based loss yields a layer-wise trained model with better performance, since the purpose of this study is the analysis of the properties of the trained model, only the similarity-based one is considered. 
Given outputs after the auxiliary networks $\mH_l = \left( \vh_{l, 1}, \ldots, \vh_{l, m} \right)^\top \in \R^{m \times d_l^{\prime}}$ and the one-hot encoded label information $\mY_l = \left(\vy_1, \ldots, \vy_m \right)^\top$, the loss is calculated as:
\begin{align}
\Ls_l(\vtheta_l) = \| \operatorname{sim}\left( \mH_l \right) - \operatorname{sim}\left( \mY_l \right) \|_{F}^2,
\end{align}
where $\operatorname{sim}(\mH_l)$ measures the similarities between each rows of $\mH_l$.
Specifically, each element of $\operatorname{sim}(\mH_l)$ is
\begin{align}
\operatorname{sim}(\mH_l)_{i, j} = \frac{ \tilde{\vh}_{l, i}^\top \tilde{\vh}_{l, j}}{\| \tilde{\vh}_{l, i} \|_2 \| \tilde{\vh}_{l, j} \|_2},
\end{align}
where $\tilde{\vh}_{l, i}$ is mean-centered $\vh_{l, i}$. 
We calculate $\operatorname{sim}(\mY_l)$ in the similar manner.

While both this similarity matching loss and supervised contrastive loss are the similarity-based local loss, one major difference is that the auxiliary network $g$ is composed of one or more convolution layers instead of a linear or MLP \citep{nokland2019training}.
The output $\vh$ is obtained by taking the standard deviation for each feature map after the convolutional auxiliary network and concatenating them. 

\paragraph{Signal Propagation (SP).}
Signal propagation is one of the layer-wise training methods without relying on the auxiliary networks \citep{kohan2023signal}.
It trains directly each layer so that the similarities of its outputs would be close to the similarities among the corresponding class labels determined dynamically by the structure of the network.
To forward the class information $y$ with the same network structure as the input $\vx$, an auxiliary network is used to map $y$ to the same input space as $\vx$.
This network is required only at the network input, which is different from the auxiliary networks in other layer-wise methods, so the parameter size becomes small. 

We denote by $\mS_l = \left( \vs_{l, 1}, \ldots, \vs_{l, m} \right)^\top \in \R^{m \times d_l}$ the class information of each example forwarded until the $l$-th layer, which is called signal. 
Given the intermediate representations $\mZ_l = \left( \vz_{l, 1}, \ldots, \vz_{l, m} \right)^\top \in \R^{m \times d_l}$, we train $f_l$ so that the similarities between the elements of $\mS_{l-1}$ and $\mZ_l$ is close. 
That is, each layer is trained so that the layer's outputs would be in the spacial relationships based on the similarity of the class information forwarded up to that layer.
The similarities among the representations $\mZ_l$ are calculated by dot product and row-wise softmax function, like the attention matrix calculation in the transformer \citep{vaswani2017attention}.
The loss for signal propagation is specifically calculated as follows:
\begin{align}
\Ls_l(\vtheta_l) = \frac{1}{m} \sum_{i=1}^m \sum_{j=1}^m 
\left( - \left(\operatorname{sim}_s \left(\mS_{l-1} \right)\right)_{i, j} \log \left( \operatorname{sim}_z \left(\mZ_l \right)\right)_{i, j} \right),
\end{align}
where 
\begin{align}
\operatorname{sim}_z \left(\mZ_l \right) = \operatorname{RowWiseSoftmax}\left( \frac{\mZ_l \mZ_l^\top}{d_l} \right).
\end{align}
There can be two types of the $\operatorname{sim}_s$ settings. 
In the original implementation \citep{kohan2023signal}, the top several, e.g. six, examples with the highest similarity are set to one, and the rest are set to zero for each example. 
Another naive choice is to apply row-wise softmax normalization similar to $\operatorname{sim}_z$.
The former is denoted as ``SP (hard)'' and the latter as ``SP (soft)''.
This loss can also be thought of as optimizing the Kullback-Leibler divergence between the empirical distributions of class information's similarity and the output similarity for each example.

\section{Forward-Forward Algorithm} \label{appendix:forward-forward}
In this section, we investigate the Forward-Forward algorithm, which is one of the layer-wise training methods, proposed by \citet{hinton2022forward}.
In this algorithm, the input $\vx$ is combined with label information $y$ using techniques like one-hot encoding. 
This combined input is passed through the network, where each layer is trained to distinguish whether the input pair $(\vx, y)$ is a correct label pair or not. 
Specifically, when the input pair $(\vx, y)$ is correctly labeled, the L2 norm of the layer's output is enlarged; conversely, if it is incorrectly labeled, the norm is diminished.

The key point of this method is the normalization of the outputs at each layer. 
Each layer trains the similarity itself of the input pair through its output norm and normalizes the output's L2 norm before passing it to the subsequent layer.
This normalization aligns with the norms of the representations; therefore, in the training of the subsequent layers, the features learned in the previous layer cannot be utilized.
As a result, stacking layers takes on explicit significance in extracting complex information about the input pair similarity.
In the case of layer-wise training methods based on the spacial relationship in the representation space, even if a layer has a meaningless parameter, e.g. an identity mapping, the performance does not deteriorate as the representation from the previous layer can be directly reused.
However, \textbf{the forward-forward algorithm constantly imposes a new role for feature extraction on each layer}, potentially enabling the diversity of each layer's role, similar to the advantage of E2E training discussed in the main text.  
This is why this algorithm is specifically addressed in this section.

\begin{figure}[t]
    \centering
    \includegraphics[width=0.35\textwidth]{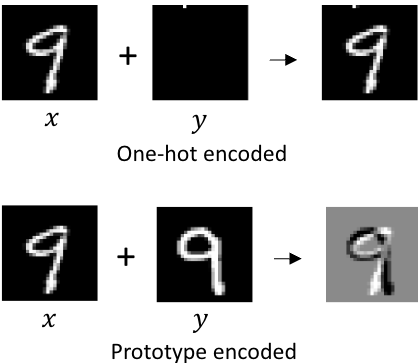}
    \caption{Possible methods for embedding label information in the forward-forward algorithm.
    \textbf{Top:} The label is one-hot encoded to the top-left of images.
    \textbf{Bottom:} The actual image that represents the label, i.e. prototype, is combined with the input.
    }
    \label{fig:ff_mnist}
\end{figure}

\begin{table}[t]
    \centering
    \caption{Test accuracy of the original forward-forward algorithm and modified version. For MNIST training, a toy MLP model that has four layers is used, while a toy model with two convolutional layers and two linear layers is used for CIFAR10.}
    \begin{tabular}{lcc}
    \toprule
    \multicolumn{1}{l}{}           & \multicolumn{1}{c}{MNIST} & \multicolumn{1}{c}{CIFAR10} \\
    \midrule
    \multicolumn{1}{c}{Original}        & \multicolumn{1}{c}{96.4}  & \multicolumn{1}{c}{9.4}    \\
    \multicolumn{1}{c}{Ours} & \multicolumn{1}{c}{\textbf{98.0}}  & \multicolumn{1}{c}{\textbf{48.7}}    \\
    \bottomrule
    \end{tabular}
    \label{table:ff_acc}
\end{table}

An inherent issue arises in the original setting, where passing through more convolutional layers leads to the loss of class information. 
This is because one-hot encoded information can be easily lost by the convolutional operation. 

To address this, we explored two modifications.
\begin{enumerate}
\item
Use class prototypes instead of one-hot encoding to preserve the label information through convolution.
Figure \ref{fig:ff_mnist} illustrates this change. 
These prototypes could be fixed images for each class, randomly selected images from each class, or the average images within each class.
\item
Change the location of label information mixing. 
In the original setting, the class information is only mixed before the entire model. 
We revised this approach to mix class information just before the currently trained layer and forward $\vx$ and $y$ separately through the model to mitigate the loss of class information.
\end{enumerate}

Table \ref{table:ff_acc} shows the test accuracy for the MNIST dataset and CIFAR10 dataset when using the original forward-forward algorithm and our modified version. 
The test accuracy is improved from the original algorithm, and particularly for CIFAR10, the issue that the model cannot be trained with the forward-forward algorithm has been resolved. 
Although this modification has led to improved accuracy to some extent, there is still a performance gap compared to other existing layer-wise training methods. 
Moreover, despite expecting that the forward-forward algorithm could extract more complex features as the layers are stacked, the accuracy deteriorates, rather than improves.
These results suggest that despite these modifications to prevent loss of the class information, the information about inputs or class labels is being lost or inadequately propagated as the network deepens.

\section{Mutual Information and HSIC} \label{appendix:mi_hsic}
\subsection{Mutual Information}
The concept of assessing the amount of information for one random variable by observing the other one is mutual information.
The mutual information between two random variables $X$ and $Y$ is defined as,
\begin{align*}
I(X; Y) \coloneqq \int \int p_{XY}(x, y) \log \frac{p_{XY}(x, y)}{p_X(x)p_Y(y)} dxdy,
\end{align*}
where $p_{XY}$ is the density functions of the joint distribution, and $p_X$ and $p_Y$ are the density functions of the marginal distributions.
It equals the KL divergence between a joint distribution and the product of marginal distributions.
In the field of machine learning, mutual information has been widely employed in contexts such as representation learning and feature selection \citep{estevez2009normalized, beraha2019feature, hjelm2018learning, tian2020contrastive}.

\subsection{Difficulty in Neural Network Analysis based on Mutual Information}
\label{appendix:difficulty_mi}
Analyzing mutual information within the deep neural network is an intriguing research subject; however, it faces two challenges, as stated below. 

The first one is that mutual information can be infinite under certain formulations, which makes the comparison of mutual information meaningless. 
We naturally consider that the input $X$ and intermediate representations $Z$ are continuous random variables in a real vector space.
However, if $Z$ is determined by $X$ as $Z = f(X)$ in a deterministic way, and if the entropy of $Z$ is finite, the conditional density distribution turns into a Dirac delta function, leading to infinite mutual information.
\citet{amjad2019learning} demonstrates that $I(X; Z)$ becomes infinite under certain assumptions about the activation function.
One of the methods to avoid this problem is adding noise to the middle representation, but it does not align with the actual behavior of the network.
Note that the actual $Z$ is a discrete variable because of the precision of the floating-point, but estimating the probability of falling into all bins is intractable.

The second problem is the difficulty of estimating mutual information.
Measuring mutual information is an intrinsically difficult problem due to the high dimensionality and density ratio. 
The initial attempt to analyze the DNN from the IB perspective used the binning-based estimator for measuring mutual information; however, it does not scale to high-dimensional settings and the IB phenomenon was not observed for the unbounded ReLU activations, which is commonly used in the machine learning implementations. 
Other approaches to estimating mutual information are a non-parametric KDE estimator and a k-NN-based estimator. 
Using these algorithms still suffers from the high dimensionality in the network representations.

\subsection{HSIC and Chi-Squared Mutual Information}
\label{appendix:hsic_chi2}

To our knowledge, there is no clear relationship between mutual information and HSIC. 
In terms of the independence of two random variables $X, Y$, $I(X; Y) = 0$ if and only if $X$ and $Y$ are independent.  
The independence of $X$ and $Y$ implies $\operatorname{HSIC}(X, Y) = 0$, and other direction holds when the product kernel $k l$ is \textit{characteristic} \citep{fukumizu2007kernel, JMLR:v12:sriperumbudur11a}, i.e., the probability distribution on the input space corresponds one-to-one with the mean in the RKHS.
This property does not hold with a linear and polynomial kernel, however, the Gaussian RBF kernel used in this study is characteristic.

Moreover, besides independence, this characteristic kernel gives a further interpretation of HSIC values. 
\citet{fukumizu2007kernel} demonstrated that the normalized HSIC value equals the $\chi_2$-divergence between the joint distribution and the product of the marginal distributions, which is a variant of mutual information.
Let us review the definition of the $\chi_2$-divergence and chi-squared mutual information.
\begin{dfn}[$\chi_2$-divergence]
Let $P_X$ and $Q_X$ be the distributions of random variable $X$, with the probability density function $p_X(\rx)$ and $q_X(\rx)$, respectively.
The $\chi_2$-divergence between these two distributions is
\begin{align}
\chi_{2}(P_X \| Q_X) = \int \int \left( \frac{p_X(x)}{q_X(x)} - 1 \right)^2 q_X(x) dx . 
\end{align}
\end{dfn}
\begin{dfn}[chi-squared mutual information]
Let $X$ and $Y$ be the two random variables.
If their joint distribution is $P_{XY}$ and the marginal distributions are $P_{X}$ and $P_{Y}$, with the probability density function $p_{XY}(x, y)$, $p_X(x)$, and $p_Y(y)$, the chi-squared mutual information between these two random variables is defined by
\begin{align}
I_{\chi_{2}}(X; Y) = \int \int \left( \frac{p_{XY}(x, y)}{p_X(x)p_Y(y)} - 1 \right)^2 p_X(x)p_Y(y) dx dy. 
\end{align}
\end{dfn}

Mutual information is defined as the KL divergence between the joint distribution and the marginal distributions, and the used divergence is different from the chi-squared mutual information.
Both of these distances for measures belong to the class known as $f$-divergence.
These values cannot be directly computed from the other value, but the following inequality relationship holds.
\begin{theorem}[\citet{gibbs2002choosing}, Theorem 5]
The KL divergence and $\chi_2$-divergence satisfy
\begin{align}
\operatorname{KL}(P_X \| Q_X) \leq \log \left[ 1 + \chi_2(P_X \| Q_X) \right].
\end{align}
\end{theorem}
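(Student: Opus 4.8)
The plan is to reduce the inequality to a single application of Jensen's inequality after rewriting both divergences in terms of the likelihood ratio $r(x) \coloneqq p_X(x)/q_X(x)$. First I would dispose of the degenerate case: if $P_X$ is not absolutely continuous with respect to $Q_X$ then $\operatorname{KL}(P_X \| Q_X) = \chi_2(P_X \| Q_X) = +\infty$ and the bound holds trivially, so I may assume absolute continuity, under which $r$ is well defined $Q_X$-almost everywhere. The first substantive step is then to express the KL divergence as an expectation under $P_X$:
\begin{align*}
\operatorname{KL}(P_X \| Q_X) = \int p_X(x) \log r(x) \, dx = \E_{P_X}\!\left[ \log r(X) \right].
\end{align*}

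Next I would invoke the concavity of $\log$ via Jensen's inequality, obtaining $\E_{P_X}[\log r(X)] \leq \log \E_{P_X}[r(X)]$. The remaining task is to identify $\E_{P_X}[r(X)]$ with $1 + \chi_2(P_X \| Q_X)$. Expanding the square in the definition of the $\chi_2$-divergence and using $\int p_X = \int q_X = 1$ gives
\begin{align*}
\chi_2(P_X \| Q_X) = \int \frac{p_X(x)^2}{q_X(x)} \, dx - 1 = \E_{P_X}[r(X)] - 1,
\end{align*}
where the final equality holds because $\E_{P_X}[r(X)] = \int p_X(x)\, r(x)\, dx = \int p_X(x)^2/q_X(x)\, dx$. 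Substituting this into the Jensen bound yields $\operatorname{KL}(P_X \| Q_X) \leq \log(1 + \chi_2(P_X \| Q_X))$ at once.

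There is no deep obstacle here: the entire argument is Jensen's inequality expressed in the right coordinates, and the only point requiring care is the measure-theoretic bookkeeping. The main thing to verify is that the reformulation and the Jensen step are legitimate. When $\chi_2(P_X \| Q_X) = \infty$ the right-hand side is $+\infty$ and the claim is vacuous; in the finite case $\chi_2(P_X \| Q_X) < \infty$ forces $r \in L^2(Q_X)$, hence $\E_{P_X}[r(X)] = \E_{Q_X}[r(X)^2] < \infty$, so Jensen is applied to an integrable random variable and all integrals converge. I would close by noting that equality holds precisely when $r$ is $P_X$-almost surely constant, i.e. $P_X = Q_X$, which is exactly the case where both divergences vanish.
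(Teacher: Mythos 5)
Your proof of the inequality is correct and coincides with the standard argument behind the cited result: the paper itself states this theorem without proof, deferring to \citet{gibbs2002choosing}, whose Theorem 5 is established by precisely your two steps, namely Jensen's inequality $\E_{P_X}[\log r(X)] \leq \log \E_{P_X}[r(X)]$ and the identity $\E_{P_X}[r(X)] = \int p_X^2/q_X = 1 + \chi_2(P_X \| Q_X)$. One caveat on your closing remark, which is not part of the stated theorem but is factually wrong: equality in Jensen only forces $r$ to be $P_X$-almost surely constant, and that does \emph{not} imply $P_X = Q_X$ --- for $P_X$ uniform on $[0,1/2]$ and $Q_X$ uniform on $[0,1]$ one has $r \equiv 2$ on the support of $P_X$, so $\operatorname{KL}(P_X \| Q_X) = \log 2 = \log\left[1 + \chi_2(P_X \| Q_X)\right]$ with $\chi_2(P_X\|Q_X)=1$, even though $P_X \neq Q_X$.
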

From this inequality, we can also obtain the relationship for mutual information:
\begin{align}
I(X; Y) \leq \log \left[ 1 + I_{\chi_{2}}(X; Y) \right] \leq I_{\chi_{2}}(X; Y).
\end{align}

We introduced chi-squared mutual information and discussed its relationship with the standard mutual information.
Next, we give the result that normalized HSIC equals this chi-squared mutual information.
\begin{theorem}[\citet{fukumizu2007kernel}, Theorem 4]
Let $X$ and $Y$ be the two random variables on $\gX$ and $\gY$, and let $\gH_{\kappa_1}$ and $\gH_{\kappa_2}$ be the associated RKHSs with the kernels $\kappa_1: \gX \times \gX \rightarrow \R$ and $\kappa_2: \gY \times \gY \rightarrow \R$, respectively.
Assume that there exists probability density functions $p_{XY}$, $p_X$, and $p_Y$ for the joint distribution and the marginal distributions.
If $(\gH_{\kappa_1} \otimes \gH_{\kappa_2}) + \R$ is dense in $L^2(P_X \otimes P_Y)$, and the Hilbert-Schmidt norm of the operator $\gC_{XX}^{-1/2}\gC_{XY}\gC_{YY}^{-1/2}$ exists, then we have
\begin{align}
\operatorname{nHSIC}(X, Y) = \|\gC_{XX}^{-1/2}\gC_{XY}\gC_{YY}^{-1/2}\|_{HS}^2 =  I_{\chi_2}(X; Y).
\end{align}
\end{theorem}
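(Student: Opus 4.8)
The plan is to realize both sides of the claimed identity as Hilbert-Schmidt norms of one and the same integral operator acting between $L^2$ spaces, and then to match them directly. Write $\phi(x,y) = p_{XY}(x,y)/(p_X(x) p_Y(y))$ for the density ratio. Expanding the square in the definition of $I_{\chi_2}$ and using $\int\int \phi(x,y)\, p_X(x) p_Y(y)\, dx\, dy = 1$, I first record the baseline identity
$$I_{\chi_2}(X;Y) = \int\int \phi(x,y)^2\, p_X(x) p_Y(y)\, dx\, dy - 1,$$
so the task reduces to computing the second moment of $\phi$ under $P_X \otimes P_Y$.

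Next I would introduce the integral operator $K : L^2(P_Y) \to L^2(P_X)$ given by $(Kg)(x) = \int \phi(x,y) g(y)\, p_Y(y)\, dy$, i.e. the kernel operator with kernel $\phi$ relative to the marginals. Its squared Hilbert-Schmidt norm is exactly $\|K\|_{HS}^2 = \int\int \phi(x,y)^2\, p_X(x) p_Y(y)\, dx\, dy = 1 + I_{\chi_2}(X;Y)$ by the baseline identity. The key structural fact is that the constant function is an eigenfunction of eigenvalue $1$, since $\int \phi(x,y)\, p_Y(y)\, dy = p_X(x)/p_X(x) = 1$, and a symmetric computation shows $K$ maps centered functions to centered functions. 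Splitting off this one-dimensional eigendirection yields $\|K\|_{HS}^2 = 1 + \|K_0\|_{HS}^2$, where $K_0$ is the restriction of $K$ to the orthogonal complement of constants; hence $\|K_0\|_{HS}^2 = I_{\chi_2}(X;Y)$, and everything now hinges on identifying $\|K_0\|_{HS}^2$ with $\|\gC_{XX}^{-1/2} \gC_{XY} \gC_{YY}^{-1/2}\|_{HS}^2$.

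The bridge is a bilinear-form computation. For $f \in \gH_{\kappa_1}$, $g \in \gH_{\kappa_2}$ the cross-covariance operator satisfies $\langle f, \gC_{XY} g\rangle = \Cov(f(X), g(Y)) = \E[\tilde f(X)\tilde g(Y)]$, where tildes denote centering in the respective $L^2$ space; rewriting this expectation as an integral against $p_{XY}$ and inserting $\phi$ gives $\E[\tilde f(X)\tilde g(Y)] = \langle \tilde f, K_0 \tilde g\rangle_{L^2(P_X)}$. The factors $\gC_{XX}^{-1/2}$ and $\gC_{YY}^{-1/2}$ whiten $f(X)$ and $g(Y)$, converting covariances into correlations, so that the matrix entries of $\gC_{XX}^{-1/2}\gC_{XY}\gC_{YY}^{-1/2}$ in orthonormal RKHS bases coincide with matrix entries of $K_0$ on the whitened centered RKHS functions inside $L^2(P_X)$ and $L^2(P_Y)$. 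Invoking the hypothesis that $(\gH_{\kappa_1}\otimes\gH_{\kappa_2}) + \R$ is dense in $L^2(P_X\otimes P_Y)$ ensures these whitened images exhaust the centered subspaces, so no Hilbert-Schmidt mass is lost and the two norms agree.

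The main obstacle is precisely this last identification: the operators $\gC_{XX}^{-1/2}$ and $\gC_{YY}^{-1/2}$ are in general unbounded, so the whitening is not literally an everywhere-defined isometry. I would handle this by working on finite-dimensional RKHS subspaces on which the inverse square roots are well defined, computing the Hilbert-Schmidt contribution there exactly, and then passing to the supremum over such subspaces; the assumed existence of $\|\gC_{XX}^{-1/2}\gC_{XY}\gC_{YY}^{-1/2}\|_{HS}$ together with the denseness condition guarantees that this supremum converges and recovers exactly $\|K_0\|_{HS}^2 = I_{\chi_2}(X;Y)$, completing the proof.
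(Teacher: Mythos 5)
The paper does not actually prove this statement---it quotes Theorem~4 of \citet{fukumizu2007kernel} verbatim as an imported result---so the only meaningful comparison is with the original source, and your sketch essentially reproduces that proof: realize $I_{\chi_2}(X;Y)$ as the squared Hilbert--Schmidt norm of the integral operator with kernel $p_{XY}/(p_X p_Y)$ (equivalently the conditional-expectation operator $g \mapsto \E[g(Y)\,|\,X=\cdot\,]$), split off the constant eigendirection carrying mass $1$, and transfer the remaining norm to $\gC_{XX}^{-1/2}\gC_{XY}\gC_{YY}^{-1/2}$ via the whitening isometries $\gC_{XX}^{1/2}f \mapsto f - \E[f(X)]$, with denseness guaranteeing these isometries map onto the centered subspaces $L^2_0(P_X)$ and $L^2_0(P_Y)$ so no Hilbert--Schmidt mass is lost. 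Your outline is correct; the only details to make explicit are that the stated joint denseness of $(\gH_{\kappa_1}\otimes\gH_{\kappa_2})+\R$ in $L^2(P_X\otimes P_Y)$ yields the marginal denseness you actually use (apply the contractive projection that integrates out the other variable), and that your finite-dimensional exhaustion for the unbounded $\gC_{XX}^{-1/2}$ is the standard rigorization, equivalent to defining the normalized operator through Baker's factorization $\gC_{XY}=\gC_{XX}^{1/2}V\gC_{YY}^{1/2}$ with $\|V\|\leq 1$.
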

According to Proposition 5 in \citet{fukumizu2009kernel}, the characteristic property of the kernel and the condition part of this theorem, where $(\gH_{\kappa_1} \otimes \gH_{\kappa_2}) + \R$ is dense in $L^2(P_X \otimes P_Y)$, are equivalent.
Therefore, if the product kernel $k l$ is characteristic, then nHSIC equals the chi-squared mutual information.
This theorem justifies the use of normalized HSIC to analyze the information flow within the neural network model.

When considering continuous random variables for input $X$ and representation $Z$, we can use the RBF kernel and this theorem.
In contrast, since the label $Y$ is a discrete random variable, which does not have a density function, it necessitates a re-definition of the mutual information. 
While the relationship between nHSIC and chi-squared mutual information in this setting is not explicit, it can be verified that the condition similar to that in this theorem is satisfied when a linear kernel is applied to label $Y$.

\begin{prop}
Let $Y$ be a random variable that takes values on the one-hot encoded classes $\gY = \{\ve_1, \ldots, \ve_c\}$, and $\gH$ is an associated RKHS with a linear kernel $\kappa$ on $\R^c \times \R^c$.
For the two probability distributions $P$ and $Q$, if $\E_{P}[f(Y)] = E_{Q}[f(Y)]$ for any $f \in \gH$, then $P = Q$.
\end{prop}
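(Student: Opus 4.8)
The plan is to identify the RKHS explicitly, reduce the hypothesis to an equality of mean vectors, and then exploit the one-hot support to recover the full distribution.

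First I would characterize $\gH$. For the linear kernel $\kappa(\vv, \vw) = \vv^\top \vw$ on $\R^c \times \R^c$, the reproducing property forces every element of $\gH$ to be a linear functional: writing $f_{\va}(\vy) = \va^\top \vy$ for $\va \in \R^c$, one has $f_{\va}(\vy) = \langle f_{\va}, \kappa(\cdot, \vy) \rangle_{\gH} = \va^\top \vy$, so $\gH = \{ f_{\va} : \va \in \R^c \}$ is exactly the space of linear maps through the origin.

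Next I would translate the hypothesis. Applying the assumption to each $f_{\va}$ and using linearity of expectation,
\begin{align}
\va^\top \E_P[Y] = \E_P[f_{\va}(Y)] = \E_Q[f_{\va}(Y)] = \va^\top \E_Q[Y].
\end{align}
Since this holds for every $\va \in \R^c$, I conclude $\E_P[Y] = \E_Q[Y]$.

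Finally I would use the one-hot structure to finish. Setting $p_k = P(Y = \ve_k)$ and $q_k = Q(Y = \ve_k)$, the mean vectors read $\E_P[Y] = \sum_{k=1}^c p_k \ve_k = (p_1, \ldots, p_c)^\top$ and likewise for $Q$, so the coordinate-wise equality $p_k = q_k$ follows immediately. As both $P$ and $Q$ are supported on $\{\ve_1, \ldots, \ve_c\}$ and assign matching mass to each atom, $P = Q$. There is no real analytic obstacle here; the only point requiring care is recognizing that mean-matching already suffices, which holds precisely because the one-hot encoding makes the mean embedding $\E[Y]$ coincide with the probability mass function itself. This is exactly the sense in which the linear kernel, though not characteristic on all of $\R^c$, becomes characteristic once the support is restricted to the one-hot vectors; the reduction would break for a generic, non-one-hot labeling.
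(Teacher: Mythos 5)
Your proposal is correct and takes essentially the same route as the paper: both identify $\gH$ with the linear functionals $\vy \mapsto \va^\top \vy$ and apply the coordinate projections $\ve_i^\top(\cdot)$, which on one-hot support recover $P(Y=\ve_i)$ directly. Your version just makes the intermediate step $\E_P[Y]=\E_Q[Y]$ explicit before reading off the atoms.
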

\begin{proof}
Use the condition for $\ve_i \in \gH = \{\vx \mapsto \vv^\top \vx | \vv \in \R^c \}$, leading to $P(Y = \ve_i) = \E_{P}[\ve_i^\top Y] = \E_Q[\ve_i^\top Y] = Q(Y = \ve_i)$.
The same holds for $\ve_1, \ldots, \ve_c$, and we get $P = Q$.
\end{proof}

\section{Further Details} \label{appendix:further_discussion}
\subsection{Lower Bound of Mutual Information} \label{appendix:variational_bound}
In the main text, we briefly mentioned the relationship between the lower bound of mutual information and the optimization of cross-entropy loss in \eqref{eq:variational_bound}.
Here, we discuss this relationship in a bit more detail.

The non-negativity of KL divergence yields the following lower bound on mutual information \citep{barber2004algorithm}.
\begin{align}
I(Z; Y) \geq \E_{p(y, \vz)} \left[ \log q(y | \vz) \right] + H(Y),
\end{align}
where the equality holds when $q(y | \vz)$ equals to $p(y | \vz)$.

The initial term becomes
\begin{align}
\E_{p(y, \vz)}\left[ \log q(y | \vz) \right]
&= \int dy d\vz p(y, \vz) \log q(y | \vz) \\
\label{eq:p(y,z)}
&= \int dy d\vz d\vx p(y |\vx) p(\vz | \vx) p (\vx) \log q(y | \vz) \\
&= \int dy d\vx p(y, \vx) \int d\vz p(\vz | \vx) \log q(y | \vz) \\
\label{eq:vb_expectation}
&= \E_{p(y, \vx)} \left[ \E_{p(\vz | \vx)} \left[ \log q(y | \vz) \right] \right].
\end{align}
In \eqref{eq:p(y,z)}, we use $p(y, \vz|\vx) = p(y|\vx)p(\vz|\vx)$ derived from the Markov chain $Y \rightarrow X \rightarrow Z$.
This takes a similar form to the negative cross-entropy loss.
In fact, the population form of cross-entropy loss in \eqref{eq:ce} is as follows:
\begin{align}
\label{eq:ce_expectation}
\Ls_{CE} = - \E_{p(y, \vx)} \left[ \log \frac{\exp \hat{\evy_y}}{\sum_{j=1}^c \exp \hat{\evy_j}} \right],
\end{align}
where $\hat{\vy}$ is the output of the classifier, in this case, the auxiliary classifier's output $\vh$.
We are now considering the case where $\vz$ is determined by $\vx$; therefore, $\vz$ can be expressed as $\vz = f(\vx)$, where $f$ is the backbone model up to the target layer.
At this time, $\E_{p(\vz | \vx)} \left[ \log q(y | \vz) \right] = \log q(y|f(\vx))$ holds in \eqref{eq:vb_expectation}.
If we represent the variational distribution $q$ by the output of the auxiliary classifier $g$ and softmax function, this value corresponds to the cross entropy-loss \eqref{eq:ce_expectation}; consequently, minimizing the cross-entropy loss implies maximizing the lower bound in \eqref{eq:variational_bound}. 

\subsection{Soft Nearest Neighbor Loss and HSIC} \label{appendix:snfl_hsic}
In this section, we highlight the relationship between the HSIC values of the intermediate representation and the soft nearest neighbor loss \citep{salakhutdinov2007learning, frosst2019analyzing}.
We consider the advantages of the middle layer's compression in E2E training through the connection with soft nearest neighbor loss.
Note that while nHSIC is used in the information plane plot, the discussion here is based on HSIC.

\begin{dfn}[soft nearest neighbor loss, original form \citet{frosst2019analyzing}]
The soft nearest neighbor loss at temperature $T$, for a batch of $b$ samples $(x, y)$ is
\begin{align}
\ell_{sn}^\prime (x, y, T) \coloneqq - \frac{1}{b} \sum_{i \in 1, \ldots, b} \log \left( \frac{\displaystyle\sum_{\substack{j \in 1, \ldots, b \\ j \neq i \\ y_i = y_j}} \exp \left( - \frac{\|x_i - x_j \|_2^2}{T} \right)}{\displaystyle\sum_{\substack{k \in 1, \ldots, b \\ k \neq i}} \exp \left( - \frac{\|x_i - x_k \|_2^2}{T} \right)} \right),
\end{align}
where $x$ can be either the raw input or its representation in some hidden layer. 
\end{dfn}

\citet{frosst2019analyzing} analyze the structure of middle representations through this value.
They argue that \textbf{maximizing the entanglement of representations}, i.e., maximizing soft nearest neighbor loss, is beneficial for the discrimination in the last layer.
In the main text, we observed the compression phase of HSIC value in the middle layers even with the E2E training.
Although reducing $\operatorname{nHSIC}(X, Z)$ in the middle layers is understandable because it leads to the decrease in the final layer's $\operatorname{nHSIC}(X, Z)$, reducing $\operatorname{nHSIC}(Y, Z)$ is intriguing as it reduces the necessary information for final prediction.   
The main theorem in this section suggests that minimizing the value of $\operatorname{HSIC}(Y, Z)$ contributes to increasing soft nearest neighbor loss; therefore, based on \citet{frosst2019analyzing}, it promotes the entanglement of representation and leads to the final better performance.

In this section, we consider the population version of soft nearest neighbor loss and the intermediate representation for $x$; we use $\vz$.
We sample $n$ instances from the conditional distribution of the class instead of counting the examples that belong to the same class for a fixed batch.
The dataset is assumed to be class-balanced in this analysis.
We define the soft nearest neighbor loss as follows.
\begin{dfn}[soft nearest neighbor loss]
For a class-balanced dataset with $c$ classes, the soft nearest neighbor loss at temperature at $T$ is defined as:
\begin{align}
\ell_{sn} (n, T) \coloneqq - \underset{{\substack{ (y, \vz) \sim p(y, \vz) \\ \{\vz_i^+ \}_{i=1}^n \overset{i.i.d.}{\sim} p(\vz | y) \\ \{\vz_i^{-} \}_{i=1}^{cn} \overset{i.i.d.}{\sim} p(\vz) }}}{\E} \log \left( \frac{\displaystyle\sum_{i=1}^n \exp \left( - \frac{\|\vz - \vz_i^+ \|_2^2}{T} \right)}{\displaystyle\sum_{i=1}^{cn} \exp \left( - \frac{\|\vz - \vz_i^- \|_2^2}{T} \right)} \right).
\end{align}
\end{dfn}
Please note that since the examples from the numerator do not appear in the denominator summation, the loss is not necessarily non-negative.

When the number of positive samples $n$ goes to infinity, and consequently the total number of data $cn$ also goes to infinity, such that the inside of the logarithm converges to the expected value, the following result is obtained regarding the soft nearest neighbor loss and HSIC.
\begin{theorem}[Formal version of Theorem \ref{thm:sn_hsic}.]
\label{thm:sn_hsic_formal}
Suppose the representation $\vz$ is bounded as $\|\vz\|_2 \leq M$ for some constant $M > 0$. 
The RBF kernel $k(\vv, \vw) = \exp \left( - \|\vv - \vw\|_2^2 / (2 \sigma^2) \right)$ and the linear kernel $l(\vv, \vw) = \vv^\top \vw$ are used for $Z$ and $Y$ respectively. 
As $n$ goes to infinity, the left-hand side of the following equation converges, and we have
\begin{align}
\lim_{n \rightarrow \infty} \ell_{sn}(n, 2\sigma^2) \geq \log c - c \exp \left( \frac{2M^2}{\sigma^2} \right) \operatorname{HSIC}(Y, Z).
\end{align}
\end{theorem}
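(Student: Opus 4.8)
The plan is to dispose of the sampling limit first and then reduce everything to an expectation of a log-ratio of kernel means. Writing $\rho := \exp(-2M^2/\sigma^2)$ and setting $T = 2\sigma^2$ so that each summand $\exp(-\|\vz - \vz_i^{\pm}\|_2^2/T)$ becomes exactly the RBF kernel value $k(\vz, \vz_i^{\pm})$, I would introduce the conditional and marginal kernel means
\begin{align}
A(y, \vz) := \E_{\vz^{+}\sim p(\vz|y)}[k(\vz, \vz^{+})], \qquad B(\vz) := \E_{\vz^{-}\sim p(\vz)}[k(\vz, \vz^{-})].
\end{align}
By the strong law of large numbers the numerator divided by $n$ converges to $A(y,\vz)$ and the denominator divided by $cn$ converges to $B(\vz)$ almost surely, so the bracketed ratio tends to $A(y,\vz)/(c\,B(\vz))$. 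The boundedness $\|\vz\|_2 \le M$ is what makes this rigorous at the level of the expectation: since $k \in [\rho, 1]$ on the support, the ratio is trapped in $[\rho/c,\; 1/(c\rho)]$ uniformly in $n$, so its logarithm is bounded and the bounded convergence theorem gives
\begin{align}
\lim_{n\to\infty} \ell_{sn}(n, 2\sigma^2) = \log c - \E_{(y,\vz)\sim p(y,\vz)}\!\left[\log \frac{A(y,\vz)}{B(\vz)}\right].
\end{align}
This already settles the claimed convergence; it remains to upper bound the expectation of the log-ratio by $c\exp(2M^2/\sigma^2)\operatorname{HSIC}(Y,Z)$.

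Two ingredients drive the bound. First, I would compute $\operatorname{HSIC}(Y,Z)$ in closed form from \eqref{eq:hsic} using the one-hot linear kernel $l(Y,Y') = \1(Y = Y')$ and the class-balance assumption $\pi_y = 1/c$. Writing $\alpha_y := \E_{\vz,\vz'\sim p(\vz|y)}[k(\vz,\vz')]$ and $\beta_y := \E_{\vz\sim p(\vz|y)}[B(\vz)]$, the three HSIC terms collapse to
\begin{align}
\operatorname{HSIC}(Y,Z) = \frac{1}{c^2}\sum_{y=1}^{c}(\alpha_y - \beta_y), \qquad \text{so that} \qquad \E_{(y,\vz)}[A(y,\vz) - B(\vz)] = c\,\operatorname{HSIC}(Y,Z).
\end{align}
Second, the elementary inequality $\log t \le t - 1$ gives $\log(A/B) \le (A - B)/B$ pointwise, and the boundedness yields $\|\vz - \vz'\|_2^2 \le 4M^2$, hence $k(\vz,\vz') \ge \rho$ and $B(\vz) \ge \rho$, i.e. $1/B(\vz) \le \exp(2M^2/\sigma^2)$.

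Combining, I would argue $\E_{(y,\vz)}[(A-B)/B] \le \exp(2M^2/\sigma^2)\,\E_{(y,\vz)}[A - B] = c\exp(2M^2/\sigma^2)\operatorname{HSIC}(Y,Z)$, which together with the limit identity yields the theorem. The \emph{main obstacle} is precisely this last substitution: replacing $1/B$ by the uniform bound $\exp(2M^2/\sigma^2)$ is clean only when $A(y,\vz) \ge B(\vz)$, since $1/B \le 1/\rho$ multiplies the factor $A-B$ and the inequality $(A-B)/B \le (A-B)/\rho$ reverses when $A < B$. The delicate point is therefore to control the region where $\vz$ is less similar to its own class than to the bulk: one must either retain the discarded nonnegative remainder $\E[A\,(1/\rho - 1/B)]$ and show it dominates the quantity $\tfrac1\rho\,\E[B] - 1$ needed to upgrade $\sum_y \alpha_y$ to the full $\sum_y(\alpha_y - \beta_y)$, or run the estimate at the level of expectations rather than pointwise. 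As a consistency check, under $Z \perp Y$ one has $A \equiv B$ and $\operatorname{HSIC}(Y,Z) = 0$, so both sides reduce to $\log c$; the bound is tight at independence, which is reassuring for its validity in general.
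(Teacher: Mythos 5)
Your limit computation and your identity $\E_{(y,\vz)}[A-B]=c\,\operatorname{HSIC}(Y,Z)$ both match the paper: the strong law plus dominated/bounded convergence step is identical, and your closed form for $\operatorname{HSIC}(Y,Z)$ under the one-hot linear kernel and class balance is exactly the paper's Lemma~\ref{lem:hsic_yz}. The final step, however, is a genuine gap, and you have diagnosed it correctly yourself: the chain $\log(A/B)\le (A-B)/B\le (A-B)/\rho$ fails pointwise on the event $\{A<B\}$, where $A-B$ is negative and dividing by the smaller number $\rho$ reverses the inequality. Nothing in the hypotheses excludes representations $\vz$ that are on average more similar to the whole population than to their own class, so the discarded remainder cannot be waved away, and your proposal stops exactly where the remaining work lies.

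The ingredient your sketch is missing is a Jensen step. The paper first bounds the positive-pair term via $-\E_{(y,\vz)}\bigl[\log A(y,\vz)\bigr]\ge -\log\E_{(y,\vz)}\bigl[A(y,\vz)\bigr]=-\log\E_{p_{pos}}[k]$, so that the subsequent log-difference compares the deterministic constant $t=\E_{p_{pos}}[k]$ with $s=B(\vz)$ rather than the two random quantities $A(y,\vz)$ and $B(\vz)$. It then applies its Lemma~\ref{lem:log}, $\log t-\log s\le (t-s)/r$ for $r\le s\le t\le 1$ with $r=\exp(-2M^2/\sigma^2)$, and integrates over $\vz$, using $\int p(\vz)\bigl(t-B(\vz)\bigr)\,d\vz=\E_{p_{pos}}[k]-\E_{p(\vz)p(\vz')}[k]=c\,\operatorname{HSIC}(Y,Z)$ to conclude. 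This is precisely your ``run the estimate at the level of expectations'' option, implemented for the numerator only. Note, though, that the ordering issue you flagged does not fully vanish in the paper either: Lemma~\ref{lem:log} carries the hypothesis $s\le t$, i.e.\ $B(\vz)\le\E_{p_{pos}}[k]$ for every $\vz$, which the paper does not verify, so your obstacle is real and is effectively absorbed into an unchecked hypothesis of the published argument. Adding the Jensen step would let you reproduce the paper's proof; actually verifying (or circumventing) the ordering condition would go beyond it.
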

This result might raise concerns about providing a vacuous bound of the loss.
However, we can observe the following three facts:
1) $\ell_{sn}$ is not necessarily non-negative as mentioned earlier, 
2) the loss value becomes $\log c$ even in the poor representation where all vectors collapse to a single point, and
3) $\operatorname{HSIC}(Y, Z)$ remains at most $1/c$, as indicated by the following lemma \ref{lem:hsic_yz}.
These facts allow the meaningful bound by appropriately selecting the value of $\sigma$.
In the nHSIC analysis in this study, $\sigma$ is chosen so that $\sigma^2$ would be proportional to the dimensions of the representation space, resulting in the small values of HSIC itself.

We start with the following two lemmas before presenting the proof.
\begin{lemma}
\label{lem:hsic_yz}
Suppose that the label $y$ is one-hot encoded to $\vy \in \{0, 1\}^c$, and the linear kernel $l(\vv, \vw) = \vv^\top \vw$ is used for $Y$.
Then, the exact form of $\operatorname{HSIC}(Y, Z)$ is
\begin{align}
\operatorname{HSIC}(Y, Z) = \frac{1}{c} \left( \E_{p_{pos}(\vz, \vz^\prime)} \left[ k(\vz, \vz^\prime) \right] - \E_{p(\vz)p(\vz^\prime)} \left[ k(\vz, \vz^\prime) \right] \right),
\end{align}
where $p_{pos}(\vz, \vz^\prime) = \sum_{y=1}^c p(y)p(\vz|y)p(\vz^\prime|y)$.
\end{lemma}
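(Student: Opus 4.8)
The plan is to compute $\operatorname{HSIC}(Y, Z)$ directly from its population formula \eqref{eq:hsic}, specializing to the linear kernel on the one-hot labels and the RBF kernel $k$ on representations. First I would unpack each of the three expectation terms in \eqref{eq:hsic} using $\kappa_2(Y, Y') = \vy^\top \vy'$. The crucial observation is that for one-hot encoded labels, $\vy^\top \vy' = \1(y = y')$, so the label kernel is simply an indicator of class agreement. This collapses each joint expectation over $(Y,Y')$ into a sum over the single shared class value, weighted by class probabilities.

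The key computational steps are as follows. For the first term, $\E_{XYX'Y'}[\kappa_1(X,X')\kappa_2(Y,Y')]$, I would condition on the shared class and write it as $\sum_{y} p(y)^2 \,\E_{p(\vz|y)p(\vz'|y)}[k(\vz,\vz')]$. For a class-balanced dataset, $p(y) = 1/c$, so this becomes $\frac{1}{c^2}\sum_y \E_{p(\vz|y)p(\vz'|y)}[k(\vz,\vz')] = \frac{1}{c}\E_{p_{pos}(\vz,\vz')}[k(\vz,\vz')]$, matching the first target term up to the prefactor $1/c$ by the definition $p_{pos}(\vz,\vz') = \sum_y p(y)p(\vz|y)p(\vz'|y)$. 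For the third product term, $\E_{XX'}[\kappa_1]\,\E_{YY'}[\kappa_2]$, I would compute $\E_{YY'}[\vy^\top\vy'] = \sum_y p(y)^2 = 1/c$, giving $\frac{1}{c}\E_{p(\vz)p(\vz')}[k(\vz,\vz')]$, which matches the second target term. The remaining task is to show the middle (cross) term of \eqref{eq:hsic} exactly cancels or combines correctly; I expect to verify that $\E_{X'}[\kappa_2(Y,Y')\mid Y]$-type conditioning yields $\E_{Y'}[\1(y=y')] = p(y) = 1/c$ for each $y$, so the cross term also evaluates to $\frac{2}{c}\E_{p(\vz)p(\vz')}[k(\vz,\vz')]$ after the label expectation is taken, and combining with the third term leaves the clean difference claimed.

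I would carry out the bookkeeping by substituting these evaluations into \eqref{eq:hsic}, collecting the first term as $\frac{1}{c}\E_{p_{pos}}[k]$ and the remaining two terms as $\left(-\frac{2}{c} + \frac{1}{c}\right)\E_{p(\vz)p(\vz')}[k] = -\frac{1}{c}\E_{p(\vz)p(\vz')}[k]$, yielding exactly
\begin{align*}
\operatorname{HSIC}(Y, Z) = \frac{1}{c}\left( \E_{p_{pos}(\vz,\vz')}[k(\vz,\vz')] - \E_{p(\vz)p(\vz')}[k(\vz,\vz')] \right).
\end{align*}
The main obstacle I anticipate is handling the cross term carefully: the general formula \eqref{eq:hsic} involves the nested conditional expectations $\E_{X'}[\kappa_1(X,X')\mid X]$ and $\E_{Y'}[\kappa_2(Y,Y')\mid Y]$, and one must correctly track which expectations factor through the Markov structure versus which remain coupled through the shared class label. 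Because $Z$ depends on the class only through the conditional $p(\vz \mid y)$, I would need to be precise that the expectation over $X'$ (equivalently $\vz'$) at fixed $Y'=y'$ uses $p(\vz'\mid y')$, and only after summing over $y'$ against $\kappa_2(Y,Y')=\1(y=y')$ does it reduce to the class-conditional marginal. Once the indicator structure of the linear kernel on one-hot labels is exploited consistently across all three terms, the cancellation of the prefactors is routine arithmetic.
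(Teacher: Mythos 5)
Your proposal is correct and follows essentially the same route as the paper's proof: expand the three terms of \eqref{eq:hsic}, use $l(\vy,\vy')=\1(y=y')$ for one-hot labels together with class balance $p(y)=1/c$, and collect the coefficients $\tfrac{1}{c}-\tfrac{2}{c}+\tfrac{1}{c}$ on the product-marginal term. The term-by-term evaluations (first term giving $\tfrac{1}{c}\E_{p_{pos}}[k]$, cross and product terms each contributing $\tfrac{1}{c}\E_{p(\vz)p(\vz')}[k]$ with the appropriate signs) match the paper's computation exactly.
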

This is a supervised setting of Theorem A.1 in \citet{li2021self}.
\begin{proof}
From \eqref{eq:hsic},
\begin{align}
\operatorname{HSIC}(Y, Z) 
&= \E_{p(\vy, \vz) p(\vy^\prime, \vz^\prime)}[k(\vz, \vz^\prime) l(\vy, \vy^\prime)]
- 2 \E_{p(\vy, \vz)} \left[ \E_{p(\vz^\prime)}[k(\vz, \vz^\prime) | \vz] \E_{p(\vy)}[l(\vy, \vy^\prime)|\vy] \right] \nonumber \\
\label{eq:hsic_yz}
& \quad + \E_{p(\vz)p(\vz^\prime)}[k(\vz, \vz^\prime)]\E_{p(\vy)p(\vy^\prime)}[l(\vy, \vy^\prime)].
\end{align}
Since the linear kernel $l$ is applied to the one-hot vector $\vy$,
\begin{align}
l(\vy, \vy^\prime) =
\begin{cases}
1 & (y = y^\prime) \\
0 & (\text{otherwise})
\end{cases}
= \1(y = y^\prime).
\end{align}
Using this, the first term of \eqref{eq:hsic_yz} is
\begin{align}
\E_{p(\vy, \vz) p(\vy^\prime, \vz^\prime)}[k(\vz, \vz^\prime) l(\vy, \vy^\prime)]
&= \frac{1}{c^2} \sum_{y=1}^c \sum_{y^\prime = 1}^c \E_{p(\vz|y) p(\vz^\prime|y^\prime)}\left[k(\vz, \vz^\prime) \1(y = y^\prime) \right] \\
&= \frac{1}{c^2} \sum_{y=1}^c \E_{p(\vz|y) p(\vz^\prime|y)}\left[k(\vz, \vz^\prime) \right] \\
&= \frac{1}{c} \E_{p_{pos}(\vz, \vz^\prime)} [k(\vz, \vz^\prime)].
\end{align}
In the last line, we used $p(y) = 1 / c, y \in \{1, \ldots, c\}$ and the definition of $p_{pos}$.

The second term is written as
\begin{align}
\E_{p(\vy, \vz)} \left[ \E_{p(\vz^\prime)}[k(\vz, \vz^\prime) | \vz] \E_{p(\vy)}[l(\vy, \vy^\prime)|\vy] \right]
&= \E_{p(\vy, \vz)} \left[ \E_{p(\vz^\prime)}[k(\vz, \vz^\prime) | \vz] \frac{1}{c} \right] \\
&= \frac{1}{c} \E_{p(\vz)} \left[ \E_{p(\vz^\prime)}[k(\vz, \vz^\prime) | \vz]\right] \\
&= \frac{1}{c} \E_{p(\vz) p(\vz^\prime)} \left[ k(\vz, \vz^\prime) \right].
\end{align}
Finally, the third term is
\begin{align}
\E_{p(\vz)p(\vz^\prime)}[k(\vz, \vz^\prime)]\E_{p(\vy)p(\vy^\prime)}[l(\vy, \vy^\prime)]
= \frac{1}{c} \E_{p(\vz)p(\vz^\prime)}[k(\vz, \vz^\prime)].
\end{align}
Combining these yields the conclusion of the lemma.
\end{proof}

\begin{lemma}
\label{lem:log}
Let $s, t$ be the real number satisfying $r \leq s \leq t \leq 1$, where $r > 0$ is the constant.
We have
\begin{align}
\log t - \log s \leq \frac{1}{r} \left( t - s \right).
\end{align}
\end{lemma}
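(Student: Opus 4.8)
The plan is to exploit the fact that the derivative of $\log$ is $1/u$, which is bounded above by $1/r$ throughout the relevant range. First I would write the left-hand side as an integral,
\begin{align}
\log t - \log s = \int_s^t \frac{1}{u} \, du,
\end{align}
so that the claim reduces to a pointwise bound on the integrand. Since $r \leq s \leq u$ for every $u \in [s, t]$, monotonicity of $1/u$ gives $1/u \leq 1/r$, and integrating this inequality over $[s, t]$ — whose length $t - s$ is nonnegative because $s \leq t$ — yields
\begin{align}
\int_s^t \frac{1}{u} \, du \leq \int_s^t \frac{1}{r} \, du = \frac{1}{r}(t - s),
\end{align}
which is exactly the desired bound.

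As alternatives, I would note two equivalent one-line routes. One applies the mean value theorem to $\log$ on $[s, t]$, giving $\log t - \log s = (t - s)/\xi$ for some $\xi \in (s, t)$; since $\xi \geq s \geq r$ one has $1/\xi \leq 1/r$, and multiplying by the nonnegative factor $t - s$ gives the conclusion. A second uses the elementary inequality $\log x \leq x - 1$ applied to $x = t/s$, which produces $\log t - \log s \leq (t - s)/s \leq (t - s)/r$, where the last step again invokes $s \geq r$ together with $t - s \geq 0$.

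There is essentially no obstacle here; the only point requiring care is bookkeeping of signs and which endpoint carries the lower bound. Specifically, one must use that it is $s$ (the smaller of the two), not $t$, that is bounded below by $r$, and that $t - s \geq 0$, so that integrating or multiplying through the estimate $1/u \leq 1/r$ preserves the inequality direction. I would also remark that the hypothesis $t \leq 1$ is not actually needed for this particular estimate, so the proof goes through under the weaker assumption $0 < r \leq s \leq t$.
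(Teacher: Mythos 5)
Your proposal is correct and is essentially the same argument as the paper's: the paper uses your second "one-line route" (the mean value theorem, giving $\log t - \log s = (t-s)/\xi$ with $\xi \in (s,t)$ and $1/\xi \leq 1/r$, treating $t = s$ separately), while your primary integral formulation is an equivalent way of bounding the derivative of $\log$ by $1/r$ that avoids the case split. Your observation that the hypothesis $t \leq 1$ is not needed for this estimate is also accurate; it is only used elsewhere in the surrounding argument where the kernel values are bounded above by $1$.
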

\begin{proof}
If $s < t$ holds, from the mean value theorem, there exists $c$ in $(s, t)$ such that  
\begin{align}
\log t - \log s = \frac{1}{c} \left( t - s \right).
\end{align}
Since $r < c$, we get $\log t - \log s < \frac{1}{r} (t - s)$.
In the case of $t = s$, both sides of the inequality are equal to zero, thus proving the lemma's statement.
\end{proof}

Next, we prove the theorem \ref{thm:sn_hsic_formal} using these lemmas.
\begin{proof}[Proof of theorem \ref{thm:sn_hsic_formal}.]
We first show the limit of the left-hand side exists.
\begin{align}
\ell_{sn} (n, 2\sigma^2) 
&= - \underset{{\substack{ (y, \vz) \sim p(y, \vz) \\ \{\vz_i^+ \}_{i=1}^n \overset{i.i.d.}{\sim} p(\vz | y) \\ \{\vz_i^{-} \}_{i=1}^{cn} \overset{i.i.d.}{\sim} p(\vz) }}}{\E} \log \left( \frac{ n \cdot \frac{1}{n} \displaystyle\sum_{i=1}^n k(\vz, \vz_i^+ )}{ cn \cdot \frac{1}{cn} \displaystyle\sum_{i=1}^{cn} k(\vz, \vz_i^-)} \right) \\
\label{eq:l_sn}
&= \log c - \underset{{\substack{ (y, \vz) \sim p(y, \vz) \\ \{\vz_i^+ \}_{i=1}^n \overset{i.i.d.}{\sim} p(\vz | y) }}}{\E} \log \left( \frac{1}{n} \displaystyle\sum_{i=1}^n k(\vz, \vz_i^+) \right) 
+ \underset{{\substack{ (y, \vz) \sim p(y, \vz) \\ \{\vz_i^{-} \}_{i=1}^{cn} \overset{i.i.d.}{\sim} p(\vz) }}}{\E} \log \left( \frac{1}{cn} \displaystyle\sum_{i=1}^{cn} k(\vz, \vz_i^-) \right).
\end{align}

Regarding the inside of the expected value in the second term, using the strong law of large numbers and the continuous mapping theorem from the continuity of the log function, we obtain the following:
\begin{align}
\label{eq:cmt}
\lim_{n \rightarrow \infty} \log \left( \frac{1}{n} \displaystyle\sum_{i=1}^n k(\vz, \vz_i^+) \right) 
=
\log \left( \underset{\vz^+ \sim p(\vz | y)}{\E} \left[ k(\vz, \vz^+) \right] \right) \quad a.s. \; .
\end{align}

From this convergence of almost everywhere and the fact that $\left| \log (\frac{1}{n} \sum_{i=1}^n k(\vz, \vz_i^+) ) \right|$ is bounded for any $n = 1, 2, \ldots$, which is derived from the assumption $\|\vz\|_2 \leq M$ for any $\vz$, the dominated convergence theorem yields
\begin{align}
\lim_{n \rightarrow \infty} \underset{{\substack{ (y, \vz) \sim p(y, \vz) \\ \{\vz_i^+ \}_{i=1}^n \overset{i.i.d.}{\sim} p(\vz | y) }}}{\E} \log \left( \frac{1}{n} \displaystyle\sum_{i=1}^n k(\vz, \vz_i^+) \right)
&= \underset{{\substack{ (y, \vz) \sim p(y, \vz) \\ \{\vz_i^+ \}_{i=1}^n \overset{i.i.d.}{\sim} p(\vz | y) }}}{\E} 
\lim_{n \rightarrow \infty} \log \left( \frac{1}{n} \displaystyle\sum_{i=1}^n k(\vz, \vz_i^+) \right) \\
&=
\underset{(y, \vz) \sim p(y, \vz)}{\E} 
\log \left( \underset{\vz^+ \sim p(\vz | y)}{\E} \left[ k(\vz, \vz^+) \right] \right).
\end{align}

Applying a similar reasoning to the third term of \eqref{eq:l_sn},
\begin{align}
\lim_{n \rightarrow \infty} \underset{{\substack{ (y, \vz) \sim p(y, \vz) \\ \{\vz_i^- \}_{i=1}^{cn} \overset{i.i.d.}{\sim} p(\vz) }}}{\E} \log \left( \frac{1}{cn} \displaystyle\sum_{i=1}^{cn} k(\vz, \vz_i^-) \right)
&=
\underset{(y, \vz) \sim p(y, \vz)}{\E} 
\log \left( \underset{\vz^- \sim p(\vz)}{\E} \left[ k(\vz, \vz^-) \right] \right).
\end{align}

Taking the limit of both side of \eqref{eq:l_sn} leads
\begin{align}
\lim_{n \rightarrow \infty} \ell_{sn}(n, 2\sigma^2) = \log c 
- \underset{p(y, \vz)}{\E} \log \left( \underset{p(\vz^\prime | y)}{\E} \left[ k(\vz, \vz^\prime) \right] \right)
+ \underset{p(y, \vz)}{\E} \log \left( \underset{p(\vz^\prime)}{\E} \left[ k(\vz, \vz^\prime) \right] \right)
\end{align}
In the remainder, we proceed to get the lower bound related to HSIC.
\begin{align}
\lim_{n \rightarrow \infty }\ell_{sn}(n, 2\sigma^2)
&= \log c - \sum_{y=1}^c \int d\vz p(\vz|y) \log \left(\int d\vz^\prime p(\vz^\prime | y) k(\vz, \vz^\prime) \right)
+ \int d\vz p(\vz) \log \left( \int dz^\prime p(\vz^\prime) k(\vz, \vz^\prime)  \right) \\
\label{eq:snnl_1}
&\geq \log c - \log \left( \E_{p_{pos} (\vz, \vz^\prime)}\left[k(\vz, \vz^\prime) \right]\right) + \int d\vz p(\vz) \log \left( \int dz^\prime p(\vz^\prime) k(\vz, \vz^\prime)  \right) \\
&= \log c - \int d\vz p(\vz) \left( \log \left( \E_{p_{pos} (\vz, \vz^\prime)}\left[k(\vz, \vz^\prime) \right]\right) - \log \left( \int dz^\prime p(\vz^\prime) k(\vz, \vz^\prime)  \right) \right).
\end{align}
\Eqref{eq:snnl_1} follows from Jensen's inequality.

We apply the lemma \ref{lem:log} to lower-bound the difference in logs.
As the inside of the exponential in kernel $k$ is non-negative, it is less than or equal to $1$. 
Given the assumption on the norm of the intermediate representation, $\|z\|_2 \leq M$, the value of kernel becomes greater than or equal to $\exp ( - (2M^2) / \sigma^2 )$.
Since we have $\exp(- (2M^2) / \sigma^2) \leq k(\vz, \vz^\prime) \leq 1$ for any $\vz, \vz^\prime$, we can further lower-bound it using lemma \ref{lem:log} as follows:
\begin{align}
\lim_{n \rightarrow \infty} \ell_{sn}(n, 2\sigma^2)
&\geq \log c - \int d\vz p(\vz) \left( \exp \left( \frac{2M^2}{\sigma^2} \right) \left( \E_{p_{pos} (\vz, \vz^\prime)}\left[k(\vz, \vz^\prime) \right] - \int dz^\prime p(\vz^\prime) k(\vz, \vz^\prime)  \right) \right) \\
&= \log c - \exp \left( \frac{2M^2}{\sigma^2} \right) \left( \E_{p_{pos} (\vz, \vz^\prime)}\left[k(\vz, \vz^\prime) \right] - \E_{p(\vz) p(\vz^\prime)}\left[ k(\vz, \vz^\prime) \right] \right).
\end{align}
Using the result of lemma \ref{lem:hsic_yz}, we conclude
\begin{align}
\lim_{n \rightarrow \infty} \ell_{sn}(n, 2\sigma^2)
&\geq \log c - c \exp \left( \frac{2M^2}{\sigma^2} \right) \operatorname{HSIC}(Y, Z).
\end{align}
\end{proof}

\section{Appendix for Experiments} \label{appendix:experiment_results}
\subsection{Details on Experimental Setup} \label{appendix:experiment_setup}
\subsubsection{Performance Gap Experiments} \label{appendix:performance_gap_setup}
Here we describe the experimental setup for the experiments in \secref{sec:performance_gap} and \secref{sec:linear_separability}, demonstrating the performance gap between E2E training and layer-wise training.

We conducted experiments on CIFAR10 and CIFAR100 \citep{krizhevsky2009learning}. 
Both CIFAR10 and CIFAR100 datasets consist of $50000$ training images and $10000$ test images, but the number of label types differs between $10$ and $100$.
For backbone models, VGG \citep{simonyan2014very} and ResNet \citep{he2016deep} are used in the experiments.
The VGG network is a combination of convolutional layers, batch-norm layers, and max-pooling layers, allowing for training layer-wisely for each convolutional layer. 
In contrast, ResNet is characterized by the skip-connections. 
The smallest unit containing a single skip connection, such as two convolution layers for ResNet18 and three convolution layers for ResNet50, is trained as one module in layer-wise training.
The middle layer's representation could have more dimensions than that of the last layer of the backbone network, and it would have computational difficulty to directly flatten the middle representation and calculate similarity on it or pass it to the linear or MLP auxiliary network. 
To handle this, we applied appropriate average pooling so that the output dimension size would be $2048$ before performing the calculation.
We trained the model for $400$ epochs with the Adam optimizer \citep{kingma2014adam}; the learning rate was selected from $\tau \in \{0.0005, 0.001\}$, and the learning rate scheduling was performed to decay the learning rate by $0.2$ at $[200, 300, 350, 375]$ epochs.
For SupCon loss, the model was trained for $1000$ epochs with a learning rate $\tau = 0.0005$ due to the long time to convergence, and the learning rate was decayed at $[700, 800, 900]$ by $0.2$. 
Furthermore, the hyperparameter for weight decay was chosen from $\eta \in \{0.0, 0.0005\}$. It prevents overfitting to the training data, and it was especially effective when using auxiliary networks with multiple layers.

\subsubsection{Normalized HSIC Experiments} \label{appendix:nhsic_setup}
In the nHSIC experiments in \secref{sec:hsic_dynamics},
RBF kernels are employed for input $X$ and intermediate representation $Z$, while a linear kernel is used for label $Y$.
The RBF kernel, $k(\vv, \vw) = \exp \left( - \|\vv - \vw\|_2^2 / (2 \sigma^2) \right)$, has a $\sigma$ value of $5 \sqrt{d}$ following the existing research \citep{ma2020hsic, wang2021revisiting, jian2022pruning, wang2023dualhsic}.
Additionally, we use an empirical estimator for nHSIC as $\operatorname{Tr}\left[ \mK \mH (\mK \mH + \epsilon m I_m)^{-1} \mL \mH (\mL \mH + \epsilon m I_m)^{-1} \right]$, where $\epsilon$ is a small constant; we set 1e-5 for this experiment.
Although the nHSIC estimator demonstrates consistency \citep{fukumizu2007kernel}, this study derives the estimator by averaging values based on mini-batches due to computational limitations.
Since the estimates calculated from the mini-batch are basically biased estimator, we avoid direct comparison of nHSIC values among layers and instead focus on the dynamics of nHSIC values.
Please note that the training dynamics of nHSIC values itself is the comparison of nHSIC values in the same layer.

\subsection{Comparison between E2E and Layer-wise for CIFAR100} 
Table \ref{table:layer-wise_cifar100} shows the results for E2E training and layer-wise training with the CIFAR100 dataset.
Similar to the case of CIFAR10, the performance of E2E training improves as the model size increases, whereas the performance enhancements of layer-wise training are limited.
In the case of directly optimizing the intermediate layers, as in the row of 0-layer, it was challenging to train effectively on the CIFAR100 dataset.

\begin{table*}[!t]
\centering
\caption{
Test accuracies of E2E training and layer-wise training methods trained on CIFAR100 dataset.
The ``Auxiliary'' column shows the size of auxiliary networks $g$ for layer-wise training.
For clarity, the highest accuracy among the different sizes of auxiliary networks is indicated in bold. 
For CE loss, the middle representations are converted to the space of class number's dimensions; therefore, it does not have 0-layer case.
}
\label{table:layer-wise_cifar100}
\begin{tabular}{ccccccccc}
\toprule
\multicolumn{1}{c}{}                          &                            & \multicolumn{2}{c}{E2E} & \multicolumn{5}{c}{Layer-wise}              \\
\cmidrule(rl){3-4} \cmidrule(rl){5-9} 
\multicolumn{1}{c}{Model}                          &                   Auxiliary         & CE & SupCon & CE & SupCon                 & Sim &  SP (Hard) & SP (Soft)       \\ \midrule
\multicolumn{1}{c}{\multirow{3}{*}{VGG11}}    & \multicolumn{1}{c}{0-layer} &  \multirow{3}{*}{$68.9$} &  \multirow{3}{*}{$64.2$}&  {---}   &  $39.0$& $1.7$& $36.5$ & $41.7$  \\
\multicolumn{1}{c}{}                          & \multicolumn{1}{c}{1-layer} &  &  & $\mathbf{60.3}$ & $55.3$& $\mathbf{51.9}$ & {---} & {---}   \\
\multicolumn{1}{c}{}                          & \multicolumn{1}{c}{2-layer} &  &  & $57.6$ &   $\mathbf{62.2}$& $50.9$ & {---} & {---}   \\ \midrule
\multicolumn{1}{c}{\multirow{3}{*}{ResNet18}} & \multicolumn{1}{c}{0-layer} &  \multirow{3}{*}{$73.5$}   &\multirow{3}{*}{$75.2$}&   {---}                        & $48.8$& $47.0$ & $14.4$ & $1.1$ \\
\multicolumn{1}{c}{}                          & \multicolumn{1}{c}{1-layer} &  &  & $\mathbf{64.9}$ & $62.4$& $53.0$ & {---}  & {---} \\
\multicolumn{1}{c}{}                          & \multicolumn{1}{c}{2-layer} &  &  & $62.3$ &    $\mathbf{69.8}$& $\mathbf{54.4}$ & {---} & {---} \\ \midrule
\multicolumn{1}{c}{\multirow{3}{*}{ResNet50}} & \multicolumn{1}{c}{0-layer} & \multirow{3}{*}{$76.3$}     &  \multirow{3}{*}{$78.1$}&   {---}                     & $52.5$& $21.0$ & $5.2$ & $1.2$ \\ 
\multicolumn{1}{c}{}                          & \multicolumn{1}{c}{1-layer} &  &  & $\mathbf{67.5}$ & $65.6$& $58.3$ & {---} & {---} \\ 
\multicolumn{1}{c}{}                          & \multicolumn{1}{c}{2-layer} &  &  & $65.8$ &    $\mathbf{70.2}$& $\mathbf{60.6}$ & {---} & {---} \\ \bottomrule 
\end{tabular}
\end{table*}

\subsection{Linear Separability for SupCon Loss}
\label{appendix:linear_probing_supcon}
\Figref{fig:linear_probing_supcon} compares the linear separability between E2E and layer-wise training; the SupCon loss is adopted for both training methods.
When the cross-entropy loss is used as in the main text, the linear separability of E2E training gradually improves, whereas the separability starts to saturate at the relatively early layers.
The difference from using the cross-entropy loss is that in layer-wise training, the linear separability at the final layer does not reach $100$\% for the training data, resulting in a smaller gap between the separability of training and test data. 
However, the separability for the training data does not improve even if more layers are stacked in the layer-wise training.

\begin{figure}[t]
    \centering
    \includegraphics[width=0.85\linewidth]{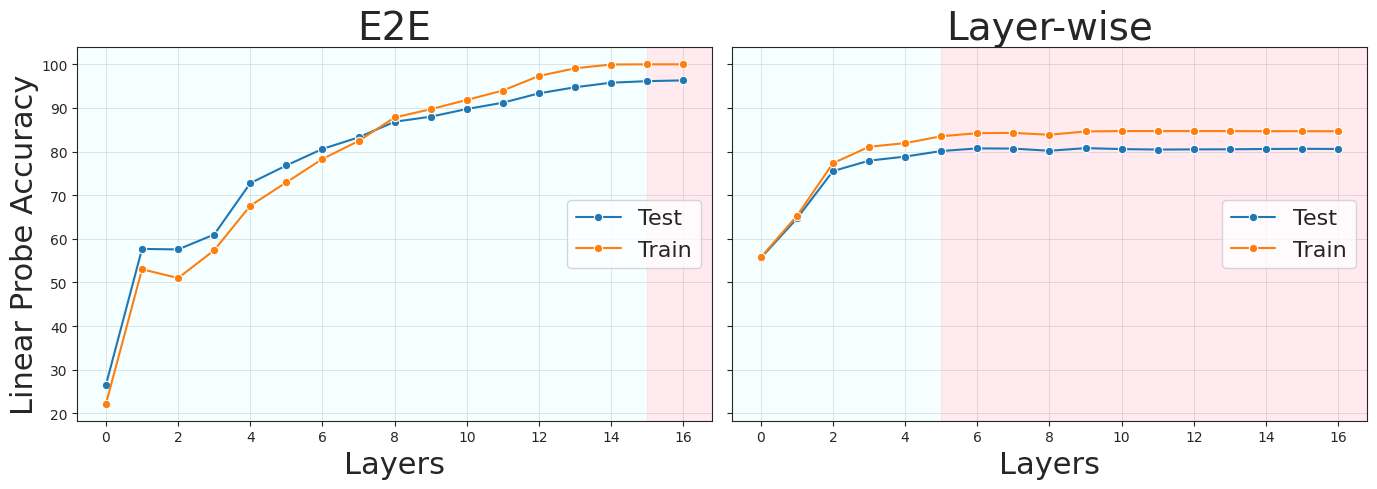}
    \caption{
    Linear probing accuracies of ResNet50 trained on CIFAR10 dataset.
    Linear separability for test data and training data are presented.
    \textbf{Left:} E2E training with SupCon loss. 
    \textbf{Right:} Layer-wise training with SupCon loss.
    The model was trained with an auxiliary linear classifier for each block.
    }
    \label{fig:linear_probing_supcon}
 \end{figure}

\subsection{Additional HSIC Dynamics Experiments}
\label{appendix:additional_hsic_dynamics}

\subsubsection{Other Loss Settings for E2E Training}
\label{sec:other_loss_e2e}
The purpose of this section is to observe HSIC dynamics using loss functions other than cross-entropy loss used in the paper and to discuss the effect of loss functions. 
Please note that it does not take into account the comparison between E2E training and layer-wise training discussed in other sections.
As a loss other than cross-entropy, we experimented with the supervised contrastive loss and the self-supervised loss used in SimCLR \citep{chen2020simple}, which maximizes the InfoNCE lower bound of mutual information among different views.
\Figref{fig:hsic_resnet18_contrastive} shows that both of these losses lead to a uniform reduction in $\operatorname{nHSIC}(X; Z)$ and $\operatorname{nHSIC}(Y; Z)$ without information bottleneck behavior.
The supervised contrastive loss obtained the representations with higher information content with the label $Y$ because of the supervision.
In contrast, SimCLR-style loss deals with the mutual information among the representations of different views, there is no interpretation provided for the mutual information with labels or inputs as in cross-entropy.

\begin{figure}[t]
    \centering
    \includegraphics[width=\linewidth]{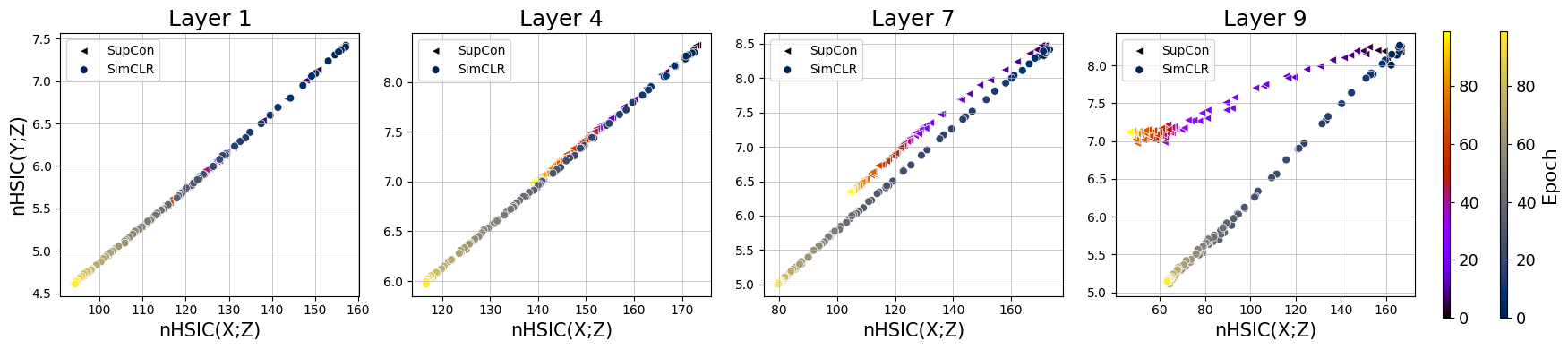}
    \caption{   
        HSIC plane dynamics of E2E training with SimCLR-style loss and supervised contrastive loss.
        ResNet18 model was trained on CIFAR10 dataset.
        The color gradation shows the progress of training.
        Circles with a navy-yellow-based colormap denote SimCLR-style loss, whereas left triangles with a purple-yellow-based colormap show supervised contrastive loss.
    }
    \label{fig:hsic_resnet18_contrastive}
\end{figure}

\subsubsection{E2E training with Forward Gradient}
\label{sec:forward_gradient}
We explored the forward gradient method \citep{baydin2022gradients} as a backpropagation-free method, other than layer-wise training.
Forward gradient updates weights without using backpropagation but optimizes a single loss function for the entire model; therefore, it is E2E training, unlike layer-wise training.
\Figref{fig:hsic_fg} compares backpropagation and forward gradient for the LeNet5 model, like \figref{fig:hsic_lenet}.
The forward gradient method showed slow convergence, and the test accuracy for CIFAR10 did not improve sufficiently.
The improvement in nHSIC values at each layer is also gradual, particularly the information amount with input $X$ improved slowly.
The important point here is that in the same setting of LeNet5 + CIFAR10 as in \figref{fig:hsic_lenet}, we did not observe early layer information degradation as seen in the layer-wise training model. 
While the forward gradient method is a backpropagation-free technique but an E2E training method, unlike layer-wise training, the interaction among the layers is possible through forward automatic differentiation.

\begin{figure}[t]
    \begin{subfigure}[b]{\textwidth}
    \centering
    \includegraphics[width=\textwidth]{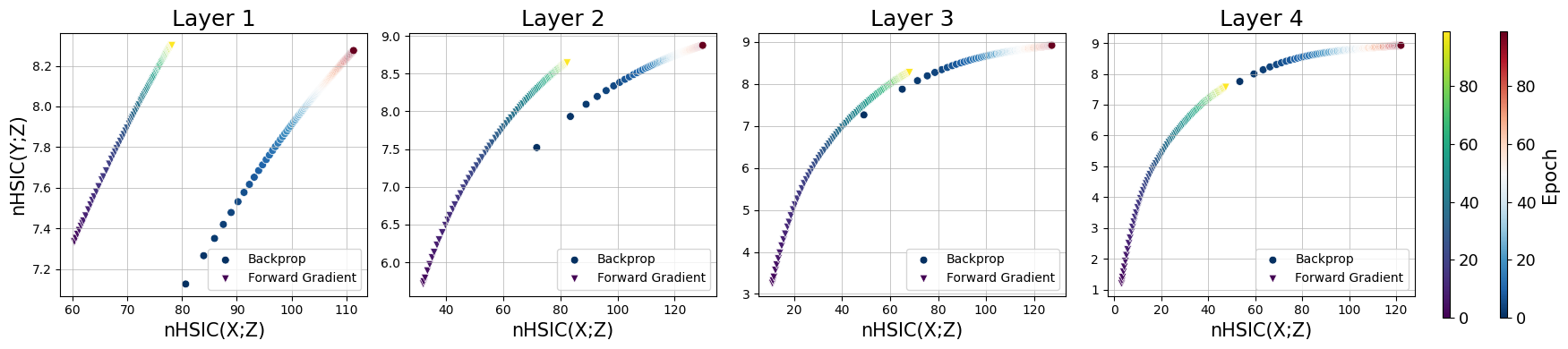}
    \caption{
        LeNet5 models trained on MNIST dataset. Test accuracy at $100$ epoch: $99.0$\% (E2E with backpropagation), $82.4$\% (E2E with forward gradient).
    }
    \label{fig:hsic_mnist_fg}
    \end{subfigure}
    \begin{subfigure}[b]{\textwidth}
    \centering
    \includegraphics[width=0.995\textwidth]{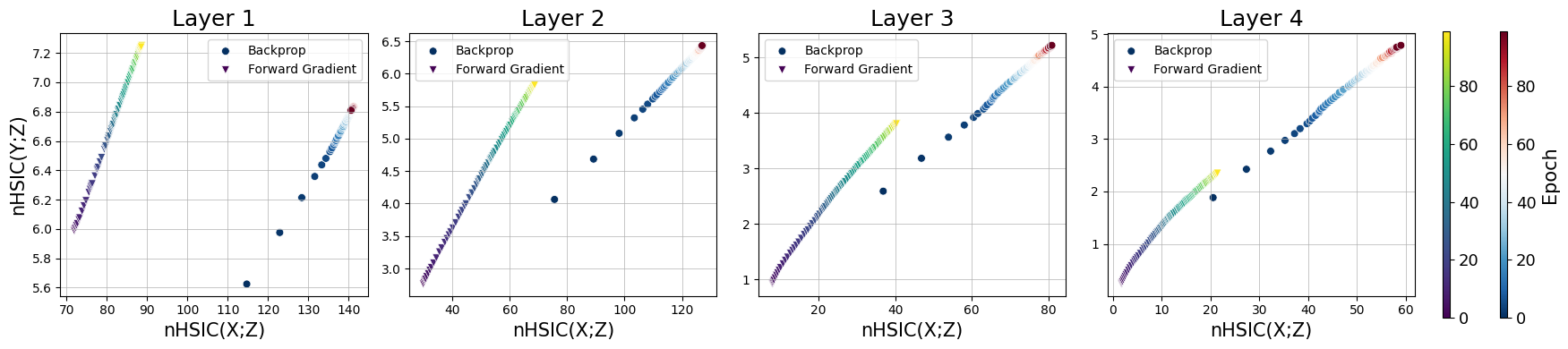}
    \caption{
        LeNet5 models trained on CIFAR10 dataset. Test accuracy at $100$ epoch: $62.2$\% (E2E with backpropagation), $26.8$\% (E2E with forward gradient).
    }
    \label{fig:hsic_cifar10_fg}
    \end{subfigure}
    \caption{
    HSIC plane dynamics of E2E training with backpropagation and forward gradient method for LeNet5 model. 
    The color gradation shows the progress of training.
    Inverted triangles with a blue-yellow-based colormap denote forward gradient, whereas circles with a red-based colormap show backpropagation.
    }
    \label{fig:hsic_fg}
 \end{figure}

\subsubsection{Sequential Layer-wise Training}
\label{sec:sequential_experiments}
\begin{figure}[t]
    \begin{subfigure}[b]{\textwidth}
    \centering
    \includegraphics[width=\linewidth]{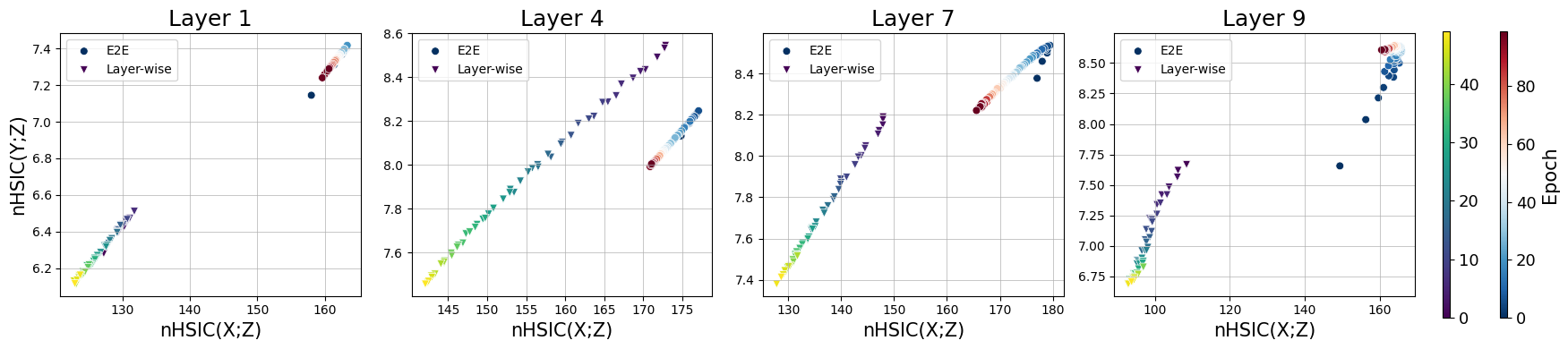}
    \caption{
        Results for sequential layer-wise training and E2E training.
        Inverted triangles with a blue-yellow-based colormap denote sequential layer-wise training, whereas circles with a red-based colormap show E2E training.
    }
    \label{fig:hsic_resnet18_cifar10_sequentially_sub}
    \end{subfigure}
    \begin{subfigure}[b]{\textwidth}
    \centering
    \includegraphics[width=\textwidth]{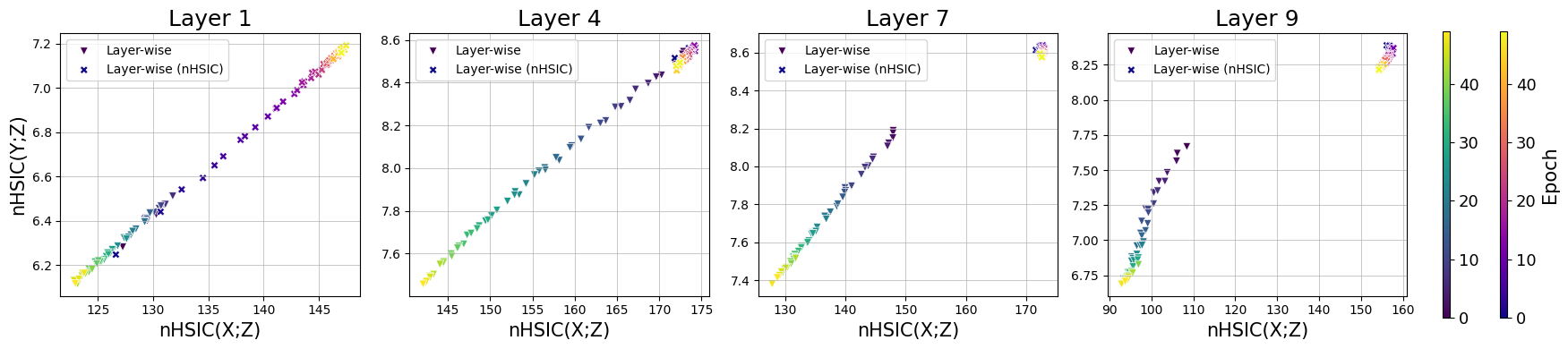}
    \caption{
        Results for sequential layer-wise training with HSIC augmenting term.
        Inverted triangles with a blue-yellow-based colormap denote sequential layer-wise training, whereas cross marks with a red-yellow-based colormap show training with HSIC augmenting term ($\lambda=0.005$).
        Test accuracy at $50$ epoch: $85.6$\% (original), $85.1$\% (nHSIC).
    }
    \label{fig:hsic_resnet18_cifar10_sequentially_reg}
    \end{subfigure}
    \caption{
        HSIC plane dynamics of sequential layer-wise training. 
        ResNet18 is trained on the CIFAR10 dataset for $50$ epochs per layer, resulting in $500$ epochs to train the entire model because ResNet18 consists of $10$ layer-wise trainable modules.
        The color gradation shows the progress of training.
    }
    \label{fig:hsic_resnet18_cifar10_sequentially}
 \end{figure}

In this section, we experimented with the sequential layer-wise training mentioned in \secref{sec:sequential_simultaneous}.
The sequential layer-wise training exhibits lower test accuracy than the simultaneous setting as mentioned in the previous section.
When ResNet18 is trained on the CIFAR10 dataset using CE loss, table \ref{table:layer-wise_cifar10} shows that test accuracies of the simultaneous setting are $89.3$ and $89.9$ for linear and MLP auxiliary classifiers, respectively.
In contrast, these test accuracies dropped to $86.6$ and $86.2$ in the case of sequential training.
Here each layer is trained for $100$ epochs, meaning that the entire model is trained for $1000$ epochs given that ResNet18 has $10$ layer-wise trainable modules.

On the other hand, it is the same between sequential and simultaneous layer-wise training that nHSIC behavior is uniform among the layers, and there is no information-bottleneck behavior in the final layer (see \figref{fig:hsic_resnet18_cifar10_sequentially}).
In the case of nHSIC regularization, the improvement in information content between $X$ and $Z$, as well as $Y$ and $Z$, because of the nHSIC augmenting term was more significant compared to simultaneous training in \secref{sec:control_hsic}. 
This can be attributed to the fact that the sequential layer-wise training exactly corresponds to the greedy layer-wise optimization, which was discussed in \secref{sec:example_greedy}, and the nHSIC regularization can prevent the severe corruption of the input information.
However, there was no improvement in accuracy, which was the same as the simultaneous layer-wise training.

\subsubsection{Model Settings}
\label{sec:other_model}

\paragraph{Toy-MLP.}
We discussed the CNN model LeNet5 in \secref{sec:hsic_lenet5}, while here, we experimented with MLP.
\Figref{fig:hsic_mlp} shows the HSIC dynamics for five-layer MLP with the hidden size of $512$.
In the case of MNIST, there is no difference in test accuracy between E2E training and layer-wise training, similar to LeNet5.
However, E2E training learns the inputs and labels slowly, which could be attributed to the distance of the backpropagation path from the loss evaluation. 
In the case of CIFAR10, layer-wise training exhibited slower improvement in nHSIC values as layers progressed like the LeNet5 case; however, there is no information compression in the initial layer observed in LeNet5.
In MLP models, please note that training the first layer in a layer-wise manner with a linear auxiliary classifier is equivalent to E2E training of a two-layer MLP model.

\begin{figure}[t]
    \begin{subfigure}[b]{\textwidth}
    \centering
    \includegraphics[width=\textwidth]{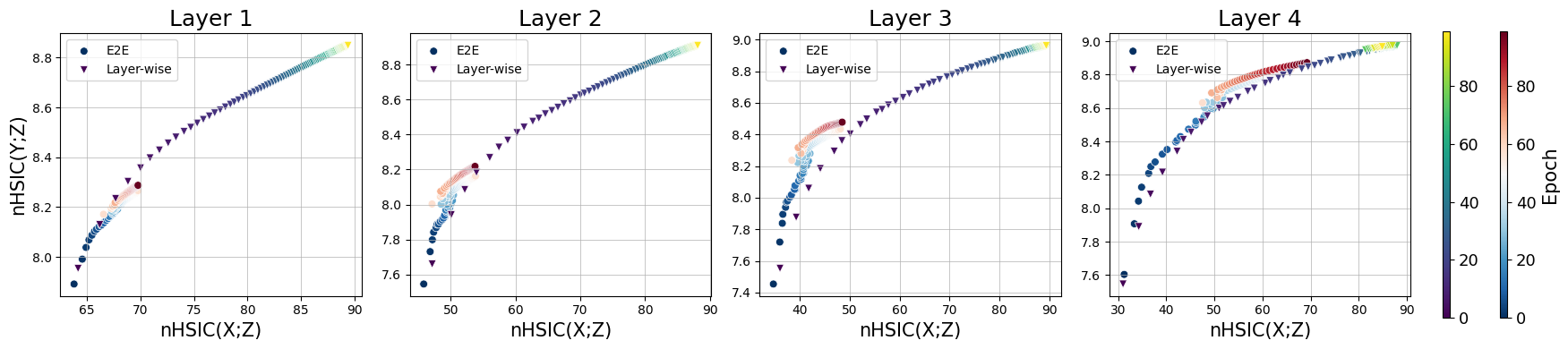}
    \caption{
        Toy-MLP models trained on MNIST dataset. Test accuracy at $100$ epoch: $98.2$\% (E2E), $98.1$\% (Layer-wise).
    }
    \label{fig:hsic_mlp_mnist}
    \end{subfigure}
    \begin{subfigure}[b]{\textwidth}
    \centering
    \includegraphics[width=0.995\textwidth]{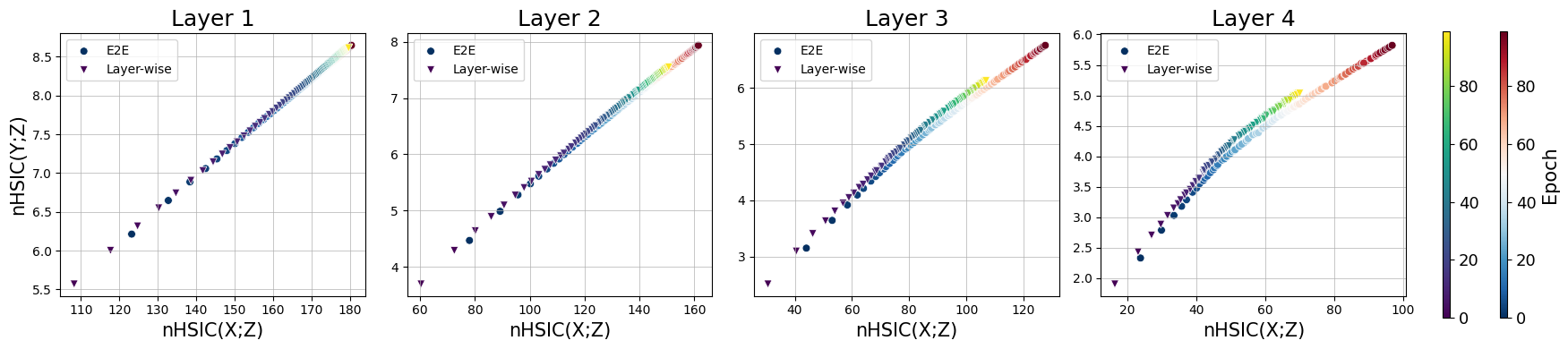}
    \caption{
        Toy-MLP models trained on CIFAR10 dataset. Test accuracy at $100$ epoch: $58.7$\% (E2E), $57.8$\% (Layer-wise).
    }
    \label{fig:hsic_mlp_cifar10}
    \end{subfigure}
    \caption{
    HSIC plane dynamics of Toy-MLP model.
    The color gradation shows the progress of training, i.e., the number of epochs.
    Inverted triangles with a blue-yellow-based colormap denote layer-wise training, whereas circles with a red-based colormap show E2E training.
    }
    \label{fig:hsic_mlp}
 \end{figure}

\paragraph{Vision Transformer.}
\Figref{fig:hsic_vit_cifar10} shows the HSIC dynamics for vision transformer (ViT) \citep{dosovitskiy2020image} with six transformer blocks, the patch size of $4$, and the hidden dimension of $512$.
The local block-wise training results in information compression and leads to worse test accuracy, as with the results presented in the main text.
Conversely, E2E training shows no information compression in both middle and output representations, meaning no IB behavior.
The test accuracy of trained ViT is not sufficiently high for our problem setting, and further validation in larger-scale settings will be necessary to conclude.

\begin{figure}[t]
    \centering
    \includegraphics[width=0.75\linewidth]{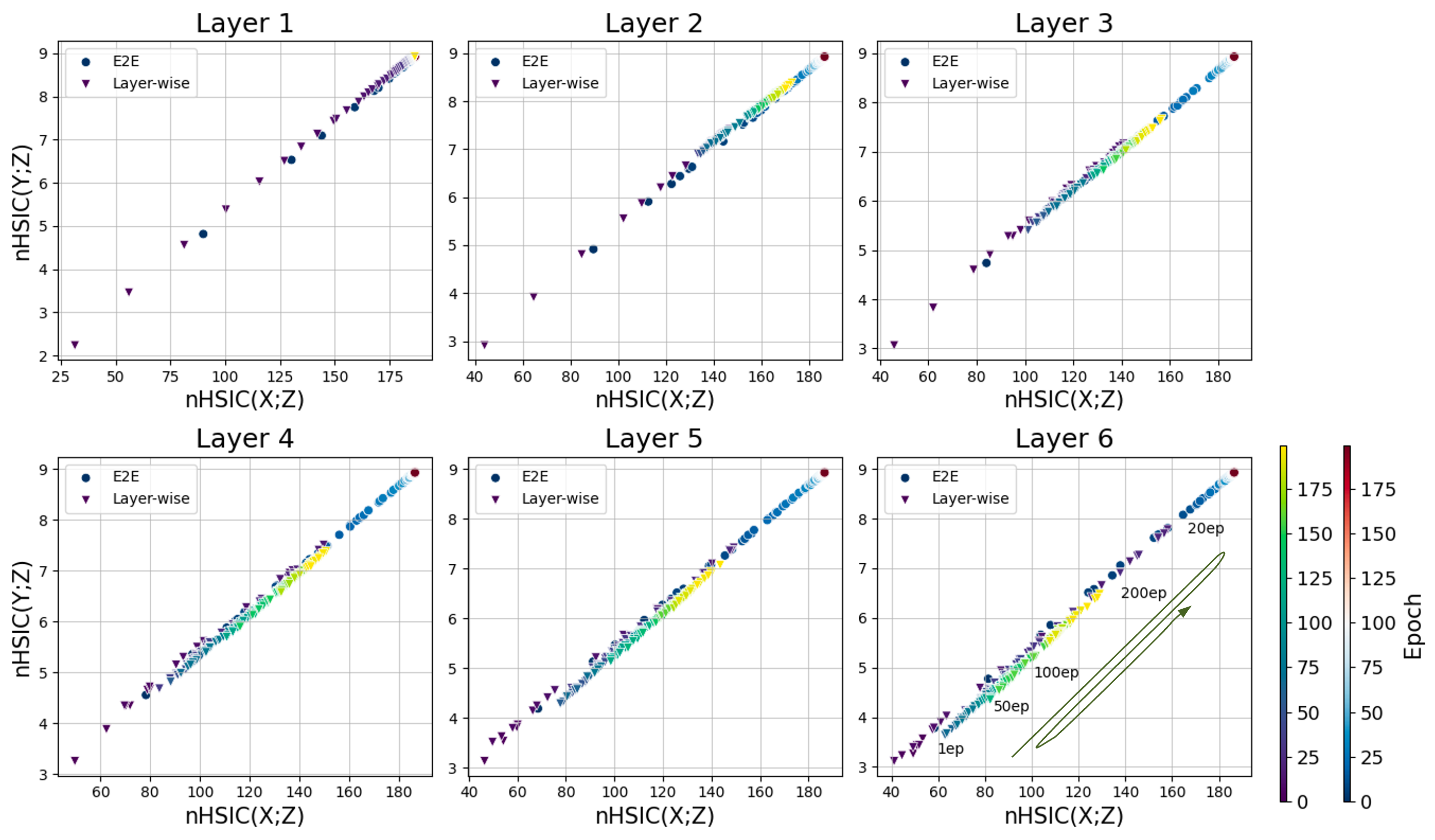}
    \caption{
        HSIC plane dynamics of ViT model trained on CIFAR10 dataset.
        The color gradation shows the progress of training.
        For complex behavior of layer-wise training, the number of epochs and dynamics are annotated in the output layer.
        Inverted triangles with a blue-yellow-based colormap denote layer-wise training, whereas circles with a red-based colormap show E2E training.
    }
    \label{fig:hsic_vit_cifar10}
\end{figure}

\paragraph{ResNet18.}
The HSIC dynamics for all layers of ResNet18 are illustrated in \figref{fig:hsic_resnet18_cifar10_full}.
As observed in \figref{fig:hsic_resnet18_cifar10}, while layer-wise training showed compression in all layers, the E2E training compressed the intermediate layers while maintaining $\operatorname{nHSIC}(Y, Z)$ at the layer in the output side.
This HSIC preservation was seen in the last two layers, which corresponds to the last module group when classifying components of ResNet18 by the size of output channels. 

\begin{figure}[t]
    \centering
    \includegraphics[width=0.9\linewidth]{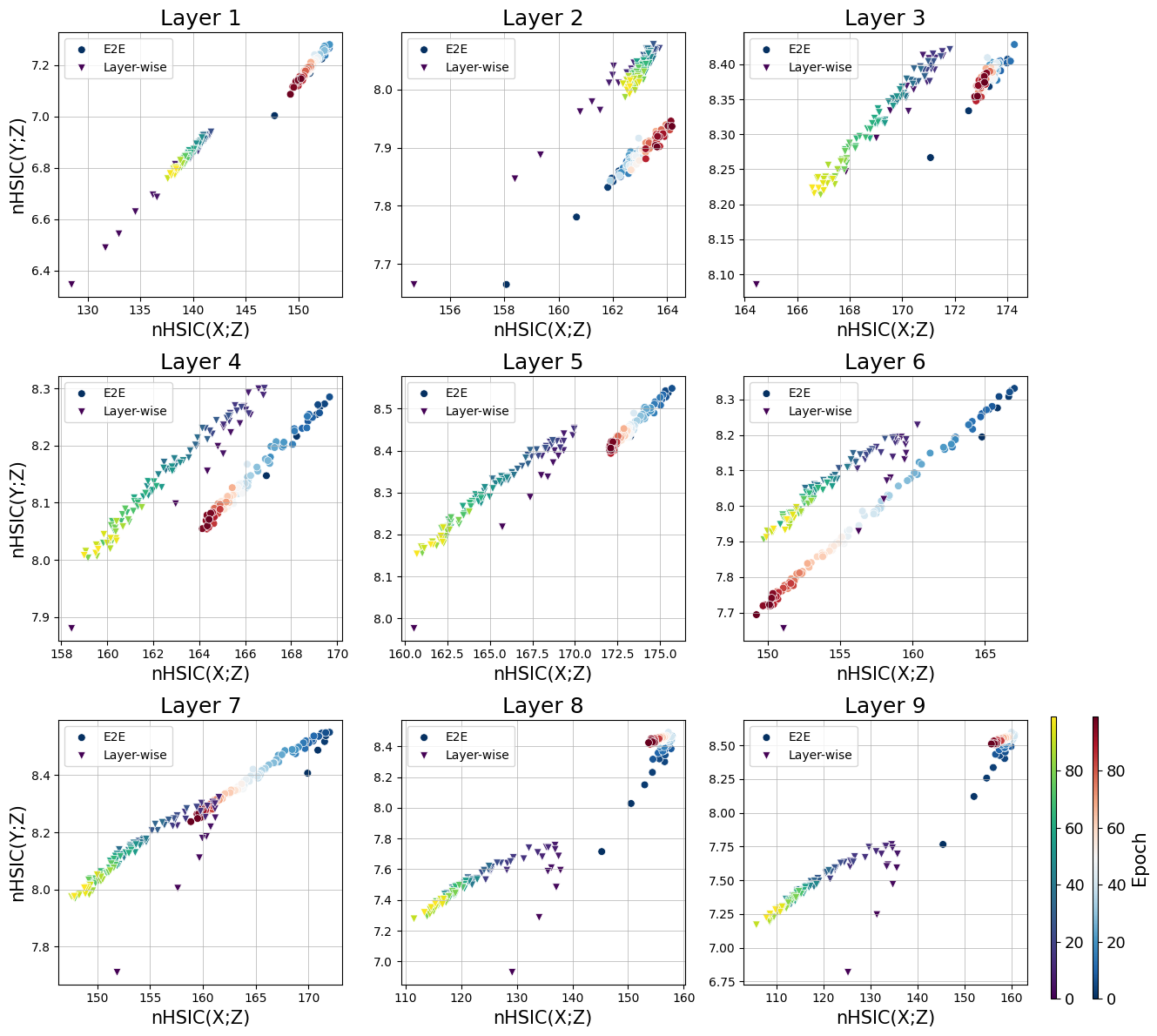}
    \caption{
        HSIC plane dynamics of ResNet18 model trained on CIFAR10 dataset.
        The color gradation shows the progress of training.
        Inverted triangles with a blue-yellow-based colormap denote layer-wise training, whereas circles with a red-based colormap show E2E training.
    }
    \label{fig:hsic_resnet18_cifar10_full}
\end{figure}

\paragraph{ResNet50.}
The HSIC dynamics for all layers of ResNet50 are presented in \figref{fig:hsic_resnet50_cifar10_full}.
ResNet50 has $17$ intermediate representations, and the layer-wise training demonstrates compression across all layers.
In the E2E training, the HSIC reduction in the middle layers is less observed than in the ResNet18 setting, and it primarily occurs in the early layers. 
Additionally, the final block not only preserves $\operatorname{nHSIC}(Y, Z)$ but also increases it while reducing $\operatorname{nHSIC}(X, Z)$, showing a behavior closer to the IB hypothesis.

\begin{figure}[t]
    \centering
    \includegraphics[width=\linewidth]{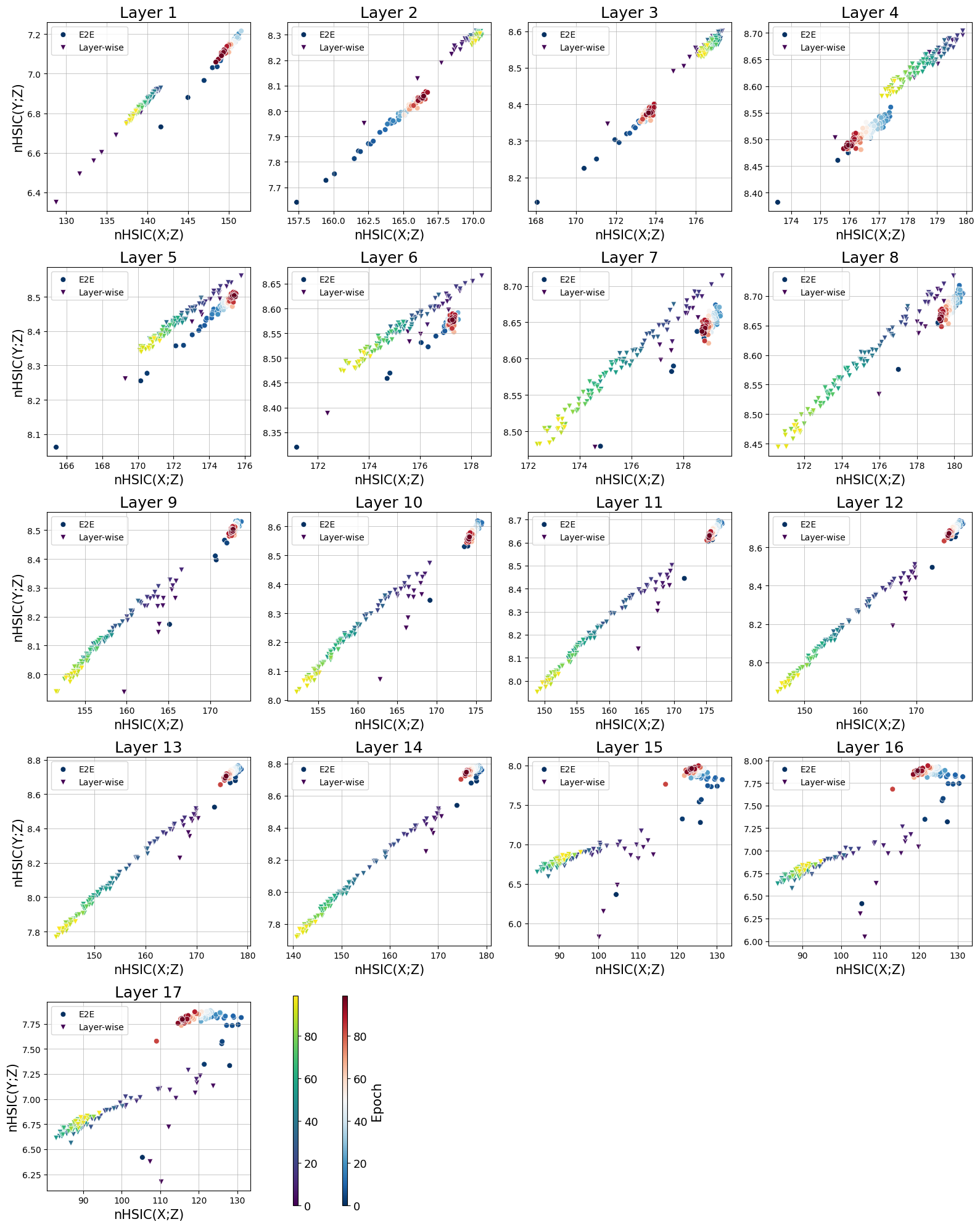}
    \caption{
        HSIC plane dynamics of ResNet50 model trained on CIFAR10 dataset.
        The color gradation shows the progress of training.
        Inverted triangles with a blue-yellow-based colormap denote layer-wise training, whereas circles with a red-based colormap show E2E training.
    }
    \label{fig:hsic_resnet50_cifar10_full}
\end{figure}

\paragraph{VGG11.}
The results for VGG11 are shown in the \figref{fig:hsic_vgg11bn_cifar10_full}.
In this setting, the layer-wise training precisely trains a single convolutional layer, whereas a block with one skip-connection in the ResNet setting.
The overall behaviors are similar to those of ResNet.
The E2E training demonstrates the compression in the intermediate layers while exhibiting behavior suggested by IB hypothesis in the output layers.
The layer-wise training shows little compression of HSIC values in the output layers, differing from ResNet, but generally presents the same low nHSIC values as in the ResNet case. 

\begin{figure}[t]
    \centering
    \includegraphics[width=0.8\linewidth]{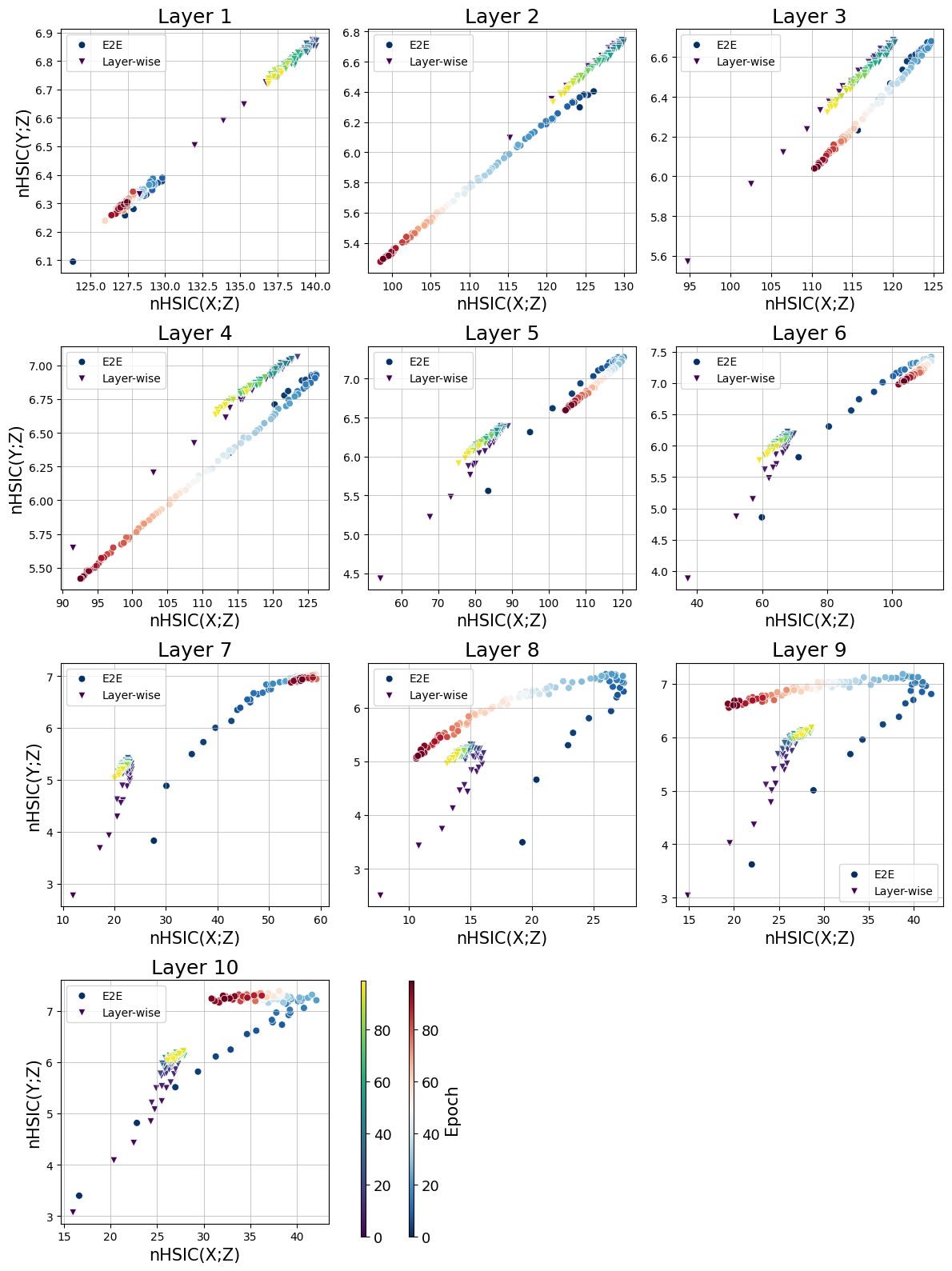}
    \caption{
        HSIC plane dynamics of VGG11 model trained on CIFAR10 dataset.
        The color gradation shows the progress of training.
        Inverted triangles with a blue-yellow-based colormap denote layer-wise training, whereas circles with a red-based colormap show E2E training.
    }
    \label{fig:hsic_vgg11bn_cifar10_full}
\end{figure}

\paragraph{VGG11 without batch normalization.}
We made an intriguing observation regarding the compression in the middle layers for E2E training.
In \figref{fig:hsic_vgg11_cifar10_full}, while layer-wise training includes batch normalization layers and is the same setting as before, for the E2E trained model, we excluded the batch normalization layers.
This configuration without batch normalization is often used in the VGG network. 
Interestingly, we observed no compression in the nHSIC value for the intermediate and output layers. 
The final test accuracy was higher than that of layer-wise training but lower than when batch normalization was included.
This suggests the potential role of batch normalization in compressing intermediate representations.
There is a possibility that the E2E training could cooperatively optimize the scales of batch normalization layers across layers.
Further exploration is a future work.

\begin{figure}[t]
    \centering
    \includegraphics[width=0.8\linewidth]{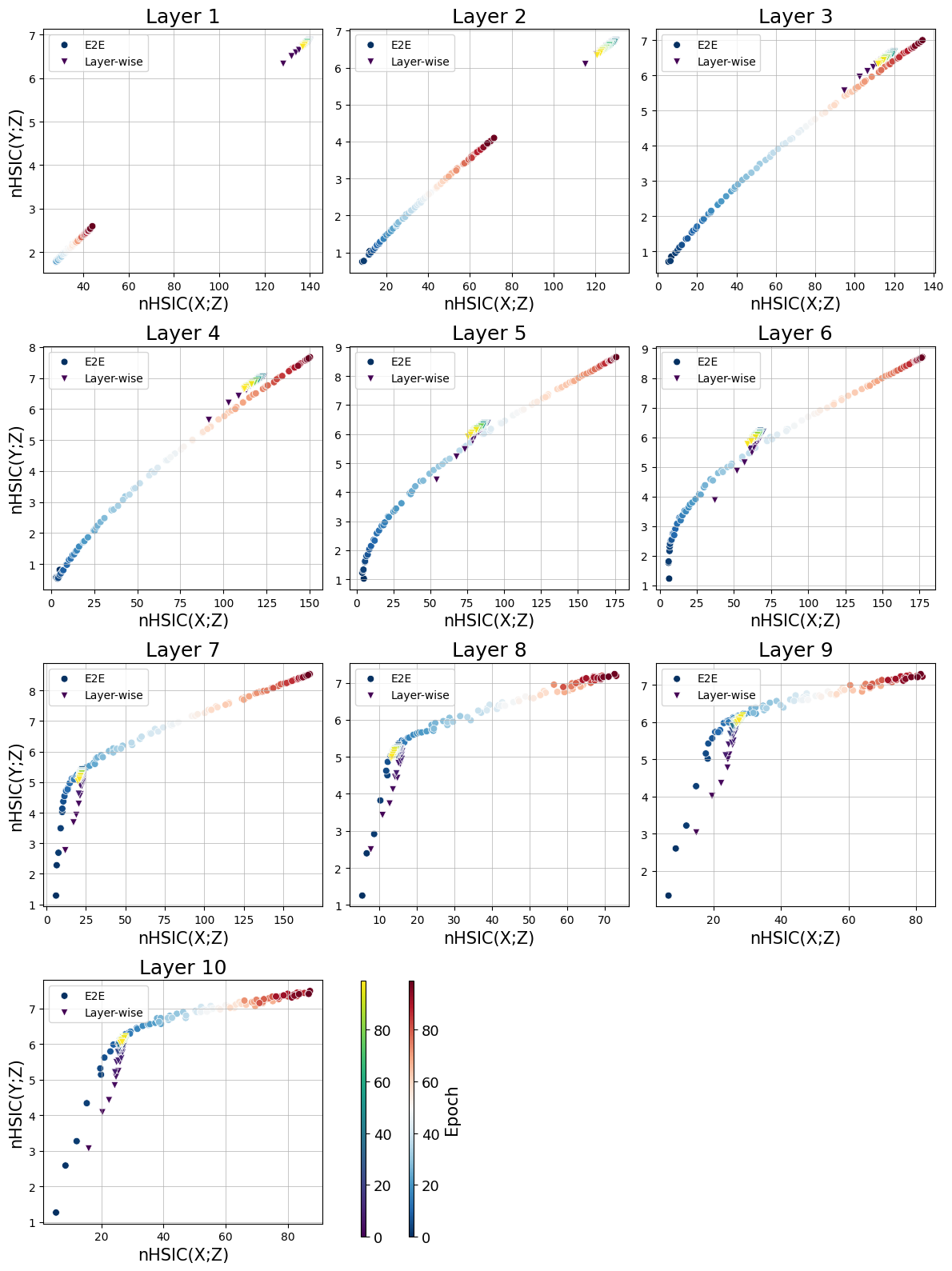}
    \caption{
        HSIC plane dynamics of VGG11 model trained on CIFAR10 dataset.
        In this figure, the model trained in the E2E manner does not contain the batch normalization layers.
        The color gradation shows the progress of training.
        Inverted triangles with a blue-yellow-based colormap denote layer-wise training, whereas circles with a red-based colormap show E2E training.
    }
    \label{fig:hsic_vgg11_cifar10_full}
\end{figure}

\paragraph{ResNet18 with HSIC Augmenting Term.}
The \figref{fig:hsic_resnet18_cifar10_nhsic} shows the results of layer-wise training with an additional term aimed at increasing nHSIC values.
While this modification can prevent compression in HSIC dynamics, the obtained test accuracy deteriorated.
The smaller $\lambda$ values result in the nHSIC behavior more similar to the original layer-wise training.
Please note that we used a batch size of $256$ here instead of the batch size of $128$ in the original training due to the effect of nHSIC estimation.
As discussed in the main text, the behavior of HSIC dynamics fundamentally remains similar to that of layer-wise training, revealing a gap from the E2E training in terms of both the HSIC dynamics and the final performance.

\begin{figure}[t]
    \begin{subfigure}[b]{\textwidth}
    \centering
    \includegraphics[width=\textwidth]{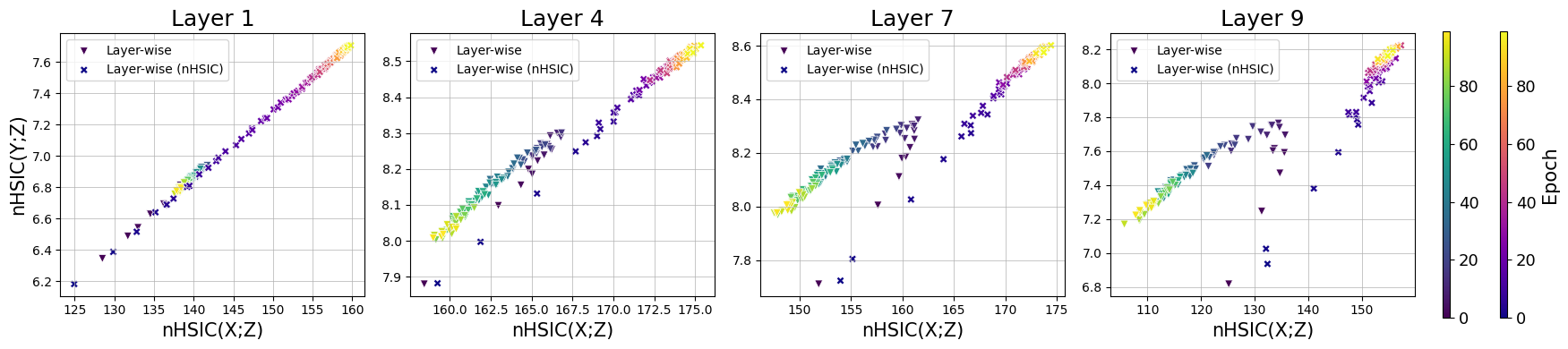}
    \caption{$\lambda=0.05$. Test accuracy at $100$ epoch: $86.6\%$ (original), $83.1\%$ (nHSIC)}
    \label{fig:hsic_resnet18_cifar10_005}
    \end{subfigure}
    
    \begin{subfigure}[b]{\textwidth}
    \centering
    \includegraphics[width=\textwidth]{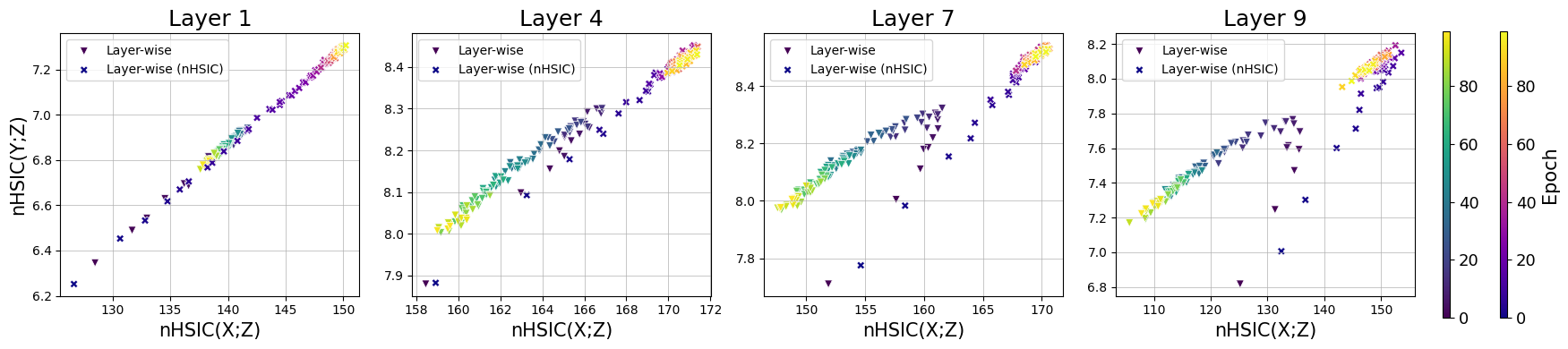}
    \caption{$\lambda=0.01$. Test accuracy at $100$ epoch: $86.6\%$ (original), $85.3\%$ (nHSIC)}
    \label{fig:hsic_resnet18_cifar10_001}
    \end{subfigure}
    
    \begin{subfigure}[b]{\textwidth}
    \centering
    \includegraphics[width=\textwidth]{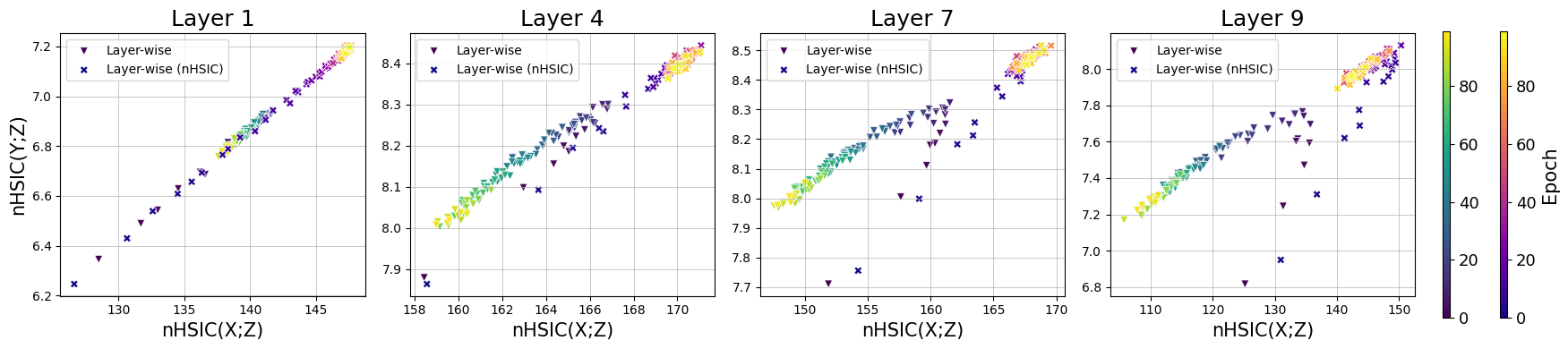}
    \caption{$\lambda=0.005$. Test accuracy at $100$ epoch: $86.6\%$ (original), $86.2\%$ (nHSIC)}
    \label{fig:hsic_resnet18_cifar10_0005}
    \end{subfigure}
    
    \begin{subfigure}[b]{\textwidth}
    \centering
    \includegraphics[width=\textwidth]{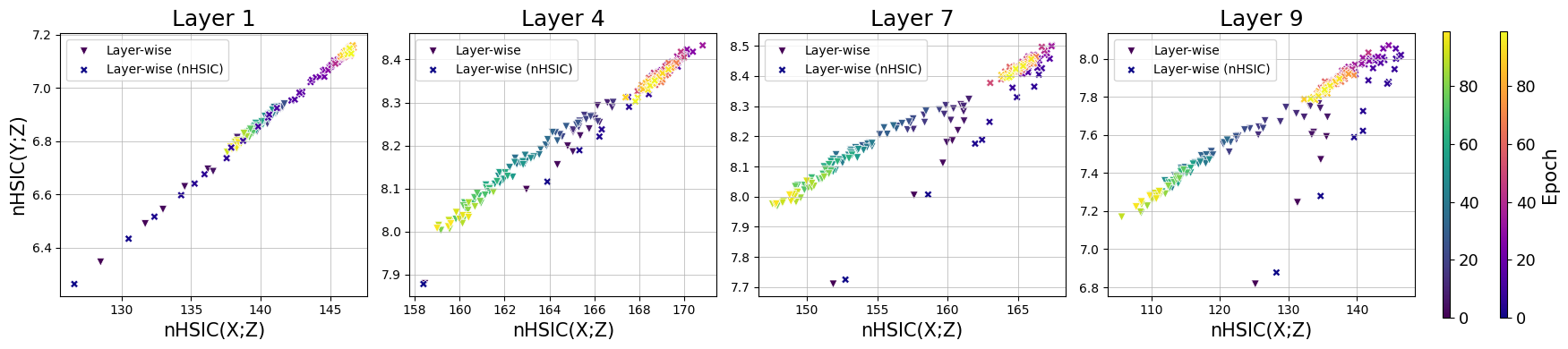}
    \caption{$\lambda=0.001$. Test accuracy at $100$ epoch: $86.6\%$ (original), $86.1\%$ (nHSIC)}
    \label{fig:hsic_resnet18_cifar10_0001}
    \end{subfigure}
    
    \caption{
    HSIC plane dynamics for layer-wise training with HSIC augmenting term for different $\lambda$ values.
    ResNet18 is trained on CIFAR10.
    The results of four layers are summarized in this figure.
    The color gradation shows the progress of training.
    Inverted triangles with a blue-yellow-based colormap denote layer-wise training in the original setting as a baseline, whereas the cross marks with a red-yellow-based colormap show training with HSIC augmenting term.
    }
    \label{fig:hsic_resnet18_cifar10_nhsic}
 \end{figure}

\end{document}